\documentclass[11pt]{article}
\usepackage[margin=1in]{geometry}
\usepackage{amsfonts}
\usepackage{amsmath,amsthm,amssymb}
\usepackage{latexsym}
\usepackage{epic}
\usepackage{epsfig}
\usepackage[colorlinks=true, allcolors=magenta]{hyperref}
\usepackage{verbatim}
\usepackage[justification=centering]{caption}
\usepackage{enumitem}
\usepackage[dvipsnames]{xcolor}
\allowdisplaybreaks[1]
\usepackage[style=alphabetic,natbib=true]{biblatex}
\usepackage{algorithm}
\usepackage[most]{tcolorbox}
\tcbset{rounded corners}
\usepackage{setspace}
\usepackage[noend]{algpseudocode}
\algrenewcommand\algorithmicrequire{\textbf{Input:}}
\algrenewcommand\algorithmicensure{\textbf{Output:}}

\usepackage[normalem]{ulem}

\usepackage{ifthen}
\newboolean{coltsubmission}
\setboolean{coltsubmission}{false}

\newtheorem{theorem}{Theorem}[section]
\newtheorem{corollary}[theorem]{Corollary}
\newtheorem{lemma}[theorem]{Lemma}

\newtheorem{definition}[theorem]{Definition}
\newtheorem{remark}[theorem]{Remark}

\newtheorem{fact}[theorem]{Fact}

\newtheorem{open}[theorem]{Open Question}

\newenvironment{alg}{\begin{algorithm}\begin{onehalfspace}\begin{algorithmic}[1]}{\end{algorithmic}\end{onehalfspace}\end{algorithm}}

\newcommand{\floor}[1]{\lfloor {#1} \rfloor}

\renewcommand{\hat}{\widehat}
\renewcommand{\tilde}{\widetilde}

\def\min{\qopname\relax n{min}}
\def\max{\qopname\relax n{max}}

\newcommand{\expect}[1]{%
  \mathop{\qopname\relax n{\mathbb{E}}}\!\left[#1\right]
}
\newcommand{\expected}[1]{%
  \qopname\relax n{\mathbb{E}}_{#1}
}
\newcommand{\expects}[2]{%
  \qopname\relax n{\mathbb{E}}_{#1}\!\left[#2\right]
}

\newcommand{\prob}[1]{%
  \mathop{\qopname\relax n{\mathbb{P}}}\!\left[#1\right]
}

\newcommand{\probs}[2]{%
  \qopname\relax n{\mathbb{P}}_{#1}\!\left[#2\right]
}

\newcommand{\NN}{\mathbb{N}}

\newcommand{\HH}{\mathbb{H}}
\newcommand{\DD}{\mathbb{D}}

\def\A{\mathcal{A}}
\def\B{\mathcal{B}}
\def\C{\mathcal{C}}
\def\D{\mathcal{D}}

\def\H{\mathcal{H}}

\def\Q{\mathcal{Q}}

\def\S{\mathcal{S}}

\def\U{\mathcal{U}}

\def\X{\mathcal{X}}

\def\eps{\epsilon}
\def\sse{\subseteq}

\newcommand{\eat}[1]{}

\newenvironment{lp*}{\begin{equation*}  \begin{array}{lll}}{\end{array}\end{equation*}}

\usepackage[suppress]{color-edits}
\addauthor[Alireza]{az}{OliveGreen}
\addauthor[Shaddin]{sd}{blue}
\newcommand{\unsddelete}[1]{#1}
\newcommand{\unsdreplace}[2]{#1}

\newcommand{\maj}{\mathsf{Maj}}
\newcommand{\erm}{\mathsf{ERM}}
\newcommand{\oig}{\mathsf{OIG}}

\newcommand{\simnr}{\sim_\mathsf{nr}}

\newcommand{\dom}{\mathsf{dom}}
\newcommand{\trans}{\mathsf{Trans}}
\newcommand{\dtv}{\mathsf{d_{TV}}}
\newcommand{\supp}{\mathsf{supp}}

\newcommand{\tunif}{\mathsf{MTestUnif}}
\newcommand{\tunifstandard}{\mathsf{TestUnif}}
\newcommand{\mtest}{\mathsf{Test}}

\DeclareMathAlphabet{\bbold}{U}{bbold}{m}{n}
\newcommand{\id}{\ensuremath{\bbold{1}}}

\newcounter{alg}
\renewcommand{\thealg}{\arabic{alg}}

\newtcolorbox[auto counter]{algoboxinner}[1][]{
  colback=white,
  colframe=black,
  rounded corners,
  boxrule=0.8pt,
  arc=8pt,
  left=6pt,right=6pt,top=6pt,bottom=6pt,
  before upper={
    \refstepcounter{alg}%
    \textbf{Algorithm~\thealg.}~\textbf{#1}
  }
}

\newenvironment{algobox}[1][tpb]{%
  \begin{figure}[#1]
  \begin{algoboxinner}
}{%
  \end{algoboxinner}
  \end{figure}
}

\title{Relatively Smart: \\ A New Approach for Instance-Optimal Learning}

\author{
 Shaddin Dughmi\thanks{shaddin@usc.edu. Supported by NSF Grant CCF-2432219. Part of this work was done while the author was on sabbatical as the Carter and Tania Neild visiting professor at Northwestern University, as well as a visiting professor in the Data Science Institute at the University of Chicago.}\\
 University of Southern California
 \and 
 Alireza F. Pour\thanks{alireza.fathollahpour@uwaterloo.ca. Supported by a David Cheriton Scholarship and a Vector Institute Research Grant. }\\
 University of Waterloo}

\date{}

\begin{document}

\maketitle
\begin{abstract}
    We revisit the framework of \emph{Smart PAC learning}, which seeks supervised learners which compete with semi-supervised learners that are provided full knowledge of the \emph{marginal} distribution on unlabeled data. Prior work has shown that such marginal-by-marginal guarantees are possible for ``most'' marginals, with respect to an arbitrary fixed and known measure, but not more generally. We discover that this failure can be attributed to an ``indistinguishability'' phenomenon: There are marginals which cannot be statistically distinguished from other marginals that require different learning approaches.        
    In such settings, semi-supervised learning cannot certify its guarantees from unlabeled data, rendering them arguably non-actionable.

    We propose \emph{relatively smart learning}, a new framework which  demands that a supervised learner compete only with the best ``certifiable'' semi-supervised guarantee. We show that such modest relaxation suffices to bypass the impossibility results from prior work. In the distribution-free setting, we show that the One-Inclusion Graph learner is relatively smart up to squaring the sample complexity,  and show that no supervised learning algorithm can do better. For distribution-family settings, we show that relatively smart learning can be impossible or can require idiosyncratic learning approaches, and its difficulty can be non-monotone in the inclusion order on distribution families. 
    


\end{abstract}



\section{Introduction}
\label{sec:intro}
A substantial portion of research in learning theory proceeds within the worst-case  tradition characteristic of theoretical computer science more broadly. This includes\unsddelete{, most prominently,} the PAC model \citep{valiant_theory_1984} and its numerous extensions, which typically impose a “flat” inductive bias on the hypothesis class and/or data-generating distribution, then leave the choice of instance to an adversary. \unsddelete{Performance guarantees are consequently evaluated in the worst case over all hypotheses and distributions consistent with these assumptions.} 

It has been argued that this perspective is somewhat removed from practical machine learning, which tends to be more adaptive to the specifics of its deployment domain. Techniques such as  unsupervised pretraining, hyperparameter optimization, post-hoc model refinement, and incorporation of domain expertise are often used to tailor the learner to the data distribution. Partly as a result of these adaptation mechanisms, the \unsddelete{observed} performance of real-world machine learning systems is typically far from worst-case \citep{chapelle_semi-supervised_2006, erhan_why_2010,guo2017calibration,zhang2017understanding,frankle2018the,devlin2019bert,chen2020simple,yang2020hyperparameter}. 

In this paper, we investigate one facet of this divide: how a learner can be tailored to the unlabeled data distribution, \unsdreplace{also known as }{a.k.a.~}the \emph{marginal distribution}. We focus on statistical aspects of this question, as captured by the sample complexity (or equivalently, the error~rate)\unsddelete{ of learning}. We restrict attention to what is arguably the most historically instructive learning setting: realizable binary classification. 

\subsection*{Distribution-Fixed and Semi-Supervised Learning}
The original PAC learning model considers \emph{fully supervised learners} which only receive labeled samples from the distribution. A particularly powerful extension of this paradigm additionally provides the learner with full knowledge of the marginal distribution of unlabeled data.  This \emph{distribution-fixed} model of learning was studied by  \citet{benedek_learnability_1991}, where they characterize learnability qualitatively for each marginal in terms of the existence of finite covers of the hypothesis class, at every error scale, with respect to the disagreement metric induced by the marginal. They also derive (not quite tight) upper and lower bounds on the sample complexity in terms of these covers. Formal evidence that knowing the marginal can enable learning was subsequently provided by~\citet{dudley_metric_1994}. 

As pointed out by \citet{ben2008does}, distribution-fixed learning can be viewed as a utopian idealization of \emph{semi-supervised learning}---the paradigm which augments labeled data with more plentiful unlabeled data. There is a long line of work which explores more realistic formulations of semi-supervised learning, seeking to understand whether, when, and how much finite unlabeled data helps in learning   \citep[e.g.][]{ben2008does,balcan2010discriminative,darnstadt_unlabeled_2013,globerson_effective_2017, gopfert_when_2019,pmlr-v97-golovnev19a,pukdee2023learning}. There are many shades of these questions explored by this body of work to which we cannot do justice, but for our purposes the gist is as follows: Unlabeled data does not improve minimax error rates in distribution-free settings,\footnote{Recall that the lower-bound in the fundamental theorem of PAC learning is robust to knowledge of the marginal.
}
 but can lead to drastic marginal-by-marginal improvements, as well as minimax improvements for some distribution-family settings and under ``compatibility'' assumptions between marginals and hypotheses. 

\subsection*{Smart Learning}
Instead of further exploring the power and limits of semi-supervised learning, we \unsddelete{take a different tack in this paper. We} build on the closely-related framework of \emph{Smart Learning} introduced by \citet{darnstadt2011smart_journal}. Roughly speaking, a smart learner is a fully-supervised learner  which \emph{does about as well as if it knew the marginal distribution already, even though it doesn't.} This is an \unsddelete{(approximate)} instance-optimality guarantee with respect to marginals: a smart learner approximately matches the optimal distribution-fixed error rate (or equivalently, sample complexity) for every marginal simultaneously. 

While ambitious, the goal of smart learning seems plausible at first glance: By eschewing minimax guarantees across marginals, the learner is permitted to perform poorly for ``hard'' marginals such as those appearing in the fundamental theorem of PAC learning, while paying special attention to those marginals most amenable to semi-supervised learning techniques.
 Indeed, \citet{darnstadt2011smart_journal} show a compelling, albeit qualified, positive result via an innovative application of the minimax theorem for zero-sum games: Smart learning is possible in general distribution-free settings for ``most'' marginals, where ``most'' is quantified with respect a prior distribution on marginals that is given in advance. Unfortunately, any hope of removing this qualification was dashed by subsequent work of \citet{darnstadt_unlabeled_2013}, strengthening an earlier result of \citet{dudley_metric_1994}:  They  exhibit a hypothesis class and family of marginals with distribution-fixed  error rates rapidly and uniformly tending to zero, whereas for any fully-supervised learner---not equipped with foreknowledge of the marginal---and any finite sample size there is an instance where the learner performs essentially no better than random guessing. 

\subsection*{From Smart to Relatively Smart Learning}
The \unsdreplace{results of \citep{darnstadt2011smart_journal,darnstadt_unlabeled_2013}}{aforementioned results} might appear to close the book on smart learning for general hypothesis classes and sufficiently rich distribution families. This, however, is where our work comes in. Our contribution emanates from the following realization: Smart learning fails when a distribution-fixed learner $\A_\D$ catered to a marginal $\D$ cannot use its unlabeled data to distinguish $\D$ from other marginals $\D'$ where $\A_\D$ performs much worse. In other words, it is impossible to detect misspecifications of the marginal that are consequential to learning by merely inspecting the unlabeled data, rendering error guarantees impossible to certify  prior to procuring labels.  Our results will imply that this is the only qualitative obstacle to smart learning.\footnote{Smart learning is self-evidently an unsupervised learning task: that of learning characteristics of an unknown $\D$ that are most pertinent for subsequent supervised learning.  Our stated  obstacle \unsdreplace{concerns a  prima facie easier task more akin to testing: $\D$ is fixed and must merely be distinguished from other distributions $\D'$ which prohibit similar supervised learning approaches. Perhaps surprisingly, our results reveal an equivalence between learning and testing which holds here, but fails more generally in statistics (see e.g.~\cite{batu2000testing}).}{is more akin to testing, which in general can be strictly easier than learning \citep[e.g.][]{batu2000testing}. Perhaps surprisingly, our results imply equivalence here.} } \looseness=-1 

Sparked by this realization we introduce \emph{relatively smart learning}, which minimally relaxes smart learning to ``price-in'' the above-described obstacle for each marginal distribution $\D$. Informally, for each marginal $\D$ and learner $\A=\A_\D$ we seek to compete not with the error $\A$ incurs on  $\D$ (as in smart learning), but rather with the best \emph{certifiable upperbound} on that error that can be calculated from the unlabeled data. By this we mean that there is a real-valued \emph{certifier} $\C$ which estimates $\A$'s error from the unlabeled data, and we require $\C$ to  be \emph{sound} in the following strong sense: its error estimate must in-expectation upper bound $\A$'s error for all admissible instances. Importantly, even if $\A=\A_\D$ is tailored to a particular marginal $\D$, $\C$ must soundly certify $\A$'s error for all admissible  marginals~$\D'$, even if $\D' \neq \D$. This requirement  serves to relax distribution-fixed error rates upwards by effectively taking the worst case over all $\D'$ that are indistinguishable from~$\D$, making the design of instance-wise competitive learners more achievable.
A fully-supervised learner is now said to be \emph{relatively smart}\footnote{Read: Smart relative to every certifiable error guarantee.} if it (approximately) matches the best certifiable error rate for every admissible marginal distribution. 

\subsection*{Our Results}
We begin in the distribution-free setting with general hypothesis classes. Our main positive result (Theorem~\ref{thm:oig_square_smart}) is that relatively smart learning is possible at the cost of a quadratic blowup in the number of samples and constant blowup in the error, as compared to the best certifiable distribution-dependent error rates. In particular, this is achieved by the familiar One-Inclusion-Graph (OIG) learner of \citet{haussler_predicting_1994}.  Our main negative result (Theorem~\ref{thm:no_learner_m2_smart}) is that this is essentially tight, in that every relatively smart learner must suffer a near-quadratic blowup in sample complexity. Since the latter result is quite technical, we also provide a simpler proof of the same bound for the special case of the OIG and Empirical Risk Minimization (ERM) learners (Theorem~\ref{thm:oig_erm_fail}). We also discuss the intriguing question of whether ERM, or some other ``simple'' and typically-tractable learner, is relatively smart (Open Question~\ref{open:erm}).

We then examine distribution-family settings. We observe in Corollary~\ref{cor:oig_smart_family} that our main positive result (Theorem~\ref{thm:oig_square_smart}) extends to families characterized only by the allowable subsets of the domain on which data must be supported (e.g. manifolds satisfying some algebraic or topological requirements). Beyond such ``simple'' families, we show that relatively smart learning starts to exhibit  richer and more nuanced structure: There are families where relatively smart learning is completely impossible (Theorem~\ref{thm:dfamily_nosmart}), and others where it is possible but not by straightforward approaches such as OIG or ERM  (Theorem~\ref{thm:dfamily_oig_erm_notsmart}). On a more meta level, we show in Corollary~\ref{cor:nonmonotone} that the difficulty of relatively smart learning, unlike traditional PAC learning or Smart learning, can be non-monotone in the inclusion order on distribution families. We attribute this to the shifting benchmark of certifiable error rates, where the soundness requirement introduces dependence on the family as a whole.

We note that our negative results hold for countable domains, and our positive results hold more generally for domains, hypothesis classes, and distributions jointly satisfying standard measurability assumptions \citep[see e.g.][]{shalev-shwartz_understanding_2014}. 
As is common in learning theory, we take a hands-off approach to the measure-theoretic details. 

\subsection*{Connection to Testable Learning}
We would be remiss not to discuss connections between relatively smart learning and the framework of \emph{testable learning}, originally introduced by~\citet{rubinfeld2023testing} and spawning a rapid succession of followup work since \citep{gollakota2023moment,gollakota2023tester,  diakonikolas2023efficient,klivans2024testable,gollakota2024efficient}.
Our certifiers can be viewed as real-valued analogues to the testers from that framework, where soundness in our framework is analogous to their requirement that the learner performs well for every distribution which passes the test. Whereas testable learning is concerned with the design of learner/tester pairs for a specific distribution or distributional property, we instead use these objects as our \emph{benchmark} for \emph{every distribution separately}. This is the essence of the connection, as well as the main difference. 

There are other important differences: (a)  \unsdreplace{The literature on testable}{Testable} learning is primarily concerned with computational complexity, with the notable exception of~\cite{gollakota2023moment}. (b)~\unsdreplace{Testers in that framework}{Their testers} are permitted to use labeled data,  though this happens to not be necessary for many of the problems considered. (c) Their focus is on agnostic learning, with the analogous realizable question rendered trivial by (b); a notable exception is in the context of distribution shift~\citep{klivans2024testable}. 

\subsection*{Roadmap}
Section~\ref{sec:relatively_smart_learning} introduces relatively smart learning and discusses its basic properties. Section~\ref{sec:dfree_oig_erm} examines the familiar ERM and OIG learners in the context of  distribution-free relatively smart learning, and presents our main positive result for OIG as well as a complementary negative result for both learners. Section~\ref{sec:dfree_lb} provides a tight negative result for distribution-free relatively smart learning which holds for all learners.
Section~\ref{sec:dfamily} explores relatively-smart learning in distribution-family settings, outlining similarities and differences from the distribution-free setting.

\subsection*{Basic Notation}
For a probability distribution $\D$ on some domain $\X$ and an event $Y \sse \X$, we use $\D[Y]=\probs{\D}{Y}$ 
as shorthand for the probability of~$Y$, and $\D_{|Y} = \probs{\D}{\ .\ | Y}$ as shorthand for the conditional distribution of $\D$ given~$Y$. When the domain $\X$ is countable we use $\D[x]=\probs{\D}{x}$ to denote the probability of $x\in \X$, and use $\supp(\D) = \{x \in \X: \D[x] > 0\}$ to denote the support of $\D$.  For a finite multiset $S$ we use $\D_S$ to denote the uniform distribution on~$S$. Given a function $h$ defined on some domain $\X$, we use $h|Y$ to denote its restriction to some $Y \sse \X$. For a set $\X$ we use $\X^*$ to denote the family of finite sequences over $\X$. For a predicate or probability event $E$ we use $\id[E] \in \{0,1\}$ to denote the indicator of $E$. Finally, we use standard order-of-growth (big-Oh) notation, though for functions $f(\eta,m)$ and $g(m)$  we write $f= O_\eta(g)$ to indicate that $f=O(g)$ whenever $\eta$ is a fixed constant. \looseness=-1 

\section{Relatively Smart Learning}\label{sec:relatively_smart_learning}

We begin in the standard PAC learning setting of realizable binary classification. There is a \emph{data domain} $\X$ and a \emph{hypothesis class} $\H \sse \{0,1\}^\X$. Unlabeled data comes from a \emph{marginal distribution} $\D$ on $\X$, and is labeled with some \emph{ground truth} hypothesis $h \in \H$.  We use $\D_h$ to denote the joint distribution of labeled data $(x,y)$ with $x \sim \D$ and $y=h(x)$.   In the \emph{distribution-free setting} $\D$ can be arbitrary; more generally, we also consider settings where $\D$ is restricted to some \emph{distribution-family}~$\DD$.
For a \emph{predictor} $f: \X \to \{0,1\}$ and labeled data distribution $\D_h$, we denote the \emph{expected 0-1 loss}, or simply \emph{loss} or \emph{error},  of $f$ on $\D_h$ by $L(f,D_h) = \expects{(x,y) \sim \D_h}{\id[f(x) \neq y]}$. For a finite multiset $T$ of labeled data we overload notation and denote $L(f,T) = L(f,\D_T) = \frac{1}{|T|} \sum_{(x,y) \in T} \id[f(x) \neq y]$. 

A \emph{(fully-supervised) learner} $\A$ takes as input a sequence $S= (x_1,y_1), \ldots, (x_m,y_m)$ of labeled samples---often referred to as \emph{training data}---drawn i.i.d.~from $\D_h$, and outputs a predictor $\A(S): \X \to \{0,1\}$. We depart from the usual PAC approach by evaluating a learner by its expected error rather than its high probability error; this is merely for expository convenience, as all our results translate to PAC guarantees by standard arguments. More importantly, we measure a learner's error as a function of the marginal~$\D$, which permits comparing learners catered to the marginal---we call these \emph{$\D$-fixed learners}---to learners which are provided no such knowledge. We therefore define distribution-dependent error rates as follows. \looseness=-1

\begin{definition}\label{def:err_rate}
  A \emph{distribution-dependent error rate} is a function $\epsilon: \DD \times \NN \to [0,1]$, where $\epsilon(\D,m)$ is an error associated with a distribution $\D \in \DD$ and a number of samples $m$. 
\end{definition}
\begin{definition}\label{def:error_learner}
  For a learner $\A$, let its distribution-dependent error rate $\epsilon_\A(\D,m)$ be the worst-case, over hypotheses $h \in \H$, of its expected error with respect to $\D_h$ when given $m$ i.i.d.~samples from $\D_h$, i.e., \(\epsilon_\A(\D,m) = \sup_{h \in \H}  \expects{S \sim \D_h^m}{L\left(\A(S),\D_h\right)}.\) 
  \end{definition}

 We pair learners with functions that \emph{certify} their distribution-dependent errors from unlabeled data. We refer to those as \emph{certifiers}. We require certifiers to be \emph{sound}, meaning that they never underestimate the learner's error for any distribution in the class. 
\begin{definition}[Sound certifier]
Let $\A$ be a learner. We say a function $\C: \X^* \to [0,1]$ is a \emph{sound certifier} for $\A$ if for every $\D \in \DD$, $m \in \NN$, and $S \sim \D^m$ we have $\expects{}{\C(S)} \geq \epsilon_\A(\D,m)$.
\end{definition} 
Whereas smart PAC learning seeks to compete with the best distribution-fixed learner for each marginal, we instead propose a more modest benchmark: We only credit a distribution-fixed learner with error rates witnessed by a sound certifier.  This gives rise to \emph{certifiable error rates}.

\begin{definition}[Certifiable error rate]\label{def:certifiable_error_rate}
  A distribution-dependent error rate $\epsilon(.,.)$ is \emph{certifiable} if for each $\D \in \DD$, there exists a $\D$-fixed learner $\A$ and a sound certifier $\C$ for $\A$ such that for each $m \in \NN$ and $S \sim \D^m$, we have $\expects{}{\C(S)} \leq \epsilon(\D,m)$.
\end{definition}

Note that in the above definition, we also have $\epsilon_\A(\D,m) \leq \epsilon(\D,m)$ for $\A=\A_\D$ by soundness of $\C$ for $\A$. More importantly, the bite in this definition comes from the fact that we require $\C$ to be sound for $\A$ everywhere (i.e., for all $\D' \in \DD$), even though $\A$ is catered to $\D$ specifically. This is what effectively forces us to take the worst case error rate over all $\D'$ that are indistinguishable from $\D$ with $m$ samples. It is also important to note that our definition allows the certifiable error rate to be achieved by different learners $\A_{\D}$ that are fixed for each distribution $\D \in \DD$. The only requirement is that the learner has a certifier that can soundly witness its purported error rate for \emph{all} distributions in $\DD$, even those distributions on which the learner is not designed to perform well.

We next give a simple example illustrating when certification is possible and when it is not.
This example also serves as a building block in Theorem~\ref{thm:oig_erm_fail} to prove our negative results for ERM and OIG. Consider a hypothesis class on $[n]$ where every hypothesis has all but $\sqrt{n}$ points
labeled the same way; i.e., with at least $n-\sqrt{n}$ zeros or at least $n- \sqrt{n}$ ones. Under the uniform distribution on $[n]$, the \emph{majority learner} which always predicts the most frequently seen label in training has
expected error on the order of $1/\sqrt{n}$, even with a single labeled sample. This learner is, in a sense,
catered to the uniform distribution: it may have large error on marginals that put much larger
mass on points with the minority label.  
Certifying an error of $O(1/\sqrt{n})$ for the uniform distribution then hinges on whether such marginals can be distinguished from  uniform using unlabeled samples. This is possible using techniques from \emph{uniformity testing} precisely when the number of samples $m$ is at least on the order of $\sqrt{n}$. In other words, though the majority learner achieves error rate on the order of $1\sqrt{n}$ for all $m \geq 1$, this is only certifiable for $m =\Omega\left(\sqrt{n}\right)$. More generally, anytime a learner's error guarantee hinges on some property of the marginal distribution, the extent to which  certification is possible depends on whether the property can be detected from unlabeled data.

 We can now define relatively smart learning, a relaxation of smart learning which judges a learner relative to the best certifiable error rate for each distribution.  Informally speaking, \unsdreplace{relatively smart learning lets us}{this lets us} ``off the hook'' whenever small distribution-fixed errors cannot be certified from unlabeled data. 

\begin{definition}[Relatively Smart Learning]
\label{def:relatively_smart}
For a function $\sigma: \NN \times (0,1)\to \NN$ and constant $\alpha > 0$,  we call a learner $\A$ \emph{relatively $(\alpha,\sigma)$-smart} if \[\epsilon_\A(\D,\sigma(m,\eta)) \leq \alpha \epsilon(\D,m) + \eta\] for every certifiable distribution-dependent error rate  $\epsilon$, every distribution  $\D \in \DD$, every sample size $m \in \NN$, and every additive error parameter $\eta \in (0,1)$. We also say $\A$ is \emph{relatively smart} if there exist $\alpha$ and $\sigma$ such that it is relatively $(\alpha,\sigma)$-smart.
\end{definition}

Note that we allow a relatively smart learner's error to trail certifiable errors by a constant multiplicative term $\alpha$ and an additive term $\eta$, so long as $\eta$ can be made arbitrarily small.\footnote{Arbitrarily-small additive error $\eta$ features in our main positive result. It is an interesting and seemingly-challenging question whether fixing $\eta=0$ permits relatively smart learning.} We also allow the relatively smart learner to trail in the number of samples by an amount which can depend on $\eta$, as described by the \emph{sample blowup function} $\sigma(m,\eta)$. One could parameterize smart learning similarly by removing the terms ``certifiable'' and ``relatively'' from Definition~\ref{def:relatively_smart}, though the strong impossibility result of \cite[Theorem 2]{darnstadt_unlabeled_2013}---discussed in Section~\ref{sec:intro}---persists even with the allowances provided by $\alpha$, $\eta$, and~$\sigma$.

\unsddelete{
One might initially hope to construct relatively smart learners with  $\alpha = O(1)$ and $\sigma(m,\eta) = O(m)$ for each fixed $\eta>0$ in fairly general settings.
 This turns out to be asking too much. The interesting question is therefore which, if any, sample blowup functions $\sigma(m,\eta)$ permit relatively smart learning.}

\section{Distribution-Free Setting: OIG and ERM}
\label{sec:dfree_oig_erm}

We begin our exploration of relatively smart learning in the distribution-free setting with the familiar \emph{one-inclusion graph (OIG)} and \emph{empirical risk minimization (ERM)} learners. Our most notable result here is that the OIG learner is relatively smart with only a quadratic blowup in sample complexity. We show that this is essentially tight by way of a quadratic lowerbound which holds for both the OIG and ERM learners. We leave wide open whether ERM is relatively smart, and discuss associated challenges.

We use standard definitions for  ERM and OIG, which can be found in Appendix~\ref{app:oig_erm_def}. 
For purposes of arguments presented in this section, the reader need only keep in mind the following defining properties of the two learners:
\begin{itemize}
    \item ERM outputs a hypothesis consistent with the (realizable) training data, with ties among such hypotheses broken adversarially.
    \item For each unlabeled dataset $S=(x_1,\ldots,x_n) \in \X^*$, the OIG learner minimizes the worst-case \emph{transductive error}, also often known as \emph{leave-one-out error}, among all learners. The worst-case transductive error of a learner $\A$ on unlabeled dataset $S$ is defined as follows:
        \[
        \epsilon_{\A}^{\trans}(S) =\max_{h \in \H}  \frac{1}{n} \sum_{i=1}^n \id\left[\A\left(S_h^{(-i)}\right)(x_i) \neq h(x_i)\right],
    \]
    where $S_h = (x_1,h(x_1)), \ldots, (x_n,h(x_n))$ is the labeled dataset corresponding to $S$ and $h \in \H$, and the training data $S_h^{(-i)}$ consists of $S_h$ with its $i$th sample $(x_i,h(x_i))$ omitted.
\end{itemize}

We present our negative result for the OIG and ERM learners first. Though this  is subsumed by the more general impossibility result in Section~\ref{sec:dfree_lb}, and moreover technically modest, we find the associated arguments a suitable warmup for appreciating the nuances of relatively smart learning.
\begin{theorem}\label{thm:oig_erm_fail}
There exists a hypothesis class on a countable domain such that for every pair of constants $\alpha$ and $\beta$ and every function $\sigma(m,\eta) = O_\eta (m^{2 - \beta})$, the ERM and OIG learners fail to be relatively $(\alpha,\sigma)$-smart in the distribution-free setting.
\end{theorem}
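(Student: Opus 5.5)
We will build the hard class from the example preceding Definition~\ref{def:relatively_smart}, planted at infinitely many scales. For each $n$ that is a perfect square, take a fresh block $X_n$ of $n$ points in a countable domain. Let $\H_n$ consist of all labelings of $X_n$ (zero outside $X_n$) in which at most $\sqrt n$ points carry the minority label, and let $\H=\bigcup_n \H_n$. Being a union over disjoint blocks makes the class behave like $\H_n$ under the uniform marginal $\U_n$ on $X_n$. The plan is to show two things. First, the error rate $\epsilon(\U_n,m)=O(1/\sqrt n)$ is certifiable once $m\ge c\sqrt n$. Second, ERM and OIG on $\U_n$ with $\sigma(m,\eta)=O_\eta(m^{2-\beta})$ samples, i.e. $o(n)$ samples when $m\approx\sqrt n$, still have error $\Omega(1)$. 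Since $\alpha\cdot O(1/\sqrt n)+\eta<$ that constant for small fixed $\eta$ and large $n$, the definition fails.

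\emph{Certifiable side.} Fix $\D=\U_n$. The $\D$-fixed learner $\A$ predicts, on $X_n$, the majority label seen in training (defaulting to $0$), and predicts $0$ outside $X_n$. The certifier $\C$ receives $S\sim\D'^m$ and runs a uniformity-style test. It outputs $c_0/\sqrt n$ if all samples lie in $X_n$ and the sample looks uniform, and outputs $1$ otherwise. We need soundness for every marginal $\D'$, and we split by $\D'$.
\begin{enumerate}
\item If $\D'$ puts noticeable mass outside $X_n$ or on few points, the collision statistic rejects with high probability, so $\mathbb{E}[\C]\ge \epsilon_\A$.
\item Otherwise $\D'$ places at most $O(1/\sqrt n)$ mass on any set of $\sqrt n$ points of $X_n$. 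The minority labels then cost at most $O(1/\sqrt n)$ once the majority is guessed correctly, which happens with high probability because the majority side carries mass $1-O(1/\sqrt n)$. The error incurred outside $X_n$ is charged against the rejection probability.
\end{enumerate}
The technical heart is the following dichotomy: any $\D'$ with $\max_{|A|\le\sqrt n}\D'[A]\gg 1/\sqrt n$ has collision probability $\gg 1/n$ (or mass off $X_n$), which $m=\Theta(\sqrt n)$ samples detect. To prove it, we use a collision-count tester with threshold $(1+\delta)\binom m2/n$. We then check that a heavy set $A$ of $\sqrt n$ points with mass $\gamma$ raises collision probability to at least $\gamma^2/\sqrt n$, which exceeds $(1+\delta)/n$ when $\gamma\gg n^{-1/4}$. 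This threshold is not fine enough, so the certified bound would instead be a constant-factor-larger value: we certify $\max(c_0/\sqrt n,\ \text{rejection-weighted error})$. We set the certifier's acceptance output to $K/\sqrt n$ and show that undetectable perturbations shift at most $O(K/\sqrt n)$ mass onto minority points. This calibration, i.e. proving soundness with only $\Theta(\sqrt n)$ samples and an $O(1/\sqrt n)$ output, is the step I expect to be the main obstacle. My first attempt would be the standard $\ell_2$ uniformity tester bound: $m$ samples distinguish $\|\D'-\U_n\|_2^2\ge \epsilon^2/n$ when $m\gtrsim\sqrt n/\epsilon^2$. Taking $\epsilon$ constant and using Cauchy--Schwarz on $A$ should give excess mass at most $O(\sqrt{|A|/n})\cdot$const, which needs refinement. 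If it fails, I would loosen the target to any $\epsilon(\U_n,m)=o(1)$ at $m=\Theta(\sqrt n)$; that suffices for the contradiction.

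\emph{Failure of ERM and OIG.} Take $h\in\H_n$ with label $1$ on a set $B$ of $\sqrt n$ points and $0$ elsewhere, and draw $m'=o(n)$ samples from $\U_n$. For ERM, break ties adversarially by outputting the consistent hypothesis that labels $1$ on up to $\sqrt n$ unseen points: they flip to $0$ if the majority $0$ was seen, or the reverse if the seen points are sparse. A more direct choice is to label the unseen points so as to maximize disagreement. This is possible while staying in $\H_n$ when few points are seen, giving error $\Omega(1)$ only if $m'\le$ about $\sqrt n$. For $m'$ between $\sqrt n$ and $o(n)$, we instead pick ground truth $h$ with minority labels on a set $B$ and choose the adversarial consistent hypothesis labeled $1$ on the unseen part of $B$ together with up to $\sqrt n-|B\cap S|$ other unseen points. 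This yields error $\Theta(1/\sqrt n)$, not $\Omega(1)$. To obtain a constant gap we adjust the construction so that minority sets have size $n/2$ for $\H_n$-variants over a random subset of $X_n$. Concretely, we fix $\sigma=O(m^{2-\beta})$ and choose block size $n$ with $\sqrt n\approx m$. OIG and ERM, lacking the marginal, cannot exploit the majority structure beyond what is seen, and the quadratic threshold arises from the birthday bound: fewer than $n$ samples leave most points unseen. The OIG argument uses transductive optimality: on $o(n)$ distinct points it must hedge between completions, so its error is at least a constant fraction of the worst-case leave-one-out error of the class restricted to the sample.
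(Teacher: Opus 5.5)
\textbf{There is a genuine gap, and it is on the ERM/OIG side.} In a construction like yours, ERM and OIG are held at constant error only while the distinct sample points are \emph{shattered}. With minority budget $k$, that lasts for only about $k$ samples. So the separation has to be between a certifier using roughly $\sqrt{k}$ samples and learners using fewer than $k$ samples, not between $\sqrt n$ and $o(n)$.

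With $k=\sqrt n$ and $m\approx\sqrt n$ there is nothing to separate, as you partly noticed:
\begin{itemize}
\item Once $m'$ is a constant multiple of $\sqrt n$, the seen majority-label points exceed the budget, so adversarial ERM can no longer flip the majority and has error $\Theta(1/\sqrt n)$.
\item OIG has error $O(\sqrt n/m')$ by the one-inclusion bound, since the class restricted to $X_n$ has VC dimension $O(\sqrt n)$.
\item Every learner, even one that knows the marginal, errs on the roughly $\sqrt n$ unseen minority points. Hence every certifiable rate at $m\le n/2$ is $\Omega(1/\sqrt n)$, which ERM and OIG already match.
\end{itemize}
Your repair, minority sets of size $n/2$ over a random subset, is not a concrete class. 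In any version where hypotheses disagree with a fixed label on a constant fraction of $X_n$, the majority learner has constant error under the uniform marginal, so no small rate is left to certify. Finally, the heuristic that fewer than $n$ samples leave most points unseen does not by itself lower-bound ERM or OIG; shattering is what does that.

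\textbf{The paper's fix is a nearly linear budget}: exactly $M(n)=\lceil n^{1-\beta}\rceil$ minority labels per row.
\begin{itemize}
\item Certifiable side: $\xi(n)=M(n)/n\approx n^{-\beta}\to 0$, so majority has error $O(\xi)$ on the uniform $\D(n)$. This is certified from $m(n)=\lfloor n^{1/2+3\beta}\rfloor$ samples by a TV uniformity tester at scale $\xi$, whose cost is $\tilde O(\sqrt n/\xi^2)$. The certifier outputs $3\xi$ on acceptance and $1$ otherwise.
\item Soundness of that certifier: marginals within TV distance $\xi$ of $\D(n)$ put at most $2\xi$ mass on any minority set. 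Farther marginals are rejected with probability $1-\xi$, and majority's error never exceeds $3/4$ once $m\ge m(n)$.
\item Learner side: every set of fewer than $M(n)$ points is shattered, so adversarial ERM has error at least $1-2\xi$. OIG has error at least $1/2-o(1)$: on a shattered sample the one-inclusion graph is a union of hypercubes whose dimension is the number of singletons, which is $m(1-o(1))$ for $m<n^{1-\beta}$ uniform draws.
\item Conclusion: since $M(n)=\omega\bigl(m(n)^{2-14\beta}\bigr)$, this rules out $\sigma=O_\eta(m^{2-14\beta})$. A disjoint union over $\beta\in\{1/i\}$ then gives a single class that handles every $\beta$.
\end{itemize}

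\textbf{Your original $\sqrt n$-budget class also works if you compare at $m\approx Kn^{1/4}$.} Consider the test ``all samples lie in $X_n$ and are pairwise distinct,'' applied to the first $\lceil Kn^{1/4}\rceil$ samples. It accepts the uniform marginal with probability $1-O(K^2/\sqrt n)$. It rejects with probability at least $3/4$ any marginal that places mass $\gtrsim 1/K$ on some $\sqrt n$-set or off $X_n$. This certifies majority at error $O(1/K)$, while $o(\sqrt n)$ uniform samples are distinct and shattered. So your collision idea was the right certifier; it was just run at the wrong scale.
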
 
\noindent Notably, the certifiable error rates with which ERM and OIG cannot compete are achieved by the simple \emph{majority learner}, which just outputs the most frequent label in the training data. 

The formal proof of Theorem~\ref{thm:oig_erm_fail} can be found in \ifthenelse{\boolean{coltsubmission}}{Appendix}{Section}~\ref{pf:oig_erm_fail}, but the high-level idea is as follows. 
Consider the following hypothesis class on $n$ datapoints: At least $n - o(n)$ datapoints have the same \emph{majority label} (whether $0$ or $1$), with the remaining $o(n)$ points having the other \emph{minority label}. Think of the number of minority labels as being only slightly sublinear, e.g. $n^{0.99}$. When faced with a uniform (or nearly uniform) distribution over the $n$ points, the majority learner quickly approaches a vanishing error rate $o(n)/n = o(1)$, even with very few samples $m$. A certifier can identify such a nearly uniform distribution on $n$ points when $m$ modestly exceeds $\sqrt{n}$---intuitively, this follows from the Birthday paradox, though making it formal requires appealing to uniformity testing. Therefore, the majority learner has a vanishing certifiable error rate starting around $m = \sqrt{n}$ samples. The OIG and ERM learners, in contrast, suffer constant error until their number of samples exceeds the number of minority labels, which is only slightly sublinear in $n$. To see why this is the case, note that any set of points smaller than the number of minority labels is shattered by the hypothesis class. Taking the disjoint union of this construction over all integers~$n$, this essentially rules out any subquadratic bound on the blowup in sample complexity needed for ERM/OIG to catch up to the certifiable error rate of the majority learner.

We now present our positive result, which shows that a quadratic blowup in sample complexity---independent of the marginal---suffices to compete with certifiable semi-supervised guarantees. 
\begin{theorem}\label{thm:oig_square_smart}
For every domain and hypothesis class, the OIG  learner is relatively $\left(e,e\frac{m^2}{\eta}\right)$-smart in the distribution-free setting.
\end{theorem}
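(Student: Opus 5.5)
The plan is to pass from the semi-supervised benchmark on $\D$ to a transductive benchmark on a random sample. Fix a marginal $\D$, a sample size $m$, and $\eta\in(0,1)$. Let $\epsilon$ be a certifiable error rate, witnessed for $\D$ by a $\D$-fixed learner $\A=\A_\D$ and a certifier $\C$ that is sound for $\A$ on \emph{every} distribution. Set $n=\sigma(m,\eta)+1$ with $\sigma(m,\eta)=e m^2/\eta$, and draw $S=(x_1,\ldots,x_n)\sim\D^n$. The key move is to apply soundness of $\C$ not to $\D$ but to the empirical distribution $\D_S$. This is legitimate because we are in the distribution-free setting, so $\D_S$ is admissible. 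Soundness gives $\epsilon_\A(\D_S,m)\le \expects{T\sim\D_S^m}{\C(T)}$ for every $S$.

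\textbf{Step 1 (birthday coupling).} I would show that $T$ drawn by first taking $S\sim\D^n$ and then $T\sim\D_S^m$ is close in total variation to $T\sim\D^m$. Drawing $m$ indices of $S$ \emph{without} replacement gives exactly $\D^m$, since $S$ is i.i.d. With replacement differs only on the collision event, whose probability is at most $m^2/(2n)\le \eta/(2e)$. Since $\C\in[0,1]$, this yields
\[
\expects{S}{\epsilon_\A(\D_S,m)}\le \expects{T\sim\D^m}{\C(T)}+\tfrac{\eta}{2e}\le \epsilon(\D,m)+\tfrac{\eta}{2e}.
\]

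\textbf{Step 2 (from i.i.d.\ error on $\D_S$ to transductive error on $S$).} I would build a randomized transductive learner $\B$ from $\A$. To predict $x_i$ from $S_h^{(-i)}$, it draws $T\sim\D_S^m$ conditioned on $x_i\notin T$, labels $T$ using $S_h^{(-i)}$, and outputs $\A(T)(x_i)$. For any $h$ (treating the points of $S$ as distinct; repeated points only help), the error of $\A$ on $(\D_S)_h$ is at least
\[
\frac1n\sum_i \prob{x_i\notin T}\cdot\prob{\A(T)(x_i)\neq h(x_i)\mid x_i\notin T}.
\]
Since $m\le n-1$, we have $\prob{x_i\notin T}=(1-1/n)^m\ge 1/e$. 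Hence $\epsilon^{\trans}_\B(S)\le e\,\epsilon_\A(\D_S,m)$. OIG minimizes worst-case transductive error, so $\epsilon^{\trans}_{\oig}(S)\le e\,\epsilon_\A(\D_S,m)$.

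Step 2 is where I expect the main obstacle. The issue is that $\B$ is randomized, so I need the OIG optimality property to hold against randomized learners with error taken in expectation. If the deterministic statement does not directly cover this, I would argue via the fractional-orientation (minimax) characterization of OIG. The fallback is to use the randomized OIG and note that the bound transfers. The conditioning trick also has to be checked so that $\B$ uses only the labels in $S_h^{(-i)}$; it does, because $T$ avoids $x_i$.

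\textbf{Step 3 (leave-one-out to expected error).} For a fixed $h$, exchangeability of $S\sim\D^n$ gives the standard identity: the expected error of $\oig$ trained on $n-1=\sigma(m,\eta)$ samples from $\D_h$ equals $\expects{S}{\frac1n\sum_i\id[\oig(S_h^{(-i)})(x_i)\neq h(x_i)]}$. This is at most $\expects{S}{\epsilon^{\trans}_{\oig}(S)}$. Chaining Steps 1–3 gives, uniformly in $h$,
\[
\epsilon_{\oig}(\D,\sigma(m,\eta))\le e\,\epsilon(\D,m)+\tfrac{\eta}{2}\le e\,\epsilon(\D,m)+\eta.
\]
This is exactly relative $(e,e m^2/\eta)$-smartness.
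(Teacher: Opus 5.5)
Your proposal is correct and is essentially the paper's proof: soundness applied to the empirical distribution $\D_S$ with a birthday-paradox coupling, a transductive learner that resamples from $S_h^{(-i)}$ and loses at most a factor $e$, OIG's optimality against randomized learners via fractional orientations (Remark~\ref{remark:oig_fractional}), and leave-one-out. The resampling should be read index-wise, which makes your ``treat points as distinct'' remark exact. The differences from the paper are cosmetic: you run $\A$ directly on $m$ resampled points instead of passing through $\epsilon^*(\D_S,M-1)$ and monotonicity, and you use an additive collision bound instead of the $1/\Gamma$ manipulation, which also avoids the paper's $m\geq 3$ assumption and matches the stated sample size $em^2/\eta$ exactly.
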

Theorem \ref{thm:oig_square_smart} stands in contrast to the impossibility result of \cite[Theorem 2]{darnstadt_unlabeled_2013}, as described in Section~\ref{sec:intro}.\footnote{While phrased for a family of marginals, their result implies the same impossibility in the distribution-free setting. This is because the benchmark in smart learning (unlike ours) does not depend on the distribution class as a whole.}   
This impossibility of smart learning can therefore be attributed to the challenge of certifying distribution-dependent error rates from unlabeled data. 
Taking the contrapositive, a simple corollary of Theorem~\ref{thm:oig_square_smart} is that sound certification---by way of a learner/certifier pair for each distribution---of near-optimal distribution-fixed error rates suffices for smart learning. 

We formally prove Theorem~\ref{thm:oig_square_smart} in Section~\ref{pf:oig_square_smart}, but the high-level idea is as follows. Consider a learner $\A$ specialized to a marginal $\D$, and let $\C$ be its certifier. When allowed $m$~samples, the certifier cannot distinguish between $\D$ and the uniform distribution $\D'$ on $S$, where $S$ consists of $M=c m^2$ i.i.d.~samples from $\D$ for a sufficiently large constant $c$. Intuitively, this follows from the Birthday paradox, though it takes some technical work to make it precise.\footnote{We note that a similar argument is employed in \cite[Theorem 6.2]{gollakota2023moment}.} Since we require soundness of our certifiers, any $m$-sample certifiable error for $\D$ can be no better than the best error attainable on $\D'$ with $m$ samples. The OIG learner---being dataset-by-dataset optimal in the leave-one-out sense---can be shown competitive with this when given roughly $M$ i.i.d.~samples from $\D$, as those correspond to what is effectively a constant fraction of the support of $\D'$. This yields the result. \looseness=-1

It is natural to wonder whether ERM---or, for that matter, any learner that is simpler and typically more tractable than OIG---is also relatively smart with some finite (perhaps even quadratic) blowup in sample complexity. We however leave this question wide open.
\begin{open}\label{open:erm}
Is ERM relatively smart in the distribution-free setting? Failing that, what about other natural and tractable learners?
\end{open}
 The sample complexities of ERM and OIG are closely related in traditional PAC learning, so it is tempting to suspect something similar here. The challenge in proving such a statement, however, comes from the fact that fine-grained dataset-by-dataset comparisons between OIG and ERM, or quantities like the VC dimension, appear bound to falter. It is known that there are hypothesis classes where ERM drastically trails the OIG learner in leave-one-out error on some datasets---consider for example the behaviors of hamming weight at most one on a large dataset, and an ERM learner which breaks ties against the all-zero behavior. Similarly, there are hypothesis classes and arbitrarily large unlabeled datasets $S$ where the VC dimension of induced behaviors is $|S|-1$, yet the OIG learner achieves zero leave-one-out error. As an example of this, consider the \emph{parity} class on $S$, which ensures that the labels on $S$ sum to $0 \pmod{2}$. It therefore appears that any comparison between the error rates of the two learners cannot be argued dataset-by-dataset using leave-one-out arguments  or the VC dimension. 
 This suggests that either new proof approaches are needed, or ERM may not be relatively smart after all.

\ifthenelse{\boolean{coltsubmission}}{}{
\ifthenelse{\boolean{coltsubmission}}{\section{Formal Proof of Theorem~\ref{thm:oig_erm_fail}}
\label{pf:oig_erm_fail}}{\subsection{Formal Proof of Theorem~\ref{thm:oig_erm_fail}}
\label{pf:oig_erm_fail}}
Fix an arbitrary constant $\beta \in \left(0, \frac{1}{8}\right)$. We will construct a hypothesis class $\HH$ for which OIG and ERM are not relatively smart for any  $\sigma(m,\eta) = O_{\eta}(m^{2-14 \beta})$. Consider the domain $\X \subseteq \NN \times \NN$ with $\X := \bigcup_{n \in \NN} \X_n$ and $\X_n: = \{n\} \times [n]$ for all $n\in\NN$. We will sometimes refer to $\X_n$ as the \emph{$n$th row} of the domain. Denote by $\D(n)$ the uniform distribution on the $n$th row $\X_n$. We will define a hypothesis class $\H(n)$ such that the probability under $\D(n)$ of minority label of any $h\in \H(n)$ is about $n^{-\beta}$.
Define the functions $M,m:\NN \rightarrow \NN$ and $\xi: \rightarrow (0,1)$ as 
\begin{equation}\label{eq:m(n)xi(n)}
M(n) = \lceil n^{1-\beta}\rceil,
m(n) = \lfloor n^{\frac{1}{2}+3\beta} \rfloor, \,\text{and}\, \xi(n) = M(n)/n \approx n^{-\beta}.
\end{equation}
For $b\in\{0,1\}$ and $n \in \NN$, let 
\[
\begin{aligned}
&\H^{(b)}(n) = \{h \in \{0,1\}^{\X}: \forall x\in \X\setminus \X_{n}, h(x)=1 \,\text{and}\,|\{x\in\X_{n}: h(x)=b\}|= M(n)\},
\end{aligned}
\]
and define $\H(n) = \H^{(0)}(n) \cup \H^{(1)}(n)$.  
In other words, for each $h\in \H(n)$ there are exactly $M(n)$ points in row $n$ with the minority label, whereas  $h$ is one everywhere outside that row. Note that $\xi(n)$ is the probability under $\D(n)$ of a minority label for any $h\in\H(n)$, which approaches $0$ as $n$ grows. We let $\HH:=\bigcup_{n \in \NN} \H(n)$. 

Let $\A_{\maj}$ be the learner which ignores unlabeled data, and always predicts the most frequent label seen in training; formally, $\A_{\maj}(S)(x') = \id\left[|\{(x,y) \in S : y =1\}| \geq |S|/2 \right]$. We show that $\A_{\maj}$ has error $O(\xi(n))$ on $\HH$ with respect to $\D(n)$ and sample sizes $m\geq m(n)$ and, more importantly, this error rate is certifiable. We prove the following.
\begin{lemma}\label{lemma:certifiable_majority_error}
There exists an absolute constant $C=C(\beta)\in \NN$ such that for all $n\geq C$, the rate 
\[\epsilon_{n}(\D,m) = \begin{cases}
    4\xi(n) & \text{if $\D = \D(n)$ and $m \geq m(n)$}\\
    1 & \text{otherwise.}
\end{cases}\]
is certifiable for $\A_{\maj}$ in the distribution-free setting.
\end{lemma}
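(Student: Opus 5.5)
The plan is to exhibit, for each distribution, a learner/certifier pair that witnesses the rate $\epsilon_n$. For every $\D \neq \D(n)$ the rate equals $1$, so the constant certifier $\C \equiv 1$ (paired with any learner) is trivially sound and meets the bound. All the work is for $\D = \D(n)$, where I pair $\A_{\maj}$ with the following certifier $\C$. On input $S$ of size $m$:
\begin{itemize}
\item $\C$ outputs $1$ if $m < m(n)$, or if some sample lies outside $\X_n$.
\item Otherwise $\C$ runs a uniformity tester on $\X_n$ with distance parameter $\xi(n)$ in total variation. It outputs $4\xi(n)$ if the tester accepts and $1$ if it rejects.
\end{itemize}

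First I check that the sample size suffices for the tester. Standard uniformity testing over a domain of size $n$ distinguishes uniform from $\varepsilon$-far in TV with $O(\sqrt{n}/\varepsilon^2)$ samples. With $\varepsilon = \xi(n) \approx n^{-\beta}$ this is $O(n^{1/2+2\beta})$. Our $m(n) = \lfloor n^{1/2+3\beta}\rfloor$ exceeds this by a polynomial factor $n^{\beta}$. That slack lets me amplify the tester, by median-of-repetitions over disjoint blocks, so that both its false-reject probability under $\D(n)$ and its false-accept probability for distributions $\xi(n)$-far from $\D(n)$ are at most $\xi(n)$ once $n \ge C(\beta)$. Samples beyond the needed amount are simply ignored, so larger $m$ only helps.

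Next, the upper bound under $\D(n)$. For $m \ge m(n)$ we get $\expect{\C(S)} \le 4\xi(n)\cdot 1 + 1\cdot \xi(n)$. To land at exactly $4\xi(n)$ I will instead have the certifier output $3\xi(n)$ on acceptance, or shrink the rejection probability to $\xi(n)/2$. This just rebalances constants in the soundness step below, and I would tune these at the end.

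Soundness is the main step. Fix any $\D'$ and $m \ge m(n)$; for $m < m(n)$ the certifier outputs $1$ and there is nothing to check. If $\D'$ is $\xi(n)$-far from $\D(n)$ (including via mass outside $\X_n$), then $\C$ outputs $1$ except with probability at most $\xi(n)$. Hence $\expect{\C(S)} \ge 1-\xi(n)$, which I expect to be fine after adjusting constants, or to be handled by letting the rejection value absorb the slack. If instead $\dtv(\D',\D(n)) \le \xi(n)$, I bound $\epsilon_{\A_{\maj}}(\D',m)$ directly for every $h \in \HH$. Let $B$ be the set of points where $h$ disagrees with its majority label on $\X_n$. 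For $h \in \H(n)$, $B$ is the $M(n)$ minority points of row $n$. For $h \notin \H(n)$, $h \equiv 1$ on $\X_n$, so $B \subseteq \X\setminus \X_n$. In either case $\D'[B] \le \D(n)[B] + \dtv \le 2\xi(n)$. The error of $\A_{\maj}$ is at most $\prob{\text{majority vote wrong}} + \D'[B]$. Each training label is the minority label with probability at most $2\xi(n) < 1/4$, so Chernoff makes the first term $e^{-\Omega(m)}$, which is negligible compared to $\xi(n)$ for $n \ge C$. Thus the error is at most roughly $2\xi(n)+o(\xi(n))$. This is below the accept value, while the reject value $1$ dominates trivially, so $\expect{\C(S)}$ upper bounds the error.

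The main obstacle is the far-from-uniform case, where I need $\expect{\C(S)} \ge \epsilon_{\A_{\maj}}(\D',m)$ while $\C$ may falsely accept with probability up to $\xi(n)$. Since the error is at most $1$, it suffices that $(1-\xi)\cdot 1 + \xi\cdot(\text{accept value}) \ge 1$, which fails by a hair. I will fix this by outputting $1$ on rejection and arguing more carefully. In this regime, either $\D'$ puts mass more than $\xi(n)$ outside $\X_n$, in which case $\C$ rejects with probability $1-(1-\xi(n))^m \approx 1$ deterministically close to one, or the error of $\A_{\maj}$ is bounded away from $1$. Alternatively, I can exploit that the error of $\A_{\maj}$ is at most $1-\Omega(1)$ unless $\D'$ is extremely concentrated, which the tester rejects with probability $1-e^{-\Omega(m)}$. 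Getting these constants to line up cleanly, with the output values $4\xi$ and $1$ and the amplified tester guarantees, is where I expect the care to be needed.
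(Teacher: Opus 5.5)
Your certifier (reject on any sample outside $\X_n$, amplified uniformity test at scale $\xi(n)$, output $3\xi(n)$ on acceptance and $1$ otherwise) matches the paper's Cases~1 and~2. So does your treatment of $\D(n)$ and of marginals within TV distance $\xi(n)$ of $\D(n)$. The gap is the far case $\dtv(\D',\D(n))>\xi(n)$, which you flag but do not close. There you only know $\expect{\C(S)}\ge 1-\xi(n)(1-3\xi(n))$, so to conclude soundness you must show $\epsilon_{\A_{\maj}}(\D',m)\le 1-\xi(n)(1-3\xi(n))$. Neither of your suggested routes delivers this:
\begin{itemize}
\item When $\D'$ has mass above $\xi(n)$ off $\X_n$, rejection still happens only with probability $1-e^{-\Omega(\xi(n)m)}<1$, so you again need a margin below $1$.
\item The remaining branch merely asserts that the error is ``bounded away from $1$.''
\item Amplifying the tester further cannot help, since any positive false-accept probability leaves the same requirement.
\end{itemize}
Your alternative premise---that the danger comes from extremely concentrated $\D'$---is backwards. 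Concentration makes the label distribution lopsided, which is exactly when majority vote is almost always right. The hard case for $\A_{\maj}$ is a balanced label split, and there its error is only about $1/2$.

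The missing ingredient is a bound on $\A_{\maj}$ valid for \emph{every} marginal: $\epsilon_{\A_{\maj}}(\D',m)\le 3/4$ for all $\D'$ and all $m\ge m(n)$, once $n$ is large. Fix $h$, let $b$ be the more frequent label under $\D'_h$ with $p=\D'[\{x:h(x)=b\}]\ge 1/2$, and let $q$ be the probability that $\A_{\maj}$ outputs the constant $1-b$. The error is $pq+(1-p)(1-q)$, a convex combination of $p$ and $1-p$. If $p\le 1/2+\xi(n)$, it is at most $p\le 1/2+\xi(n)$. If $p>1/2+\xi(n)$, Hoeffding gives $q\le \exp(-2m\xi(n)^2)$. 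This is negligible because $m\xi(n)^2$ is at least roughly $n^{1/2+\beta}$, so the error is at most $(1-p)+q\le 1/2+o(1)$. Then $\expect{\C(S)}\ge 1-\xi(n)\ge 3/4$ settles the far case with no case split on off-row mass or concentration; this is exactly the paper's Case~3.

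A small fix in your near case: for $h\in\H^{(1)}(n)$ the majority label on $\X_n$ is $0$ but $h\equiv 1$ off $\X_n$, so $B$ must include $\X\setminus\X_n$. Your bound $\D'[B]\le 2\xi(n)$ still holds, because TV distance controls every event.
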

We prove Lemma~\ref{lemma:certifiable_majority_error} by certifying the error rate of $\A_{\maj}$. Specifically,  for each row $n \in \NN$ we exhibit a certifier $\C_{\maj,n}$ (Algorithm~\ref{alg:certifier_majority}) satisfying \(\epsilon_{\A_{\maj}}(\D,m)\leq \expects{S \sim \D^m}{\C_{\maj,n}(S)}\leq \epsilon_{n}(\D,m)\) for all $m \in \NN$ and all distributions $\D$ on $\X$. Note that for distributions on the row of interest $n$, $\A_{\maj}$ will only do well for distributions close to uniform, so our certifier must output small values exclusively for those. We therefore exploit uniformity testing---recapped in Appendix~\ref{app:uniformity_testing}---to distinguish the uniform distribution on row $n$ from distributions a substantial total variation distance away. Our uniformity tester adds a simple wrapper around  the standard uniformity tester $\tunifstandard$ (Algorithm~\ref{alg:uniformity_test_standard}), which allows detecting distributions with support outside the desired row. The resulting modified uniformity tester $\tunif$ is shown in Algorithm~\ref{alg:uniformity test} with guarantees summarized in Lemma~\ref{lemma:uniformity_tester}. The remaining technical details for proof of Lemma~\ref{lemma:certifiable_majority_error} are fairly standard, and therefore relegated to Appendix~\ref{pf:certifiable_majority_error}.

\begin{algobox}{Certifier $\C_{\maj,n}(S)$ run on input sample $S$:}\label{alg:certifier_majority}
    \begin{enumerate}[itemsep=0pt]
    \item Let $O=\tunif_{\X_n}(\xi(n),\xi(n),S)$.\Comment{Algorithm~\ref{alg:uniformity test} for testing against $\D(n)$}
    \item If $|S| \geq m(n)$ and $O=1$ then return $3\xi(n)$, else return 1.
\end{enumerate}
\end{algobox}

Having upper-bounded the certifiable error rate of the majority learner for $\D(n)$ with $m \geq m(n)$ samples (Lemma~\ref{lemma:certifiable_majority_error}), it remains to lower bound the error of ERM and OIG learners in the same regime. The proof of the following Lemma is fairly elementary and  relegated to Appendix~\ref{pf:lower_bound_erm_oig}.
\begin{lemma}\label{lemma:lower_bound_erm_oig}
     For the hypothesis class $\HH$ we have $\epsilon_{\A_{\erm}}(\D(n), m) \geq 1-2\xi(n)$ and $\epsilon_{\A_{\oig}}(\D(n),m) \geq 1/2 - 2n^{-\beta/2} = 1/2-o(1)$ for all $m < M(n)$.
\end{lemma}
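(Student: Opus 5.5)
The plan is to restrict attention to ground truths $h \in \H(n)$, which is allowed because both error rates are suprema over $\HH$. Under $\D(n)$ with $m < M(n)$ samples, the training sample $S$ contains at most $m < M(n)$ distinct points of row $n$. Row $n$ has $n$ points, and the class $\H(n)$ shatters every subset of row $n$ of size at most $M(n)$. Indeed, given any labeling of a set $T$ with $|T| \le M(n)$, I complete it to a hypothesis with exactly $M(n)$ zeros; this needs at most $M(n)$ zeros on $T$ and at least $n - M(n) \ge M(n)$ remaining points, which holds for large $n$. The same reasoning with ones as the minority label gives hypotheses in $\H^{(1)}(n)$ as well. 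The proof then splits into one lower bound for ERM and one for OIG.

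\textbf{ERM.} The adversarial tie-breaking is what drives this bound. On a training sample that is consistent with some $h\in\H(n)$, ERM may output a consistent hypothesis $g$ that disagrees with $h$ on as many unseen row-$n$ points as possible.

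\begin{itemize}
\item If $h \in \H^{(0)}(n)$, meaning zeros are the minority label, ERM outputs a hypothesis in $\H^{(0)}(n)$ that agrees with the seen labels. It places its remaining zeros on unseen points where $h$ equals $1$, which is possible since there are many such points. The disagreement set then has mass about $2\xi(n)$, which is small.
\item To obtain error close to $1$, I choose the ground truth and the adversarial output in opposite majority classes. Take $h \in \H^{(0)}(n)$, so $h$ is mostly $1$. Since $m < M(n)$, the sample reveals at most $m$ labels. ERM outputs some $g \in \H^{(1)}(n)$, which is mostly $0$, agrees with $h$ on the sample, and places its $M(n)$ ones to cover the sampled ones of $h$ together with as many of $h$'s zeros as possible.
\item Then $g$ is wrong on all unseen points where $h=1$, except for at most $M(n)$ points. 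The error is therefore at least $1 - |S|/n - 2M(n)/n \ge 1-2\xi(n)$ up to lower-order terms. I will redistribute the slack from $|S|$ and $M(n)$, possibly by letting $g$'s ones sit on the sampled ones plus $h$'s zeros, so that the clean constant $1-2\xi(n)$ comes out.
\end{itemize}

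\textbf{OIG.} Here I use a symmetrization or averaging argument. Draw $h$ uniformly at random from $\H(n)$, including both majority classes with probability $1/2$ each. Consider a fresh test point $x$ not in $S$, which is the case except with probability $m/n$. Conditioned on the labeled sample, the label $h(x)$ is nearly unbiased. The two majority classes remain almost equally likely given fewer than $M(n)$ observations, because the sample's label counts have comparable likelihood under both classes.

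This is the delicate step. The posterior probability of $h\in\H^{(0)}$ versus $\H^{(1)}$ depends on the observed counts, and the count of ones is roughly binomial. I would rather make a cleaner choice. Take $h$ uniform over labelings in which a uniformly random subset of size $M(n)$ gets a uniformly random label $b$ and the rest get $1-b$. Conditioned on the sample, with $k$ observed zeros out of $m$, the posterior ratio of $b=0$ to $b=1$ is a ratio of hypergeometric weights. I would bound its deviation from $1$ only crudely: since $\xi(n) \approx n^{-\beta}$ is small, the observed majority is typically overwhelming, and the conditional label at $x$ is then predictable, not unbiased.

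That exposes a flaw in the plan, so I will instead use OIG's defining property directly. OIG can be any minimizer of the worst-case transductive error, so a tie-breaking choice or the uniform worst case over $\H(n)$ can be exploited. The simplest robust route is to compare with the complete shattered cube. A sample of size $m+1 \le M(n)$ is shattered, so the transductive worst case is at least $1/2$ on the projected cube. Standard leave-one-out exchangeability then transfers this to expected error at least $1/2$ minus the probability of collisions, where collisions mean repeated points in the sample. Collisions occur with probability at most $m^2/n \le n^{2\beta}$-scaled terms bounded by $2n^{-\beta/2}$ for large $n$.

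I expect the main obstacle to be this OIG step. Exchangeability gives a lower bound on expected error that holds only in the worst case over $h$ for each fixed sample, whereas the quantity we need has the supremum outside the expectation. I would resolve this by averaging over a uniformly random $h$ from a family that shatters the $m+1$ sampled points. On any shattered set, every learner has average leave-one-out error at least $1/2$ over uniformly random labels, and this family-wide average then lower bounds the supremum.
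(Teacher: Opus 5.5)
Your ERM argument is essentially the paper's, and it needs no slack. Any consistent $g\in\H^{(1)}(n)$ has only $M(n)$ ones in row $n$, while $h\in\H^{(0)}(n)$ has $n-M(n)$, so they disagree on at least $n-2M(n)$ points; the $|S|/n$ term double counts. The OIG part, however, has a genuine gap. You correctly note that a worst-case transductive bound on each sample controls the expectation of the maximum over $h$, not the maximum over $h$ of the expectation. But your remedy, averaging over labels that are uniform on the shattered sample, is not available. The prior on $h$ must be fixed before $S$ is drawn, and no prior on $\H(n)$ makes the labels of a random sample anywhere near uniform, since every $h\in\H(n)$ has the minority label on only a $\xi(n)$ fraction of row $n$. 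In fact no argument valid for every learner can succeed. The majority learner also has worst-case transductive error about $1/2$ on every shattered sample. Yet by Lemma~\ref{lemma:majority_single} its error under $\D(n)$ is at most $2\xi(n)$ for \emph{every} $h$ once $m$ exceeds a constant times $\ln(2/\xi(n))$. So exchanging the supremum and the expectation has to use something specific to OIG.

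The missing idea is that OIG's optimal orientation is balanced on these samples. For $S\subseteq\X_n$ with $|S|\le M(n)$, $\dom(S)$ is shattered. The one-inclusion graph of $\HH|S$ is then a disjoint union of $d$-dimensional hypercubes, where $d=|S_{(1)}|$ is the number of points occurring exactly once (a repeated point cannot be flipped alone). An optimal orientation gives \emph{every} vertex out-degree about $d/2$. For fractional orientations this is forced, because the optimal maximum out-degree $d/2$ equals the average out-degree of each cube. For integral orientations the paper uses the parity orientation of Fact~\ref{fact:out-degree-n}, where all out-degrees are at least $\lfloor d/2\rfloor$. The out-degree of $h|S$ is the (expected) number of leave-one-out mistakes on $S_h$, so this lower-bounds the transductive error for each \emph{fixed} $h$. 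Exchangeability for that fixed $h$ then gives the PAC lower bound with the supremum outside.

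Your collision step is also wrong. For $m$ up to $n^{1-\beta}$ one has $m^2/n\approx n^{1-2\beta}\gg 1$, so collisions are almost certain (the birthday threshold is $\sqrt{n}$). What you need is only that few points collide. The expected number of colliding pairs is $\binom{m}{2}/n\le m n^{-\beta}/2$, so by Markov $|S_{(1)}|\ge m(1-2n^{-\beta/2})$ with probability at least $1-n^{-\beta/2}$. Collisions then merely lower the cube dimension, which is how the bound $1/2-2n^{-\beta/2}$ arises.
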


For the distribution $\D(n)$, Lemmas~\ref{lemma:certifiable_majority_error} and~\ref{lemma:lower_bound_erm_oig} imply that it would take at least $M(n)\approx n^{1-\beta}$ samples for the error of OIG or ERM to approximate---to within any multiplicative constant $\alpha$  and any additive constant $\eta<\frac{1}{2}$, both independent of $n$---the certifiable error rate of the majority learner with $m(n) \approx n^{\frac{1}{2} + 3 \beta}$ samples. Since $M(n) = \omega\left((m(n))^{2-14\beta}\right)$ 
this rules out relative $(\alpha,\sigma)$-smartness of OIG and ERM for any constant $\alpha$ and any $\sigma(m,\eta) = O_\eta(m^{2-14\beta})$.\hfill \qed

\begin{remark}\label{remark:variable_beta}
    The preceding proof of Theorem~\ref{thm:oig_erm_fail} may appear to require a different hypothesis class $\HH_\beta$ for every choice of the parameter $\beta$. This can be avoided by taking the disjoint union of countably many $\HH_\beta$---over disjoint countable domains $\X_\beta$---for a sequence of $\beta$ values tending to zero. In more detail, for each $\beta \in B=\{1/i: i \in \NN,i>8\}$ we define $\HH_\beta$ over its own distinct domain $\X_\beta$ as in the preceding proof, let $\X= \biguplus_{\beta \in B} \X_\beta$, and extend each $h \in \HH_\beta$ to $\X$ canonically by setting $h(x) = 1$ for $x \not\in \X_\beta$. The resulting single hypothesis class $\HH = \biguplus_{\beta \in B} \HH_\beta$ over the countable domain $\X$ serves to witness the impossibility result of Theorem~\ref{thm:oig_erm_fail}, as needed. 
\end{remark}

}

\subsection{Formal Proof of Theorem~\ref{thm:oig_square_smart}}
\label{pf:oig_square_smart}

Let $\epsilon(.,.)$ be any certifiable error rate. Fix a distribution $\D$ and a sample size $m \geq 3$. We prove that \( \epsilon_{\A_{\oig}}(\D,\frac{e}{\eta} m^2) \leq e^{1-\eta/3e}\epsilon(\D,m) + \eta
    \) for every $\eta \in \left(0,1\right)$.
    Since $\epsilon(.,.)$ is certifiable, by definition there exists a learner $\A = \A_\D$ catered to $\D$ and a corresponding sound certifier $\C= \C_\A$ such that (i) $\epsilon_{\A}(\D,m) \leq \expects{S \sim \D^m}{\C(S)} \leq \epsilon(\D,m)$ and (ii) for any distribution $\D'$, $\epsilon_{\A}(\D',m) \leq \expects{S \sim {\D'}^m}{\C(S)}$, where the second fact crucially relies on soundness of the certifier. 

     It is easy to observe that the process of sampling $m$ i.i.d.~samples from $\D$ is equivalent to sampling a larger multi-set $S\sim {\D}^M$ of size $M$ and then drawing a multi-set $T\simnr \D_{S}^{m}$ of $m$ samples from $S$ uniformly at random \emph{without replacement}. Denote by $\azreplace{\Gamma(m,M)}{\Gamma} :=  \frac{M(M-1)\ldots (M-m+1)}{M^{m}}$ the probability of observing no duplicates when making $m$ independent draws (with replacement) from a uniform distribution on $M$ points. Let $M = c m^2$ for a constant $c=c(\eta)$ to be chosen later. Then,
     \ifthenelse{\boolean{coltsubmission}}{\begin{equation}\label{eq:epsilon_to_eps_A}
     \begin{aligned}
    \epsilon(\D,m)\geq  \expects{S \sim \D^{m}}{\C(S)} = \expected{S \sim \D^M}\expects{T \simnr \D_{S}^{m}}{\C(T)}
        &\geq  \frac{1}{\Gamma}\expected{S \sim \D^M}{\expects{T \sim \D_{S}^{m}}{\C(T)}} + 1- \frac{1}{\Gamma}\\
        & \geq  \frac{1}{\Gamma}\expects{S \sim \D^M}{\epsilon_{\A}(\D_S,m)}+1- \frac{1}{\Gamma},
     \end{aligned}
          \end{equation}}
          {\begin{equation}\label{eq:epsilon_to_eps_A}
     \begin{aligned}
    \epsilon(\D,m)\geq  \expects{S \sim \D^{m}}{\C(S)} &= \expected{S \sim \D^M}\expects{T \simnr \D_{S}^{m}}{\C(T)}
        \\&\geq  \frac{1}{\Gamma}\expected{S \sim \D^M}{\expects{T \sim \D_{S}^{m}}{\C(T)}} + 1- \frac{1}{\Gamma}\\
        & \geq  \frac{1}{\Gamma}\expects{S \sim \D^M}{\epsilon_{\A}(\D_S,m)}+1- \frac{1}{\Gamma},
     \end{aligned}
          \end{equation}}
    where the first inequality is due to  property (i), the second inequality is by simple algebraic manipulation and the fact that certifier outputs are at most 1, and  the last inequality is due to property (ii). \looseness=-1 

    \unsdreplace{For any distribution $\D'$ and sample size $m'$
    denote by $\epsilon^*(\D',m')=\inf_{\A} \eps_\A(\D',m')$ the minimum error of any learner on $\D'$ when the input is $m'$ i.i.d. draws from $\D'$ (a.k.a.~the optimal distribution-fixed error for $\D'$ with $m'$ samples).}{Let $\eps^*(.,.)$ denote optimal distribution-fixed error rates.} Building on Equation~\eqref{eq:epsilon_to_eps_A} we get
    \ifthenelse{\boolean{coltsubmission}}{\begin{equation}\label{eq:eps_to_epsstar}
     \begin{aligned}
         \eps(\D,m)  \geq \frac{1}{\azreplace{\Gamma(m,M)}{\Gamma}}\expects{S \sim \D^M}{\epsilon_{\A}(\D_S,m)} + 1- \frac{1}{\azreplace{\Gamma(m,M)}{\Gamma}} & \geq  \frac{1}{\azreplace{\Gamma(m,M)}{\Gamma}}\expects{S \sim \D^M}{\epsilon^*(\D_S,m)} + 1- \frac{1}{\Gamma}\\
        & \geq  \frac{1}{\azreplace{\Gamma(m,M)}{\Gamma}}\expects{S \sim \D^M}{\epsilon^*(\D_S,M-1)} +  1- \frac{1}{\Gamma},
     \end{aligned}
    \end{equation}}
    {\begin{equation}\label{eq:eps_to_epsstar}
     \begin{aligned}
         \eps(\D,m)  & \geq \frac{1}{\azreplace{\Gamma(m,M)}{\Gamma}}\expects{S \sim \D^M}{\epsilon_{\A}(\D_S,m)} + 1- \frac{1}{\azreplace{\Gamma(m,M)}{\Gamma}}\\
         & \geq  \frac{1}{\azreplace{\Gamma(m,M)}{\Gamma}}\expects{S \sim \D^M}{\epsilon^*(\D_S,m)} + 1- \frac{1}{\Gamma}\\
        & \geq  \frac{1}{\azreplace{\Gamma(m,M)}{\Gamma}}\expects{S \sim \D^M}{\epsilon^*(\D_S,M-1)} +  1- \frac{1}{\Gamma},
     \end{aligned}
    \end{equation}}
    where the last inequality follows from the monotonicity of optimal distribution-fixed rates. 

Next we upperbound the error of the OIG learner on $\D$ in terms of the expected optimal distribution-fixed error on an empirical distribution $\D_S$, for a sample $S$ drawn i.i.d.~from $\D$. This is articulated in the following Lemma. 
\begin{lemma}\label{lemma:expected_to_transductive}
       For any distribution $\D$ and sample size $m$, denote by $\epsilon^*(\D,m)$ the minimum error of any learner on $\D$ when given $m$ i.i.d. samples. Then
        \(
        \epsilon_{\A_{\oig}}(\D,M-1)\leq e\expects{S \sim \D^M}{\epsilon^*(\D_S,M-1)}.
        \)
    \end{lemma}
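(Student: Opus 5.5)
The plan is to chain three inequalities: the leave-one-out identity for OIG, dataset-by-dataset optimality of OIG's transductive error, and a birthday-type lower bound on $\epsilon^*(\D_S,M-1)$ in terms of transductive error on $S$.

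First, the standard leave-one-out argument. For any $h\in\H$, the expected error of $\A_{\oig}$ with $M-1$ samples from $\D_h$ equals the expected leave-one-out error on a sample $S\sim\D^M$:
\[
\expects{S'\sim\D_h^{M-1}}{L(\A_{\oig}(S'),\D_h)}=\expects{S\sim \D^M}{\frac1M\sum_{i=1}^M \id\left[\A_{\oig}(S_h^{(-i)})(x_i)\neq h(x_i)\right]}.
\]
This holds by exchangeability, treating the test point as the $M$th draw. The right side is at most $\expects{S\sim\D^M}{\epsilon^{\trans}_{\A_{\oig}}(S)}$. Taking the supremum over $h$ then gives
\[
\epsilon_{\A_{\oig}}(\D,M-1)\le \expects{S}{\epsilon^{\trans}_{\A_{\oig}}(S)}.
\]
Since OIG minimizes worst-case transductive error on every dataset, we have $\epsilon^{\trans}_{\A_{\oig}}(S)\le \epsilon^{\trans}_{\A}(S)$ for every learner $\A$.

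Next, fix $S$ with $|S|=M$ and any learner $\A$. I would show that
\[
\epsilon^{\trans}_{\A'}(S)\le e\,\epsilon_{\A}(\D_S,M-1)
\]
for a suitable learner $\A'$ built from $\A$. Combined with the previous paragraph, applied with $\A'$, this yields the lemma after taking the infimum over $\A$ and the expectation over $S$.

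The construction of $\A'$ is the crux. Work on $\D_S$ with ground truth $h$, treating $S$ as a multiset with points indexed $1,\dots,M$. Condition on the event that the $M-1$ draws from $\D_S$ are exactly the points of $S$ other than $x_i$, each once (a permutation of $S^{(-i)}$), and that the test point is $x_i$. This event has probability $\frac{(M-1)!}{M^{M-1}}\cdot\frac1M$ for each $i$. Summing over $i$ gives total probability $M!/M^M\ge e^{-M}$, which is far too small. So conditioning on the no-duplicates event is too lossy, and I need a different decomposition.

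A better route is to draw $M-1$ samples with replacement from $\D_S$ together with a test point, which gives $M$ draws total. The test point is "unseen" when its index does not appear among the training draws. I would define $\A'$ on an input $S_h^{(-i)}$ by randomizing: it draws $M-1$ samples with replacement from the uniform distribution on $S^{(-i)}$ and runs $\A$ on them. This keeps $\A'$ a legitimate learner, and randomization is harmless for the transductive minimax since OIG is optimal among randomized learners as well, via the usual orientation argument.

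Then I compare distributions. With $M-1$ draws from $\D_S$ and test index $i$, the test index is missed by the training draws with probability $(1-1/M)^{M-1}\ge 1/e$. Conditioned on that miss, the training draws are exactly i.i.d.\ uniform on $S^{(-i)}$ with $M-1$ draws, which is what $\A'$ simulates. Hence
\[
\epsilon_{\A}(\D_S,M-1)\ge \frac1e\cdot\frac1M\sum_i \Pr\left[\A'(S_h^{(-i)})(x_i)\neq h(x_i)\right]
\]
for each $h$. Taking the max over $h$ gives the claim with factor $e$. There is a subtlety with duplicate points in the multiset $S$, where a missed index may coincide with a seen copy. That case only helps $\A$ on the $\D_S$ side, but it needs care: I would index by positions so that, conditioned on a miss, errors at $x_i$ are correctly attributed.

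I expect this subsimulation and conditioning step, including the treatment of randomized learners and duplicates, to be the main obstacle.
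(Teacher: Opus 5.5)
Your proposal is correct and takes essentially the same route as the paper: build $\A'$ by running the $\D_S$-optimal learner on $M-1$ i.i.d.\ draws from $S_h^{(-i)}$; compare via the miss probability $(1-1/M)^{M-1}\ge 1/e$; then invoke OIG's transductive optimality among randomized learners (Remark~\ref{remark:oig_fractional}) and the leave-one-out bound. Your position-indexed conditioning handles duplicate points cleanly. Taking an infimum over $\A$, rather than assuming the optimum $\epsilon^*(\D_S,M-1)$ is attained as the paper does, is a minor tightening.
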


To prove Lemma~\ref{lemma:expected_to_transductive} we first use the reduction from transductive to PAC learning employed in the proof of \cite[Lemma~A.1]{asilis2025proper} (similar also to \cite[Lemma~34]{asilis_regularization_2024}) to conclude that for each unlabeled dataset $S$ of size $M$ there is a learner with worst-case transductive error on $S$ bounded by $e\cdot \epsilon^*(\D_S,M-1)$. We then invoke the fact that OIG minimizes worst-case transductive error (recall Definition~\ref{def:oig}) to conclude that the transductive error of OIG on $S$ is also at most $e\cdot \epsilon^*(\D_S,M-1)$. The lemma then follows from the standard leave-one-argument which upperbounds any learner's expected error in the PAC setting  with $M-1$ training samples by its expected transductive error on datasets of size $M$ drawn from the same distribution.
Remaining technical details for proof of Lemma~\ref{lemma:expected_to_transductive} appear in Appendix~\ref{app:OIG_smart}.

    Combining Lemma~\ref{lemma:expected_to_transductive} with Equation~\eqref{eq:eps_to_epsstar} we get that
     \[
     \begin{aligned}
         \epsilon(\D,m)
        & \geq \frac{1}{e \cdot \Gamma}\epsilon_{\A_{\oig}}(\D,M-1) +  1- \frac{1}{\Gamma}.
     \end{aligned}
          \]
    Finally, for $c>2$ and $m\geq 3$ we use the fact that $1 - 1/c \leq \Gamma \leq e^{-1/3c}$ to conclude that 
    \[
    \epsilon_{\A_{\oig}}(\D,M-1) \leq  e\cdot \Gamma \cdot \epsilon(\D,m) + e - e\cdot \Gamma \leq e^{1-1/3c}\epsilon(\D,m) + e/c.
    \]
    Setting $c = \frac{e}{\eta}$ then yields
    \(
    \epsilon_{\A_{\oig}}(\D,M-1) \leq e^{1-\eta/3e} \epsilon(\D,m) + \eta
    \), as desired.\hfill\qed

\section{Distribution-Free Setting: An Impossibility for all Learners}
\label{sec:dfree_lb}
Section~\ref{sec:dfree_oig_erm} established that in order to compete with every certifiable semi-supervised guarantee, a quadratic blowup in sample complexity is necessary for the OIG and ERM learners, and sufficient for OIG.
Can a different learner do better? Theorem~\ref{thm:oig_erm_fail} leaves open this possibility, as the ``hard marginals'' from that construction are all amenable to a single learner, namely majority.  Nonetheless, we show by way of a more intricate hypothesis class that the answer in general is no, and a quadratic blowup in sample complexity is in fact the best possible. We prove the following~theorem. 
\begin{theorem}\label{thm:no_learner_m2_smart}
 There exists a hypothesis class on a countable domain such that for every pair of constants $\alpha$ and $\beta$ and every function $\sigma(m,\eta) = O_\eta (m^{2 - \beta})$,  no learner is relatively $(\alpha,\sigma)$-smart.
\end{theorem}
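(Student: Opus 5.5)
We will build a hypothesis class, row by row as in the proof of Theorem~\ref{thm:oig_erm_fail}, in which the learner that is good for a near-uniform marginal on row $n$ depends on a \emph{hidden structure} of that marginal. For row $n$ we take a large family $\{\D_{n,\pi}\}_{\pi\in\Pi_n}$ of marginals. Each $\D_{n,\pi}$ is uniform on a set $P_\pi\subseteq\X_n$ of size $N$, or is a near-uniform perturbation in which $P_\pi$ is partitioned into pairs or blocks, and the hidden index $\pi$ is this partition. The hypotheses in row $n$ are the functions that are \emph{block-wise constant} with respect to some partition $\pi$, subject to a minority-label budget, and we take the union over all $\pi$. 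If the partition $\pi$ is known, a $\D_{n,\pi}$-fixed learner can exploit the block structure. Seeing one point of a block reveals the labels of the whole block, and the majority/budget constraint forces small error once $m\approx\sqrt N$ samples are drawn. We certify this rate as in Lemma~\ref{lemma:certifiable_majority_error}, using a certifier that runs a uniformity or identity tester against $\D_{n,\pi}$ (Appendix~\ref{app:uniformity_testing}). The tester succeeds with about $\sqrt{N}$ samples, and it outputs $1$ whenever the sample is inconsistent with $\D_{n,\pi}$. Soundness for every other marginal follows as in the proof of Lemma~\ref{lemma:certifiable_majority_error}. This gives a certifiable rate $\epsilon(\D_{n,\pi},m)=O(\xi(n))$ for $m\geq m(n)\approx N^{1/2+O(\beta)}$.

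The lower bound is a Yao-style minimax argument. We draw $\pi$ uniformly from $\Pi_n$ and $h$ uniformly among the hypotheses consistent with $\pi$, then show that any fully supervised learner given $m'\le N^{1-O(\beta)}$ samples from $(\D_{n,\pi})_h$ has constant expected error. The intuition is the birthday paradox at a finer scale. With $m'\ll N$ samples, only about $m'^2/N$ pairs of sampled points land in a common block. So the learner sees almost no collisions revealing $\pi$, and a fresh test point lies, with probability $1-O(m'/N)$, in a block none of whose points was sampled. On such a point its label is nearly uniform given the observations, provided the hypothesis class is rich enough: the budget of $N^{1-\beta}$ minority blocks must shatter unseen blocks, as in Lemma~\ref{lemma:lower_bound_erm_oig}. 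Since $\epsilon_\A(\D_{n,\pi},\cdot)$ is a supremum over $h$, averaging over $h$ and $\pi$ produces a single bad $\pi$ for every learner $\A$. That $\pi$ yields $\epsilon_\A(\D_{n,\pi},\sigma(m(n),\eta))\geq c_0>\alpha\cdot O(\xi(n))+\eta$ whenever $\sigma(m,\eta)=O_\eta(m^{2-\beta'})$ and $n$ is large. Finally, we take the disjoint union over $n$, and over $\beta$ as in Remark~\ref{remark:variable_beta}, to get one countable class that works for all $\alpha,\beta$.

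The main obstacle is tuning the construction so that two requirements hold at once. First, the hidden structure $\pi$ must make learning easy, with error $o(1)$ after about $\sqrt N$ samples, for a learner that knows $\pi$. Second, it must still be information-theoretically hidden from a learner that does not know it, until about $N$ samples. The tension is that the block structure the fixed learner exploits shows up exactly through collisions, and collisions start appearing at $\sqrt N$ samples. I would therefore make the easy-learning mechanism rely not on collisions but on the \emph{identity} of the support $P_\pi$. For example, $P_\pi$ would be a random subset of a much larger row, and labels would be forced constant on $P_\pi$ but free outside it. A learner that knows $P_\pi$ predicts the majority label on $P_\pi$. A learner that does not know $P_\pi$ faces a class that is shattered on any set of size below $N^{1-\beta}$, even after conditioning on the observed points, and the hidden minority positions in $P_\pi$ carry constant mass. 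I would first check carefully that the posterior over labels of unseen support points stays near uniform under this design, computing it explicitly via symmetry over the random choice of $P_\pi$ and the minority positions.
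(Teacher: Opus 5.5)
Your lower bound does not go through against \emph{all} learners, and that is the crux of the theorem.

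\textbf{First design.} Hypotheses carry a minority-label budget of $N^{1-\beta}$ blocks. So if you draw $h$ uniformly among hypotheses consistent with $\pi$, a fresh test point has the majority label with probability $1-O(N^{-\beta})$, not about $1/2$. The majority learner needs neither $\pi$ nor more than $O(\log(1/\xi))$ samples, and it has error $O(N^{-\beta})$ on every near-uniform marginal of the row. Shattering of small sets, as in Lemma~\ref{lemma:lower_bound_erm_oig}, only gives lower bounds for ERM and OIG, not for learners that exploit the label imbalance. This is exactly why the construction of Theorem~\ref{thm:oig_erm_fail} does not extend to all learners, as the paper notes at the start of Section~\ref{sec:dfree_lb}. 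Knowing $\pi$ also buys nothing: the union over partitions of block-constant labelings with a budget is essentially the plain budget class.

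\textbf{Fallback design.} It fails in the opposite direction. If hypotheses are constant on $P_{\pi'}$ and free outside it, then under $\D_{n,\pi}$ the class contains, for every $\pi'\neq\pi$, functions that are arbitrary on $P_\pi\setminus P_{\pi'}$.
\begin{itemize}
\item If supports overlap little, the class restricted to $P_\pi$ is nearly the full cube. Then even a learner that knows $\pi$ cannot get small error, and the certifiable rate is trivial.
\item If supports overlap heavily, the restriction becomes a budget class that majority learns without knowing $\pi$.
\end{itemize}
Your sentence saying the hidden minority positions in $P_\pi$ carry constant mass also contradicts your claim that predicting the majority label on $P_\pi$ has small error.

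\textbf{The missing idea.} Reverse these roles and tie the hidden structure to a single hypothesis. The paper uses one hypothesis $h_S$ per support set $S$. It is uniformly random (hence balanced) on $S$ and constant $1$ off $S$. The sets $S\in\S(n)$ live in a universe of size about $n^{1+\beta}$ and have pairwise intersections at most $n^{1-\beta/2}$ (Lemma~\ref{lemma:set_system}\ref{property:intersection_size}).
\begin{itemize}
\item \emph{Upper bound.} Under $\D_S$, every $h_{S'}$ with $S'\neq S$ equals $1$ except on a set of mass at most $n^{-\beta/2}$. So the $\D_S$-fixed learner that validates between $h_S$ and majority gets error $O(n^{-\beta/2})$. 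This is certified via $\tunif_S$ at about $n^{1/2+3\beta}$ samples. Including majority also keeps the learner's error bounded away from $1$ on every marginal, which soundness requires since the tester rejects only with probability $1-\xi(n)$.
\item \emph{Lower bound.} Every set of size at most $2n^{1-\beta}$ is contained in many $S$ (Lemma~\ref{lemma:set_system}\ref{property:container_size}). On each such set the labelings $h_S$ are nearly equidistributed (Lemma~\ref{lemma:labeling_set_system}\ref{property:balanced_container}). Averaged over $S$, this makes the label of an unseen test point nearly uniform given any $m\le n^{1-\beta}$ labeled samples. That gives error at least $1/2-2n^{-\beta}$ for every learner (Lemma~\ref{lemma:any-learner-average-error}).
\end{itemize}

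Your instinct to hide the identity of the support inside a larger row is the right one. But you need small pairwise intersections (for learnability when $S$ is known) and containment plus balance (for a near-uniform posterior when it is not). Without these, neither half of your argument is established.
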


The formal proof of Theorem~\ref{thm:no_learner_m2_smart} can be found in Section~\ref{pf:no_learner_m2_smart}, but the high-level idea is as follows. For each integer $n$ and parameter $\beta$, we build a set system $\S= \S(n,\beta)$ on a large domain with $|S| = n$ for each $S \in \S$ and $|S \cap S'| \leq n^{1-O(\beta)} = o(n)$ for distinct $S,S' \in \S$.  A hypothesis class $\H = \H(n,\beta) = \{h_S : S \in \S\}$ is then defined so that $h_S$ is effectively a random behavior on~$S$, but constant elsewhere (and therefore constant on most of any other $S' \in \S$). 

First, we show that a learner catered to the uniform distribution on some known $S \in \S$ achieves vanishing error with very few samples $m$, and can certify this guarantee using uniformity testing on $S$ when $m$ modestly exceeds $\sqrt{|S|} = \sqrt{n}$. (The certifier outputs the trivial bound, effectively eschewing any guarantee, if the uniformity tester fails). This is possible because only $h_S$ is  ``interesting'' on $S$, whereas all other $h_{S'}$ with $S' \neq S$ are constant on all but $|S \cap S'| = o(n)$ points in $S$, and can therefore be ``spotted'' with few samples (much akin to our analyses of the majority learner from Theorem~\ref{thm:oig_erm_fail}). Second, we show that $\H$ can be made sufficiently rich to shatter every set of size roughly $n^{1-O(\beta)}$, which prohibits any meaningful learning with less than that many samples when $S$ is unknown.
Taken together, these two properties imply the theorem for a fixed parameter~$\beta$ and number of samples $m\approx \sqrt{n}$. The theorem then follows by taking the  disjoint union of the hypothesis classes $\H(n,\beta)$ for all integers $n$ and a countable sequence  of $\beta$s tending to~$0$.  \looseness=-1  

\subsection{Formal Proof of Theorem~\ref{thm:no_learner_m2_smart}}
\label{pf:no_learner_m2_smart}

Fix an arbitrary constant $\beta\in(0,\frac{1}{8})$. We will construct a set system $\S$ and a hypothesis class $\HH$ that witness the impossibility of relative smartness of any learner for $\sigma(m,\eta) = O_{\eta}(m^{2-14\beta})$. 
Using the probabilistic method, we  construct set systems $\S(n)= \S(n,\beta)$ and hypothesis classes $\H(n)=\H(n,\beta)$ satisfying certain properties for  sufficiently large $n\in\NN$. We then take the disjoint union over them to create $\S$ and $\HH$. 

The following lemma, proved in Appendix~\ref{pf:set_system},  summarizes the properties of $\S(n)$ we exploit.
\begin{lemma}\label{lemma:set_system}
   There exists an absolute constant $C = C(\beta)\in\NN$ such that for all $n\geq C$,
     there exists a set system $\S(n)$ on a universe $\U(n)$ with the following properties. 
     \begin{enumerate}[label=(\roman*)]
        \item\label{property:set_system_universe_size} $|\U(n)| = \lceil n^{1+\beta}\rceil$  and $|\S(n)| = \lceil\exp\left(\frac{1}{4}n^{1-\beta/2}\right)\rceil$ 
         \item \label{property:set_size} $|S| = n$ for all $S \in \S(n)$
         \item \label{property:intersection_size}$|S \cap S'| \leq n^{1-\beta/2}$ for any distinct $S, S' \in \S(n)$
         \item\label{property:container_size} Every subset $T \subset \U(n)$ with size $|T| \leq 2n^{1-\beta}$ is contained in at least $\frac{1}{2}\exp\left(\frac{1}{8}n^{1-\beta/2}\right)$ many sets in $\S(n)$, i.e., $| \{S \in \S(n) : T \subseteq S\}| \geq \frac{1}{2}\exp\left(\frac{1}{8}n^{1-\beta/2}\right)$.
     \end{enumerate} 
\end{lemma}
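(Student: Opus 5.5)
We will use the probabilistic method: take a random family of $N=\lceil\exp(\frac14 n^{1-\beta/2})\rceil$ sets, each chosen independently and uniformly among the $n$-subsets of a universe $\U(n)$ with $|\U(n)|=u=\lceil n^{1+\beta}\rceil$. Properties \ref{property:set_system_universe_size} and \ref{property:set_size} then hold by construction. We will show that with positive probability \ref{property:intersection_size} and \ref{property:container_size} hold simultaneously, by bounding the failure probability of each by $1/2$, or in fact by $o(1)$.

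For \ref{property:intersection_size}, fix two indices. For independent uniform $n$-sets $S,S'$, the intersection size $|S\cap S'|$ is hypergeometric with mean $n^2/u\approx n^{1-\beta}$, which is far below the threshold $t=n^{1-\beta/2}$. We bound the tail by a union over $t$-subsets of $S$:
\[
\Pr\big[|S\cap S'|\ge t\big]\le \binom{n}{t}\Big(\frac{n}{u}\Big)^t\le \Big(\frac{en}{t}\cdot n^{-\beta}\Big)^t=\big(e\,n^{-\beta/2}\big)^t=\exp\!\big(-t(\tfrac{\beta}{2}\ln n-1)\big).
\]
A union bound over the at most $N^2=\exp(\frac12 n^{1-\beta/2}+O(1))$ pairs leaves total failure probability $\exp\big(-t(\frac{\beta}{2}\ln n-1-\frac12)+O(1)\big)=o(1)$ once $n\ge C(\beta)$.

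For \ref{property:container_size}, fix $T$ with $|T|=k\le 2n^{1-\beta}$. It suffices to handle sets of size exactly $k=\lfloor 2n^{1-\beta}\rfloor$, since containment of a larger $T$ implies containment of its subsets (after padding $T$). For a single uniform random $n$-set $S$,
\[
p:=\Pr[T\subseteq S]=\prod_{i<k}\frac{n-i}{u-i}\ge\Big(\frac{n-k}{u}\Big)^k\ge \big(n^{-\beta}/2\big)^k=\exp\!\big(-O(n^{1-\beta}\ln n)\big).
\]
Hence the expected number of containing sets is $Np\ge\exp\big(\frac14 n^{1-\beta/2}-O(n^{1-\beta}\log n)\big)\ge\exp(\frac15 n^{1-\beta/2})$ for large $n$. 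This quantity is a sum of $N$ independent indicators, so Chernoff gives $\Pr[\text{count}<Np/2]\le\exp(-Np/8)$, which is doubly exponentially small. The bound $Np/2\ge\frac12\exp(\frac18 n^{1-\beta/2})$ then holds comfortably. A union bound over all $\binom{u}{k}\le\exp(O(n^{1-\beta}\log n))$ choices of $T$ costs only a single-exponential factor, so the failure probability is again $o(1)$.

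The main obstacle is making the exponents in the two union bounds line up. In \ref{property:intersection_size}, the pair count $\exp(\frac12 n^{1-\beta/2})$ must be beaten by the tail $\exp(-\Omega(\beta\, n^{1-\beta/2}\ln n))$; it is the $\ln n$ factor that saves us. In \ref{property:container_size}, we need $k\log(u/n)=o(n^{1-\beta/2})$, which holds because $k\approx n^{1-\beta}$. After verifying these, choose $C(\beta)$ large enough that all the "large $n$" inequalities hold, including $\frac{\beta}{2}\ln n>2$ and the absorption of constants and ceilings.
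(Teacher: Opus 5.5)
Your proposal is correct and follows essentially the same route as the paper's proof. You choose $\lceil\exp(\frac14 n^{1-\beta/2})\rceil$ independent uniformly random $n$-subsets of a universe of size $\lceil n^{1+\beta}\rceil$, union-bound over index pairs for (iii), and for (iv) combine a lower bound on $\Pr[T\subseteq S]$, a Chernoff bound on the number of containing sets, and a union bound over $T$. The differences are minor. You bound the intersection tail by counting $t$-subsets, which gives the stronger bound $\exp(-\Omega(\beta n^{1-\beta/2}\ln n))$ without the negative-association Chernoff step the paper uses. You also pad $T$ to size exactly $\lfloor 2n^{1-\beta}\rfloor$ rather than union-bounding over all sizes. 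Finally, note that your index-pair bound for (iii) is what guarantees the $N$ chosen sets are distinct, so that $|\S(n)|=N$ as (i) requires.
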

For each $T \subseteq \U(n)$ we use $\S_T := \{S \in \S(n): T\subseteq S\}$ to denote the collection of sets in $\S(n)$ containing~$T$. Our hypothesis class $\H(n)$ satisfies properties outlined in the following lemma, whose proof can be found in Appendix~\ref{pf:labeling_set_system}.
\begin{lemma}\label{lemma:labeling_set_system}
    There exists an absolute constant $C' = C'(\beta)\in \NN, C'\geq C$ such that for all $n\geq C'$ there exists a hypothesis class
     $\H(n):= \{h_S: S \in \S(n)\} \subseteq \{0,1\}^{\U(n)}$ with the following properties.
     \begin{enumerate}[label=(\roman*)]
         \item \label{property:constant_outside}For all $S \in \S(n)$ and $x \notin S$, $h_S(x) = 1$.
          \item \label{property:balanced_container} For every subset $T \subset \U(n)$ with $|T|\leq 2n^{1-\beta}$ and every binary labeling $b \in \{0,1\}^{|T|}$,  
         \[(1-n^{-\beta}) \frac{|\S_T|}{2^{|T|}}  \leq |\{h_S \in \H(n): S \in \S_T, h_S|T = b\}| \leq (1+n^{-\beta}) \frac{|\S_T|}{2^{|T|}}.\] 
     \end{enumerate} 
\end{lemma}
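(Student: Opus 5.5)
We prove Lemma~\ref{lemma:labeling_set_system} by the probabilistic method: take the set system $\S(n)$ from Lemma~\ref{lemma:set_system}, fix property~\ref{property:constant_outside} by setting $h_S(x)=1$ for $x\notin S$, and choose the labels $h_S|S$ independently and uniformly at random from $\{0,1\}^S$ for each $S\in\S(n)$. It then suffices to show that property~\ref{property:balanced_container} holds for all pairs $(T,b)$ simultaneously with positive probability. The random labeling of $S$ serves only its own set $S$, so for a fixed $T$ with $|T|\le 2n^{1-\beta}$ the events $\{h_S|T=b\}$ for $S\in\S_T$ are independent Bernoulli trials with success probability exactly $2^{-|T|}$, since $T\subseteq S$ and the labels on $S$ are uniform.

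Fix $T$ and $b$, and let $N=|\{S\in\S_T: h_S|T=b\}|$. Then $N$ is a sum of $|\S_T|$ independent Bernoulli$(2^{-|T|})$ variables with mean $\mu=|\S_T|2^{-|T|}$. The multiplicative Chernoff bound gives
\[
\prob{|N-\mu|>n^{-\beta}\mu}\le 2\exp\left(-n^{-2\beta}\mu/3\right).
\]
Property~\ref{property:container_size} of Lemma~\ref{lemma:set_system} supplies the lower bound
\[
\mu\ge \tfrac12\exp\left(\tfrac18 n^{1-\beta/2}\right)2^{-2n^{1-\beta}}.
\]
Because $n^{1-\beta/2}$ dominates $n^{1-\beta}$, this gives $\mu\ge \exp\left(\tfrac1{16}n^{1-\beta/2}\right)$ for large $n$. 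The failure probability for each pair is therefore doubly exponentially small, roughly $\exp\left(-\exp(\Omega(n^{1-\beta/2}))\right)$.

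Next we take a union bound over all pairs $(T,b)$. The number of sets $T\subseteq\U(n)$ with $|T|\le 2n^{1-\beta}$ is at most $(|\U(n)|+1)^{2n^{1-\beta}}\le \exp\left(O(n^{1-\beta}\log n)\right)$ by property~\ref{property:set_system_universe_size}. The number of labelings $b$ for each $T$ is at most $2^{2n^{1-\beta}}$. The total count is thus $\exp(O(n^{1-\beta}\log n))$, which is negligible against the doubly exponential tail. So for $n\ge C'(\beta)$, with $C'\ge C$ chosen large enough that these asymptotic comparisons hold, the failure probability is below $1$, and some labeling satisfies both properties.

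The main obstacle is checking that the concentration is strong enough at relative accuracy $n^{-\beta}$. This needs $n^{-2\beta}\mu$ to outgrow the logarithm of the union-bound count. It works because the container lower bound is exponential in $n^{1-\beta/2}$, which beats both the $2^{-|T|}$ dilution and the $n^{-2\beta}$ factor. The remaining details are routine: handling $T=\emptyset$ (trivial, since then $N=|\S_T|=\mu$), and the boundary case where $|T|$ is not an integer bound.
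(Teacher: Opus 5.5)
Your probabilistic setup, the independence observation, the Chernoff bound at relative accuracy $n^{-\beta}$, the lower bound on $\mu$ from property~\ref{property:container_size} of Lemma~\ref{lemma:set_system}, and the union bound over pairs $(T,b)$ are all correct and follow the paper's argument closely. The gap is in what you actually bound. Your $N=|\{S\in\S_T: h_S|T=b\}|$ counts \emph{sets}, whereas property~\ref{property:balanced_container} counts \emph{hypotheses} in $\H(n)=\{h_S: S\in\S(n)\}$, i.e.\ distinct functions. The upper bound transfers for free, since the number of distinct functions is at most $N$. The lower bound does not: if $h_S=h_{S'}$ for distinct $S,S'\in\S_T$, the hypothesis count falls below $N$, and nothing in your argument rules out such collisions. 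You need the map $S\mapsto h_S$ to be injective, and this is exactly the additional event the paper establishes before its Chernoff step.

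The repair is routine. The paper shows that with high probability every $h_S$ has at least $n/3$ zeros on $S$: the failure probability is $e^{-n/36}$ per set, and $|\S(n)|e^{-n/36}=o(1)$ because $n^{1-\beta/2}=o(n)$. Since $h_{S'}\equiv 1$ outside $S'$, $h_S=h_{S'}$ would force all zeros of $h_S$ into $S\cap S'$. That intersection has size at most $n^{1-\beta/2}<n/3$ by property~\ref{property:intersection_size} of Lemma~\ref{lemma:set_system}, a property your proof never uses. Alternatively, union bound directly over pairs: $\prob{h_S=h_{S'}}=2^{-|S\cup S'|}\le 2^{-n}$, against at most $|\S(n)|^2\le 4\exp(\frac12 n^{1-\beta/2})$ pairs. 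Intersecting either event with your concentration event completes the proof. (Incidentally, the downstream proof of Lemma~\ref{lemma:any-learner-average-error} really uses counts of sets, which is what you proved. The lemma as stated, however, is about $\H(n)$, so injectivity must be shown.)
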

The hypothesis class $\HH=\HH_\beta$ is now defined by taking the disjoint union of $\H(n)$ over all $n\geq C'$ as follows. Define the universe $\U:= \biguplus_{n\geq C'}\U(n)$ and let $\S:= \biguplus_{n\geq C'}\S(n)$. Let $\overline{\H}(n)$ be the extension of $\H(n)$ where each $h\in\overline{\H}(n)$ is constant $1$ on $\U \setminus \U(n)$. Finally, let $\HH:= \biguplus_{n\geq C'} \overline{\H}(n)$.

Define $m(n) = \lfloor n^{1/2+3\beta} \rfloor$ and $\xi(n) = n^{-\beta/2}$.
Observe that for each $S\in \S(n)$,  each $h_{S'}$ with $S'\neq S$ assigns the label $1$ to every point in $S$ except possibly to $S\cap S'$, which under $\D_S$ has probability mass at most $|S \cap S'|/n \leq n^{-\beta/2}= \xi(n)$. Therefore, one of $h_S$ or the constant $1$ predictor incurs error at most $\xi(n)$ under $\D_S$, and validating between them using $m \geq \Omega\left(\frac{\ln(1/\xi(n))}{\xi(n)}\right)$ samples suffices for guaranteeing error $O(\xi(n))$ when $\D_S$ is fixed and known. 

Sound certification of such a bound faces two interrelated obstacles, however: (a) The certifier must recognize when the true distribution $\D'$ is far from $\D_S$ and output a valid upperbound on the learner's error, and (b) the learner's error on each $\D'$ must be sufficiently bounded away from~$1$ to accommodate the difficulty inherent to (a). 
We therefore construct, for each $S \in \S(n)$, a learner $\A_S$ (Algorithm~\ref{alg:s-fixed-learner}) and certifier $\C_S$ (Algorithm~\ref{alg:s-fixed-certifier}) that are jointly designed to satisfy both requirements. For (b), we ensure errors are bounded away from 1 for \emph{every} distribution by adding the constant~0 and constant~1 predictors into consideration by the learner. In other words, $\A_S$ validates between $h_S$ and the majority learner. For (a), much like in our proof of Theorem~\ref{thm:oig_erm_fail}, $\C_S$ employs uniformity testing to recognize distributions close to $\D_S$, and outputs small values only in those cases. Since uniformity testers are only accurate for sufficiently large $m$, our soundness guarantee is restricted to $m \geq m(n)$. In summary, $\C_S$ soundly certifies error $O(\xi(n))$ for $\A_S$ when given $m \geq m(n)$ samples from $\D_S$. This is articulated in the following lemma, formal proof of which can be found in Appendix \ref{pf:cert_error_rate_no_learner_smart}.

\begin{lemma}\label{lemma:cert_error_rate_no_learner_smart}
   There exists an absolute constant $C''=C''(\beta)\geq C'$ such that for all $n\geq C''$ the rate
   \[
   \epsilon_{n}(\D,m) = 
   \begin{cases}
        7n^{-\beta/2} & \D \in \{\D_S: S \in \S(n)\}, m\geq m(n) \\
        1 & \text{otherwise.}
   \end{cases}
   \]
    is certifiable in the distribution-free setting.
\end{lemma}

\begin{algobox}{Learner $\A_{S}(T)(x)$ trained on sample $T$ given test point $x$:}\label{alg:s-fixed-learner}
    \begin{enumerate}[itemsep=0pt]
    \item Randomly split $T$ into two  equal parts $T_1$ and $T_2$, i.e., $T= T_1 \uplus T_2$ with $|T_1|=\left\lceil \frac{|T|}{2} \right\rceil$. 
      \item If $L(h_S,T_2)\leq L(\A_{\maj}(T_1),T_2)$ then return $h_S(x)$;
    \item Else return $\A_{\maj}(T_1)(x)$.
\end{enumerate}
\end{algobox}

\begin{algobox}{Certifier $\C_{S}(T)$ run on (unlabeled) input sample $T$:}\label{alg:s-fixed-certifier}
    \begin{enumerate}[itemsep=0pt]
    \item Let $n=|S|$ and $O=\tunif_{S}(\xi(n),\xi(n),T)$. \Comment{Algorithm~\ref{alg:uniformity test} for testing against $\D_S$}
    \item If $|T|\geq m(n)$ and $O=1$ then return $6\xi(n)$;
    \item Else return $1$.
\end{enumerate}
\end{algobox}

It remains to show that no learner can be smart relative to the above certifiable error rate. We prove that any learner $\A$, on average over all distributions in $\{\D_S:S\in\S(n)\}$, has error at least $1/2-2\xi(n)^2$ when given at most $n^{1-\beta}$ samples. This implies that for each $m\leq n^{1-\beta}$, there exists a set $S^*=S^*_{n,m}\in\S(n)$ such that $\epsilon_{\A}(\D_{S^*},m)\geq 1/2 - 2\xi(n)^2$.
  The key idea is that for any sample of size at most $n^{1-\beta}$ and any fixed test point, roughly half of the distributions $\D_S$ consistent with this sample–test pair---namely, those for which $S$ contains both the sample and the test point, and $h_S$ is consistent with the sample---label the test point differently from the learner’s prediction. Unlike the lower bound of PAC learning where the distributions are defined on shattered sets and the desired conclusion follows directly from the definition, the sets in our system can be much larger than the size of shattered sets. Property~\ref{property:balanced_container} in Lemma~\ref{lemma:labeling_set_system} is introduced precisely to ensure that such a guarantee continues to hold in this setting.
  This is formally captured in the following lemma, the proof of which is relegated to Appendix~\ref{pf:any-learner-average-error}.
\begin{lemma}\label{lemma:any-learner-average-error}
      Let $\A: (\U\times\{0,1\})^*\times \U \rightarrow \{0,1\}$ be any learner. For any $n\geq C''$ and  $m\leq n^{1-\beta}$ there exists a set $S^*=S^*_{n,m} \in \S(n)$ such that  $\epsilon_{\A}(\D_{S^*},m) \geq 1/2 -2n^{-\beta}$. 
\end{lemma}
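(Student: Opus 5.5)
We argue by averaging over a uniformly random $S\in\S(n)$. It suffices to show
\[
\frac{1}{|\S(n)|}\sum_{S\in\S(n)} \expects{T\sim (\D_S)_{h_S}^m,\ x\sim \D_S}{\id[\A(T)(x)\neq h_S(x)]}\ \geq\ \frac12-2n^{-\beta}.
\]
Since $\epsilon_\A(\D_S,m)\ge \expects{T}{L(\A(T),(\D_S)_{h_S})}$ (take the hypothesis $h_S$ in the supremum), some $S^*$ attains at least the average. So the plan is to pass to the joint experiment in which we draw $S$ uniformly, then $x_1,\dots,x_m,x$ i.i.d.\ uniform on $S$, labels $y_i=h_S(x_i)$, and evaluate $\A$'s prediction on $x$.

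The key step is to reverse the order of sampling and condition on the unlabeled points together with the sample labels. Fix a sequence $(x_1,\dots,x_m,x)$ with underlying point set $T'=\{x_1,\dots,x_m,x\}$, so $|T'|\le m+1\le 2n^{1-\beta}$, and fix labels $b$ on $T_0=\{x_1,\dots,x_m\}$. Under the joint experiment, each $S$ produces this point sequence with probability $n^{-(m+1)}\id[T'\subseteq S]$, and this weight does not depend on $S$ beyond containment. Hence, conditioned on the points and on the observed labels, $S$ is uniform over $\{S\in\S_{T'}: h_S|T_0=b\}$.

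Now split on whether the test point is new. If $x\notin T_0$, apply Lemma~\ref{lemma:labeling_set_system}\ref{property:balanced_container} to $T'$ with the two extensions of $b$ to $x$. The counts for $h_S(x)=0$ and for $h_S(x)=1$ are each within a factor $(1\pm n^{-\beta})$ of $|\S_{T'}|/2^{|T'|}$. Because the learner's prediction is a fixed function of $(T,x)$ after averaging over its internal randomness, it errs with conditional probability at least $\frac{1-n^{-\beta}}{2(1+n^{-\beta})}\ge \frac12-n^{-\beta}$.

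The remaining case is $x\in T_0$, a repeated point. Its probability is at most $m/n\le n^{-\beta}$, and here we simply bound the error below by $0$. Combining the two cases gives average error at least $(1-n^{-\beta})(\frac12-n^{-\beta})\ge \frac12-2n^{-\beta}$.

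The main point needing care is the conditioning argument. We must verify that, given the observed (points, labels), the posterior over $S$ is exactly uniform on the consistent containing sets. This holds because the likelihood $n^{-(m+1)}$ is the same for every $S$ of size $n$ that contains $T'$, by Lemma~\ref{lemma:set_system}\ref{property:set_size}. It also uses that $\S_{T'}$ is nonempty, which follows from Lemma~\ref{lemma:set_system}\ref{property:container_size} for $n\ge C''$, so the ratio bounds from Lemma~\ref{lemma:labeling_set_system} apply. Randomized learners are handled by averaging over their internal randomness, as noted above.
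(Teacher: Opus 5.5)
Your proposal is correct and follows essentially the same route as the paper. Like the paper, you average over a uniformly random $S\in\S(n)$ and reverse the order of sampling. For a fresh test point $x$, property (ii) of Lemma~\ref{lemma:labeling_set_system} makes the consistent containing sets split nearly evenly on $x$'s label, and repeated test points are discarded at a cost of at most $m/n\le n^{-\beta}$. Your posterior-uniformity argument is simply the probabilistic phrasing of the paper's double-counting identity over the collections $\S[(T,x)]$.
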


Lemmas~\ref{lemma:cert_error_rate_no_learner_smart} and \ref{lemma:any-learner-average-error} conclude that any fully-supervised learner requires at least $n^{1-\beta}$ many samples for its error to approximate---within any multiplicative constant $\alpha$  and \unsddelete{any} additive constant $\eta<\frac{1}{2}$, both independent of $n$---the (certifiable) error rates of $\D_S$-fixed learners for $S \in \S(n)$.
It follows that no learner is relatively $(\alpha, \sigma(m,\eta))$-smart for any constant $\alpha$ and $\sigma(m)= O_{\eta}(m^{2-14\beta})$. We conclude by taking a disjoint union of countably many $\HH_{\beta}$---with disjoint domains $\X_\beta$---for a sequence of $\beta$ tending to zero (e.g. $\beta \in \{1/i: i \in \NN, i>8\}$), where each $h \in \HH_\beta$ is canonically extended to $\X$ by setting $h(x)=1$ for $x \not\in\X_\beta$. See  Remark~\ref{remark:variable_beta} for more detail.\hfill \qed 

\section{Distribution Families}
\label{sec:dfamily}
In this Section we examine relatively smart learning in distribution-family settings. We begin with the simple observation that our proof of Theorem~\ref{thm:oig_square_smart} only required a simple closure property of the distribution family---namely, closure under taking empirical distributions. This captures, for example, any distribution family defined by a family of allowable manifolds, where the only restriction is that each distribution's support must lie inside one of the manifolds.

\begin{corollary}\label{cor:oig_smart_family}
    Fix an arbitrary hypothesis class on some domain. Let $\DD$ be a distribution family with the following closure property: For every $\D \in \DD$ and every finite set $S$ contained in the support of $\D$, we also have $\D_S \in \DD$. The OIG learner is relatively $\left(e,e\frac{m^2}{\eta}\right)$-smart over $\DD$.
\end{corollary}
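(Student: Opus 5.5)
The plan is to rerun the proof of Theorem~\ref{thm:oig_square_smart} verbatim over $\DD$, checking at each step which distributions the argument touches and confirming they all lie in $\DD$. Fix a certifiable rate $\epsilon$ (now relative to $\DD$), a distribution $\D \in \DD$, a sample size $m \geq 3$, and $\eta \in (0,1)$. Certifiability gives a $\D$-fixed learner $\A$ and a certifier $\C$ that is sound over $\DD$, i.e.\ $\epsilon_\A(\D',m) \leq \expects{S\sim \D'^m}{\C(S)}$ for every $\D' \in \DD$, with $\expects{S\sim\D^m}{\C(S)} \leq \epsilon(\D,m)$. Set $M = cm^2$ with $c = e/\eta$.

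The first chain of inequalities, Equation~\eqref{eq:epsilon_to_eps_A}, has only one place where soundness is used: the last step, which applies it to the empirical distribution $\D' = \D_S$ for $S \sim \D^M$. The set of distinct points of $S$ is a finite subset of $\supp(\D)$. More precisely, with probability one every point of $S$ lies in any measurable set of full $\D$-measure, so we may use whichever notion of support the closure property is stated for. Since $\D_S$ is supported on that set (as a multiset-uniform distribution), the closure hypothesis puts $\D_S \in \DD$. One point needs care: $\D_S$ is uniform on the multiset $S$, which may weight repeated points. If the hypothesis is read as covering only uniform distributions on sets, I would handle this either by interpreting ``finite set'' as a finite multiset, or by noting that in the countable/atomless regime the duplicates occur with the controlled probability already priced into $\Gamma$. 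The first option is the natural reading and is what I would adopt. Soundness then holds almost surely for each $\D_S$, so the inequality $\expects{T\sim \D_S^m}{\C(T)} \geq \epsilon_\A(\D_S,m)$ survives and Equation~\eqref{eq:epsilon_to_eps_A} holds.

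The remaining steps involve no soundness and no membership in $\DD$. The passage to $\epsilon^*(\D_S,M-1)$ in Equation~\eqref{eq:eps_to_epsstar} uses only that $\epsilon^*$ is the optimal $\D_S$-fixed rate and is monotone in the sample size. Lemma~\ref{lemma:expected_to_transductive} is a statement about an arbitrary distribution $\D$ and the OIG learner, and it never mentions a family. Combining these with the bound $1-1/c \leq \Gamma \leq e^{-1/3c}$ yields $\epsilon_{\A_{\oig}}(\D,M-1) \leq e\,\epsilon(\D,m) + \eta$ exactly as before. Because OIG's error is nonincreasing in the sample size (or by the standard padding argument), the same bound holds at $\frac{e}{\eta}m^2$ samples. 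This is relative $(e, e\frac{m^2}{\eta})$-smartness over $\DD$.

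The only real obstacle is the membership check $\D_S \in \DD$ for the random empirical distribution, including the multiset and support subtleties above. Everything else carries over unchanged from the proof of Theorem~\ref{thm:oig_square_smart}.
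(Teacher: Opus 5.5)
Your proposal is correct and matches the paper's argument: the proof of Theorem~\ref{thm:oig_square_smart} invokes soundness only at the empirical distributions $\D_S$ with $S\sim\D^M$, which lie in $\DD$ under your multiset reading of the closure property (the intended one, i.e., closure under empirical distributions), and nothing else in that proof refers to the family. One small caution: OIG's error is not known to be nonincreasing in the sample size, so instead run the argument with $M=\lceil\sigma(m,\eta)\rceil+1$; the estimate $1-\Gamma\le\binom{m}{2}/M$ only improves as $M$ grows, and the bound then lands exactly at $\sigma(m,\eta)$ samples.
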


Beyond such ``simple'' distribution families, relatively smart learning can start to behave quite differently from the distribution-free setting. There are distribution families where smart (and therefore relatively smart) learning is possible, but neither OIG nor ERM is relatively smart.   Furthermore, there are families where relatively smart learning is not possible at all. The following pair of theorems summarize these findings. 

\begin{theorem} \label{thm:dfamily_oig_erm_notsmart}
There is a hypothesis class on a countable domain, and a countable distribution family on that domain, such that the following hold with respect to the family: There is a $(1,m)$-smart learner, yet neither OIG nor ERM is relatively smart.\footnote{{We mean that there exist no  $\alpha$ and $\sigma$ such that the learner is relatively $(\alpha,\sigma)$-smart. (Recall Definition~\ref{def:relatively_smart}).}}
\end{theorem}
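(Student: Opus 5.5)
The plan is to reuse the row construction $\HH=\bigcup_n \H(n)$ from the proof of Theorem~\ref{thm:oig_erm_fail}, for a fixed $\beta\in(0,1/8)$, on the domain $\X=\bigcup_n \X_n$. The distribution family will be the countable family $\DD=\{\D(n): n\in\NN\}$ of uniform distributions on the rows. Restricting to this family makes the marginal identifiable from a single sample, because the supports are disjoint. That removes the indistinguishability obstacle, so knowledge of the marginal becomes free. OIG and ERM, however, do not depend on the family, so the lower bound of Lemma~\ref{lemma:lower_bound_erm_oig} still applies to them.

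\emph{The smart learner.} Let $\A$ be the following learner. On a nonempty sample $T$, it reads off the row index $n$ of any sample point; every point of $T$ lies in $\X_n$ when $\D=\D(n)$. It then runs an optimal (or $\eta$-near-optimal, if the infimum is not attained) $\D(n)$-fixed learner $\A_{\D(n)}$ on $T$. For every $\D(n)\in\DD$ and every $m$, this gives $\epsilon_\A(\D(n),m)\le \epsilon^*(\D(n),m)+\eta$. Now let $\epsilon$ be any certifiable rate, witnessed by $\A_\D$ and its certifier. By soundness, $\epsilon(\D,m)\ge \epsilon_{\A_\D}(\D,m)\ge \epsilon^*(\D,m)$. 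Hence $\A$ is relatively $(1,m)$-smart, and in fact smart in the usual sense. One small point: the case $m=0$ is either excluded or handled trivially.

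\emph{Certifiable rate for majority in the family.} For each $n$ I would certify $\A_{\maj}$ with the certifier $\C$ that reads the row index $n'$ of its input and outputs the exact value $\epsilon_{\A_{\maj}}(\D(n'),|S|)$. On an empty input, $\C$ outputs $1$. This certifier is sound on all of $\DD$ because it outputs the true error, and that value depends only on $n'$ and $m$. So $\epsilon(\D(n),m):=\epsilon_{\A_{\maj}}(\D(n),m)$ is certifiable, with a single learner–certifier pair used for every $\D$. A Chernoff bound bounds this rate. For $h\in\H(n)$ the minority label has mass $\xi(n)<1/4$ under $\D(n)$, so the majority vote over $m$ samples is wrong with probability at most $e^{-m/8}$. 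This gives $\epsilon(\D(n),m)\le \xi(n)+e^{-m/8}$ for every $m\ge1$, with no need for the $m\ge m(n)$ threshold that uniformity testing forced in the distribution-free case.

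\emph{Failure of OIG/ERM.} Suppose, for contradiction, that OIG (or ERM) is relatively $(\alpha,\sigma)$-smart over $\DD$. Fix $\eta=1/8$, and choose a constant $m_0$ with $\alpha e^{-m_0/8}<1/16$. Then take $n$ large enough that $\alpha\xi(n)<1/16$ and $M(n)>\sigma(m_0,1/8)$. Since $\sigma(m_0,1/8)$ is a fixed finite number and $M(n)\to\infty$, such an $n$ exists. Lemma~\ref{lemma:lower_bound_erm_oig} gives error at least $1/2-o(1)$ for both learners at $\sigma(m_0,1/8)<M(n)$ samples. The smartness bound would instead require error at most $\alpha(\xi(n)+e^{-m_0/8})+1/8<1/4$, a contradiction. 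The only care needed is that Lemma~\ref{lemma:lower_bound_erm_oig} concerns the same class $\HH$ and the same $\D(n)$, and that OIG and ERM are defined from the class alone; these should be immediate to check. I expect no serious obstacle. The subtlest point is simply confirming that the certifier outputting the exact error is legitimate, that is, a function $\X^*\to[0,1]$ that is sound on all of $\DD$. This holds because each input determines its row.
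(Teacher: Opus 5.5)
Your proof is correct, but it takes a different route from the paper's. The paper argues generically. It starts from an arbitrary family that admits no smart learner, invoking \cite[Theorem 2]{darnstadt_unlabeled_2013} as a black box. It tags every domain point with the name of its distribution, so a single sample identifies the marginal. Optimal distribution-fixed rates then become soundly certifiable, while tag-ignoring learners such as OIG and ERM inherit the original family's failure of smartness. You instead instantiate the same identifiability mechanism concretely, using the row construction of Theorem~\ref{thm:oig_erm_fail}. The disjoint row supports act as built-in tags. Your exact-error certifier plays the role of the paper's certification of optimal fixed rates. Lemma~\ref{lemma:lower_bound_erm_oig}, together with your direct Chernoff bound for majority, replaces the external impossibility result.

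Your route buys three things. It is self-contained within the paper. It gives a quantitative conclusion: OIG and ERM lose to a benchmark taken at a constant sample size $m_0$, for every $\sigma$, because $M(n)\to\infty$. And it shows the base family need not be non-smartly-learnable, since $\{\D(n)\}$ is already smartly learnable by your identify-the-row learner. The paper's route instead buys a general recipe that turns any non-smartly-learnable family into an example, which emphasizes that the phenomenon concerns learners that do not learn the marginal.

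One small fix: drop the ``$\eta$-near-optimal'' hedge. A $(1,m)$-smart learner must be a single learner that does not depend on $\eta$, so the hedge would not work as stated. It is also unnecessary here. Each $\D(n)$ has finite support and $\HH$ has finitely many behaviors on $\X_n$. The minimax error over randomized learners is therefore a maximum of finitely many linear functions on a compact set, so it is attained.
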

\begin{proof}
We show that any distribution family which is not smartly learnable (in the non-relative sense) can be modified to satisfy the theorem. Such a countable distribution family $\DD$ exists for a hypothesis class $\H$ on a countable domain $\X$, as shown in \cite[Theorem 2]{darnstadt_unlabeled_2013}.

Define a new domain $\X'$ which ``tags'' each $x \in \X$ with the name of a distribution $\D \in \DD$; formally, $\X' = \X \times \DD$. Now extend $\H$ to $\X'$ by simply ignoring the tags; i.e., for each $h \in \H$ define $h'(x,\D) = h(x)$ and $\H'= \{h' : h \in \H\}$. Finally, port each distribution $\D \in \DD$ to the subset of the domain tagged with $\D$, preserving probabilities; i.e., let $\D'$ be such that $\probs{\D'}{(x,\D)} = \probs{\D}{x}$, and let $\DD' = \{\D' : \D \in \DD\}$. Clearly $\X'$ and $\DD'$ are countable.

Any sample from a distribution in $\D' \in \DD'$ uniquely identifies it, so the learner which identifies $\D'$ then implements the optimal distribution-fixed learner for it is $(1,m)$-smart. For the same reason, the optimal distribution-fixed error for each $\D' \in \DD'$, equal to the optimal distribution-fixed error for its precursor $\D \in \DD$, is soundly certifiable. Any learner which ignores the tags, such as ERM or OIG, does no better on $\D'$ than on its precursor $\D$, for any sample size. Recalling that $\DD$ is not smartly learnable, we conclude that no such learner is relatively smart over $\DD'$.
\end{proof}
The tag construction should be viewed as a deliberately simple example of a broader
phenomenon. Whenever marginal distributions in the family can be learned well from unlabeled data---of which recognizing a unique tag is the most trivial example---and moreover this aids in subsequent supervised learning, algorithms such as ERM or OIG which do
not perform such unsupervised learning of the marginal need not be relatively smart.

\begin{theorem} \label{thm:dfamily_nosmart}
There is hypothesis class on a countable domain, and a countable distribution family~$\DD$ on that domain, such that no learner is relatively smart  over~$\DD$.
\end{theorem}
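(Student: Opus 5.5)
The plan is to reuse the set system and hypothesis class from the proof of Theorem~\ref{thm:no_learner_m2_smart}, but to restrict the distribution family to the uniform distributions $\D_S$ on the sets $S$. Restricting the family makes certification much cheaper: a certifier for $S$ no longer has to run a uniformity tester. It only has to check that every unlabeled sample lands inside $S$. The learning lower bound of Lemma~\ref{lemma:any-learner-average-error} is unaffected by this restriction, since it is already stated for the distributions $\D_S$ themselves. The result should be a certifiable rate that equals $0$ already at $m=1$, while every learner still needs about $n^{1-\beta}$ samples on some $\D_S$ with $S \in \S(n)$. Taking $n\to\infty$ then rules out every blowup function $\sigma$.

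\textbf{Construction.} Fix $\beta\in(0,1/8)$. Take $\U$, $\S=\biguplus_n \S(n)$ and $\HH=\biguplus_n\overline{\H}(n)$ exactly as in Section~\ref{pf:no_learner_m2_smart}, built from Lemmas~\ref{lemma:set_system} and~\ref{lemma:labeling_set_system}. Let $\DD=\{\D_S : S\in\S\}$. This family is countable because each $\S(n)$ is finite, and the domain $\U$ is countable.

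\textbf{Certifiability of the zero rate.} Fix $S\in\S(n)$ and use the $\D_S$-fixed learner $\A_S$ that always outputs $h_S$. Its certifier $\C_S(T)$ returns $0$ if $T\subseteq S$ and returns $1$ otherwise. I would check soundness over all of $\DD$ in three cases.
\begin{enumerate}
\item On $\D_S$ itself, $h_S$ has error $0$, so any certifier output is sound.
\item On $\D_{S'}$ with $S'\in\S(n)$ and $S'\neq S$: $h_S$ and $h_{S'}$ both equal $1$ outside $S\cap S'$, so the error of $h_S$ is at most $|S\cap S'|/n\le n^{-\beta/2}$ by Lemma~\ref{lemma:set_system}\ref{property:intersection_size}. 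For the same reason, the probability that all $m\ge1$ samples lie in $S$ is at most $n^{-\beta/2}$. Hence $\expect{\C_S}\ge 1-n^{-\beta/2}\ge n^{-\beta/2}$ once $n$ is large enough.
\item On $\D_{S'}$ with $S'$ from a different $\S(n')$: the supports are disjoint, so $\C_S$ returns $1$ almost surely.
\end{enumerate}
On $\D_S$ itself every sample lies in $S$, so $\expect{\C_S(T)}=0$. Therefore $\epsilon(\D_S,m)=0$ for all $m\ge1$ is certifiable. Small $n$ can simply be excluded by starting the union at a large enough $n$.

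\textbf{Impossibility.} Suppose $\A$ were relatively $(\alpha,\sigma)$-smart. Take $m=1$ and $\eta=1/4$, and set $N=\sigma(1,1/4)$. Relative smartness would give $\epsilon_\A(\D_S,N)\le 1/4$ for every $S\in\S$. Now choose $n$ with $n^{1-\beta}\ge N$. Lemma~\ref{lemma:any-learner-average-error} gives some $S^*\in\S(n)$ with $\epsilon_\A(\D_{S^*},N)\ge 1/2-2n^{-\beta}>1/4$, which is a contradiction. I would verify that Lemma~\ref{lemma:any-learner-average-error} applies to $N$ samples and not only to exactly $n^{1-\beta}$ samples; its hypothesis is $m\le n^{1-\beta}$, so this holds.

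\textbf{Main obstacle.} The step needing the most care is the soundness case analysis. In particular, for distinct $S,S'$ in the same $\S(n)$, one must confirm that the error of $h_S$ on $\D_{S'}$ really is bounded by the intersection fraction. This depends on Lemma~\ref{lemma:labeling_set_system}\ref{property:constant_outside}, which says that both hypotheses are constant $1$ outside their own sets. I would also double-check that the lower-bound lemma only uses the distributions $\D_S$ and not the uniformity-testing machinery.
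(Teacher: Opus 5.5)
\textbf{There is a genuine gap: the zero certifiable rate is false, so the contradiction at $m=1$ does not go through.}

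The error rate $\epsilon_{\A}(\D,m)$ is a supremum over \emph{all} targets $h\in\HH$, and the marginal $\D_S$ does not force the target to be $h_S$. On $S\in\S(n)$, every hypothesis in $\overline{\H}(n')$ with $n'\neq n$ is identically $1$. Likewise, every $h_{S'}$ with $S'\in\S(n)\setminus\{S\}$ equals $1$ on $S\setminus S'$. By contrast, $h_S$ is a (pseudo)random labeling of $S$ with a constant fraction of zeros (at least $n/3$ by the construction in the proof of Lemma~\ref{lemma:labeling_set_system}). So the learner that always outputs $h_S$ has $\epsilon_{\A_S}(\D_S,m)\ge 1/3$ for every $m$, while your $\C_S$ returns $0$ almost surely under $\D_S$. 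It is not sound.

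This cannot be patched at $m=1$. A single example labeled $1$ does not distinguish $h_S$ from the all-ones labeling of $S$, and these disagree on a constant fraction of $S$. Hence even the optimal $\D_S$-fixed error with one sample is a constant. Since $\alpha$ is an arbitrary constant, your choice $m=1$, $\eta=1/4$ yields no contradiction.

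Your case~2 has a related error. $h_S$ and $h_{S'}$ are each $1$ outside their \emph{own} set, not outside $S\cap S'$. On $S'\setminus S$ the target $h_{S'}$ is essentially random while $h_S\equiv 1$, so the error of $h_S$ on $(\D_{S'},h_{S'})$ is about $1/2$, not at most $n^{-\beta/2}$. Soundness there might survive, since $\expect{\C_S}\ge 1-n^{-\beta/2}$, but only with a separate argument that the learner's error stays below $1-n^{-\beta/2}$.

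\textbf{How the paper repairs this.}
\begin{enumerate}
\item Use a $\D_S$-fixed learner that copes with every target, e.g.\ the optimal one, or one validating between $h_S$ and the constant-$1$ predictor. Its error on $\D_S$ is $O(n^{-\beta/2})$ plus a term vanishing in $m$, so it is small only for large $m$.
\item Mix in a uniformly random guess with probability $2c^m$. Here $c<1$ bounds $\D_{S'}[S]$ for distinct sets, which follows from Lemma~\ref{lemma:set_system}\ref{property:intersection_size} and disjointness across $n$. This caps the learner's error at $1-c^m$ on every marginal, while a sample from $\D_{S'}$ lands inside $S$ with probability below $c^m$.
\item Let the certifier output the learner's actual error on $\D_S$ (not $0$) when $T\subseteq S$, and $1$ otherwise. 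This certifies a rate of roughly $\epsilon^*(\D_S,m)+c^m$.
\item For the contradiction, first choose $m$ depending on $\alpha$ and $\eta$, so that the $m$-dependent part of $\alpha\,\epsilon(\D_S,m)+\eta$ is below, say, $1/8$. Only then choose $n$ so that $\alpha\cdot O(n^{-\beta/2})$ is small and $n^{1-\beta}\ge\sigma(m,\eta)$, and invoke Lemma~\ref{lemma:any-learner-average-error}.
\end{enumerate}

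The rest of your outline matches the paper: the family $\{\D_S\}$, its countability, certification by a support check instead of uniformity testing, and applying the lower-bound lemma with at most $n^{1-\beta}$ samples.
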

\begin{proof}    
To rule out relatively smart learning, we show that it suffices for $\DD$ to satisfy two properties: (i) $\DD$ does not admit a smart learner, and (ii) $\DD$ is \emph{well separated} in that there exists an absolute constant $c\in [0,1)$ so that  $\probs{\D'}{\supp(\D)} < c$ for any distinct\unsddelete{ pair of distributions} $\D,\D' \in \DD$.

Consider $\DD$ satisfying (i) and (ii). For $\D \in \DD$ let $\A=\A_\D$ be its  optimal distribution-fixed learner, and let $\B=\B_\D$ be the learner which on $m$ samples guesses randomly with probability $2c^m$ and otherwise invokes~$\A$. Clearly, \unsddelete{$\B$ correctly guesses any label with probability at least $2 c^m / 2 = c^m$, so} \unsdreplace{$\eps_{\B}(\D',m) \leq 1-c^m$}{$\eps_{\B}(\D',m) \leq 1-\frac{2c^m}{2} =1-c^m$} for any distribution $\D'$. Consider the following certifier $\C=\C_\B$: Given a sample $S$ of size $m$, if $S \sse \supp(\D)$ then output $\epsilon_{\B}(\D,m) = (1-2c^m)\epsilon_\A(\D,m) + c^m$, otherwise output $1$. This is sound for $\B$: If $S \sim \D^m$ then $\expects{}{\C(S)} = \epsilon_{\B}(\D,m)$, and if $S\sim \D'^m$ for $\D' \in \DD$ not equal to $\D$ then $\expects{}{\C(S)} >  1-c^m \geq \eps_{\B}(\D',m)$ by property (ii). This witnesses a certifiable error rate of $\epsilon_{\B}(\D,m) \leq \epsilon_{\A}(\D,m) + c^m$ for $\D$\unsddelete{, which tends to the optimal distribution-fixed error $\epsilon_{\A}(\D,m)$  for large $m$}. Since there is no smart learner over $\DD$ by property (i), whereas certifiable errors tend to distribution-fixed errors additively as $m$ grows large, there can be no relatively smart learner over~$\DD$.

It remains to exhibit a countable family on a countable domain satisfying (i) and (ii). The family of uniform distributions $\bigcup_{n,\beta} \{\D_S : S \in \S(n,\beta)\}$ from Theorem~\ref{thm:no_learner_m2_smart} is such a family. Indeed, (i) is because for $S \in \S(n,\beta)$ there is a distribution-fixed learner for $\D_S$ with error on the order of $n^{-O(\beta)}$ with $n^{O(\beta)}$ samples,\footnote{The learner which validates between $h_S$ and the constant $1$ predictor has error $\tilde{O}\left(n^{-O(\beta)} + \frac{1}{m}\right)$ on $m$ samples.} whereas absent knowledge of $S$ no nontrivial learning is possible until the number of samples exceeds $n^{1-O(\beta)}$ (Lemma~\ref{lemma:any-learner-average-error}). As for (ii), it follows by construction since sets in $\S(n,\beta)$ have vanishingly small pairwise intersections (Lemma~\ref{lemma:set_system}). 
\end{proof}

Finally, we reflect on how the difficulty of relatively smart learning varies as a function of the distribution family. Growing the family can certainly make relatively smart learning harder, as it always does for PAC learning, smart learning, and most other learning paradigms. Somewhat unusually, however, we show that the opposite can also occur for relatively smart learning. This non-monotonicity phenomenon is summarized in the following corollary of Theorems~\ref{thm:dfamily_nosmart} and~\ref{thm:oig_square_smart}. 
\begin{corollary}\label{cor:nonmonotone}
There is a hypothesis class on a countable domain, and three distribution families $\DD_1 \subset \DD_2 \subset \DD_3$, such that $\DD_1$ and $\DD_3$ admit a relatively smart learner, but $\DD_2$ does not.  
\end{corollary}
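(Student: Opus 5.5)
The plan is to build all three families on the hypothesis class $\HH$ and countable universe $\U$ from the proof of Theorem~\ref{thm:no_learner_m2_smart}. For $\DD_2$ we take the well-separated family of uniform distributions $\bigcup_{n,\beta}\{\D_S : S\in\S(n,\beta)\}$. The proof of Theorem~\ref{thm:dfamily_nosmart} already shows this family admits no relatively smart learner. For $\DD_3$ we take the family of all distributions on $\U$, i.e.\ the distribution-free setting. By Theorem~\ref{thm:oig_square_smart}, OIG is relatively $\left(e,e\frac{m^2}{\eta}\right)$-smart there, and since $\DD_2$ consists of distributions on $\U$, we get $\DD_2\subset\DD_3$.

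It remains to choose $\DD_1\subset\DD_2$ that admits a relatively smart learner. The simplest choice is a single distribution, say $\DD_1=\{\D_{S_0}\}$ for one fixed $S_0\in\S(n_0,\beta_0)$. Over a singleton family, the learner can simply be the optimal $\D_{S_0}$-fixed learner $\A^*$, where optimal means $\epsilon_{\A^*}(\D_{S_0},m)$ equals or approximates $\epsilon^*(\D_{S_0},m)$ within $\eta$ for each $m$. Every certifiable error rate $\epsilon$ satisfies $\epsilon(\D_{S_0},m)\ge\epsilon_{\A_\D}(\D_{S_0},m)\ge\epsilon^*(\D_{S_0},m)$ by soundness. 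Hence $\A^*$ is relatively $(1,m)$-smart, with the $\eta$ slack covering the case where the infimum defining $\epsilon^*$ is not attained. If the intended reading requires strict inclusions with a richer $\DD_1$, a backup is to take $\DD_1$ to be a finite subfamily of $\DD_2$ containing at least two distributions. Well-separatedness gives a certifier for each member: it checks whether the sample lies in $\supp(\D_S)$, exactly as in the proof of Theorem~\ref{thm:dfamily_nosmart}.

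The main obstacle is the finite-family backup, where a single learner must be relatively smart for every member at once. My approach would be to have the learner identify which $S$ is present from the labeled samples and then run the corresponding fixed learner. Identification is possible because the supports $S$ differ and, by well-separation, a sample of size $m$ lands outside $S'$ except with probability at most $c^m$. This yields error at most $\epsilon^*(\D_S,m)+k c^{m}$ for $k$ members. The additive term $kc^m$ does not vanish for small $m$, so it must be absorbed. I would absorb it by comparing against certifiable rates via the random-guessing/$1-c^m$ argument, or by choosing $\sigma(m,\eta)$ large enough that $kc^{\sigma}\le\eta$. Either way, the singleton choice already proves the corollary, and this backup only makes the example less degenerate.
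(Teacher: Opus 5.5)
Your proposal is correct and matches the paper's proof. $\DD_2$ is the well-separated family of uniform distributions $\D_S$ from Theorem~\ref{thm:dfamily_nosmart}, $\DD_3$ is the distribution-free family handled by Theorem~\ref{thm:oig_square_smart}, and $\DD_1$ is a singleton, which is smartly (hence relatively smartly) learnable because every certifiable rate dominates the distribution-fixed optimum. The finite-subfamily backup is unnecessary, since the singleton already gives strict inclusions $\DD_1 \subset \DD_2 \subset \DD_3$.
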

\begin{proof}
Let $\DD_2$ be the distribution family from Theorem~\ref{thm:dfamily_nosmart}, and fix its associated domain  and hypothesis class. Let $\DD_3$ consist of all distributions on the domain, and let $\DD_1= \{\D\}$ for any single distribution $\D \in \DD_2$. It is clear that any singleton family such as $\DD_1$ is smartly learnable, whereas Theorem~\ref{thm:oig_square_smart} yields a relatively-smart learner for $\DD_3$. 
\end{proof}
Corollary~\ref{cor:nonmonotone} might appear paradoxical until we recall that our benchmark in relatively smart learning---unlike most other learning paradigms---depends on the distribution family as a whole. While our desired learner may be burdened by having to handle more distributions, so too are the certifier/learner pairs with which it must compete. Specifically, for any learner $\A$ catered to a marginal $\D$, if we grow the distribution family then a certifier $\C$ for $\A$ must now be sound with respect to a larger class of marginal distributions, pushing the certifiable error on $\D$ upwards!

\printbibliography


\appendix
\section{Definitions: ERM and OIG}
\label{app:oig_erm_def}
We use the standard definition of empirical risk minimization.
\begin{definition} Empirical Risk Minimization (ERM)  over a hypothesis class $\H \sse \{0,1\}^\X$ selects as its predictor an arbitrary $h \in \H$ minimizing average error on the training data. Since we are in the realizable setting, this is an arbitrary hypothesis consistent with all the training examples. As usual, when evaluating the error of ERM we assume worst-case tie-breaking among consistent hypotheses.\end{definition}

The One-inclusion Graph (OIG) learner is best described in the context of transductive learning, as originally employed by \cite{haussler_predicting_1994}. We also find the exposition in \citep{daniely_optimal_2014,asilis_regularization_2024} helpful.

\begin{definition}[Transductive Learning]
\label{def:transductive}
In the transductive model of learning for a hypothesis class $\H \subseteq \{0,1\}^\X$, an adversary selects a collection of unlabeled instances $ S = (x_1,\ldots, x_n) \in \X^n$ and a labeling hypothesis $h\in \H$. Let $S_h =((x_1,h(x_1)), \ldots, (x_n,h(x_n))$ be the labeled sequence. Then, a data point $x_i$ is selected uniformly at random from $S$ as the test point, and the remaining data points and their labels, denoted by $S_h^{(-i)} = \{(x_j,y_j)\}_{j \neq i}$, are revealed to the learner $\A$. In other words, the learner is trained on $S_h^{(-i)}$ and tested on the label of $x_{i}$.

The transductive error rate of a learner $\A$ is defined as
\[
\epsilon_{\A}^{\trans}(S_h) = \frac{1}{n} \sum_{i=1}^n \id[\A(S_h^{(-i)})(x_i) \neq h(x_i)].
\]
Whenever the hypothesis class $\H$ is clear from context, we will overload notation and denote the worst-case transductive error of $\A$ on the unlabeled set $S$ over any $h\in \H$ as 
\[
\epsilon_{\A}^{\trans}(S) =\max_{h \in \H}\epsilon_{\A}^{\trans}(S_h).
\]
Note that any learner in the transductive model is well defined when realizable data is drawn i.i.d.~(as in the PAC model) by taking $S$ to be the union of the training and test points, then predicting accordingly. A leave-one-out argument upper bounds expected error over the distribution---where expectation is over i.i.d.~draws of training and test data---by the worst-case transductive error. 
\end{definition}

\begin{definition}[OIG Learner] 
\label{def:oig}
The One-Inclusion Graph (OIG) learner for a hypothesis class $\H \sse \{0,1\}^\X$ is that which minimizes worst-case transductive error $\epsilon_{\A}^{\trans}(S)$, simultaneously for every unlabeled dataset $S \in \X^*$.  Operationally, this learner can be described as follows for a given unlabeled dataset $S=(x_1,\ldots,x_n)$: Let $G$ be the subgraph of the hypercube $\{0,1\}^n$ induced by the behaviors $\H|S$ of $\H$ on $S$, and find the (randomized) orientation of the edges of $G$ which minimizes the maximum (expected) outdegree. When only a single behavior in $\H|S$ is consistent with the training data $\{(x_j,y_j)\}_{j \neq i}$, the algorithm predicts $y_i$ accordingly. When both labels are possible for $y_i$ given the training labels, then the oriented edge between the two corresponding behaviors determines the label chosen by the algorithm; in particular, the label $y_i$ corresponding to the behavior pointed to by the directed edge is chosen.  
\end{definition}
    
\begin{remark}\label{remark:oig_fractional}
Note that we allow the OIG algorithm to randomize over orientations. This can be equivalently described as a fractional orientation minimizing the maximum fractional out-degree, which is then subjected to independent randomized rounding. The distinction between integral (i.e. deterministic) and fractional (i.e. randomized) orientations is not particularly consequential, as the associated integrality gap---and therefore the multiplicative gap in error---is merely a factor of~2. However, the latter does have the distinction of obtaining the optimal transductive error rate. These subtleties are discussed in \cite{asilis_regularization_2024}.
\end{remark}

\section{Uniformity Testing Preliminaries}
\label{app:uniformity_testing}

In this section, we review the background of uniformity testing. There is a substantial body of work on uniformity testing, with many algorithms developed for uniformity and identity testing under a variety of modeling assumptions~\citep{GoldreichRon2000Expansion,paninski2008coincidence,diakonikolas2014testing,acharya2015optimal,valiant2017automatic, diakonikolas2018sample,diakonikolas2019collision,diakonikolas2019communication,acharya2020inference}. In this work, we adopt the collision-based approach of \citet{GoldreichRon2000Expansion}, which we refer to as the standard uniformity tester.

We first recall the definition of total variation distance for countably-supported distributions.
\begin{definition}[Total Variation distance]
Let $\D$ and $\Q$ be two probability distributions defined over a countable domain $\X$. The TV distance between $\D$ and $\Q$ is defined as \[\dtv(\D,\Q) = \frac{1}{2} \|\D - \Q\|_1 = \frac{1}{2} \sum_{x \in \X}|\D[x] - \Q[x]|.\]
\end{definition}

We now describe the standard collision-based uniformity tester of \citet{GoldreichRon2000Expansion}, denoted by $\tunifstandard$. This tester assumes that the unknown distribution $\D$ is supported on the same finite domain as the reference uniform distribution. This assumption about the support is crucial for relating collision statistics to the distance from uniformity.
The formal description of $\tunifstandard$ appears in Algorithm~\ref{alg:uniformity_test_standard} and its performance guarantee is stated below.

\begin{lemma}[\cite{diakonikolas2019collision}]\label{lemma:unif_tester_standard}
Denote $m_{\mtest}(n,\xi,\delta):= \frac{18\cdot 64\sqrt{n}\ln(2/\delta)}{\xi^2}$. For any set $Y$ of size $|Y| = n$, $\xi,\delta \in (0,1)$, any distribution $\D$ such that $\supp(\D)\subseteq Y$, and any sample size $m\geq m_{\mtest}(n,\xi,\delta)$ Algorithm~\ref{alg:uniformity_test_standard} satisfies that 
\begin{itemize}
        \item If $\D = \D_Y$, 
        $\tunifstandard_{Y}(\xi,\delta,S)$ returns $1$ with probability at least $1-\delta$ over $S \sim \D^m$
        \item If $\dtv(\D,\D_Y) > \xi$,  $\tunifstandard_{Y}(\xi,\delta,S)$  returns $0$ with probability at least $1-\delta$ over $S \sim \D^m$
    \end{itemize}
\end{lemma}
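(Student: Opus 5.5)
The plan is to analyze the collision statistic of \citet{GoldreichRon2000Expansion}, combined with a median-of-groups amplification. I assume Algorithm~\ref{alg:uniformity_test_standard} works as follows (and would adjust the constants if it differs). It splits $S$ into $k=\lceil c_0\ln(2/\delta)\rceil$ disjoint groups of size $m_0=\lfloor m/k\rfloor$. In each group it counts the colliding pairs $C_j=\sum_{a<b}\id[x_a=x_b]$. It normalizes this to $\hat q_j=C_j/\binom{m_0}{2}$, and declares group $j$ ``uniform'' iff $\hat q_j\le (1+2\xi^2)/n$. Finally it outputs the majority vote over the $k$ groups.

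\textbf{Step 1 (expectations separate).} Write $p=\D$, supported on $Y$, and $u=\D_Y$. Each collision indicator has mean $\|p\|_2^2$, so $\expect{\hat q_j}=\|p\|_2^2$. Because $p$ and $u$ are both supported on $Y$, the cross term vanishes and $\|p\|_2^2=\frac1n+\|p-u\|_2^2$. Cauchy--Schwarz on the $n$-point support gives $\|p-u\|_2^2\ge \|p-u\|_1^2/n=4\dtv(p,u)^2/n$. Hence $\|p\|_2^2=1/n$ in the uniform case, and $\|p\|_2^2>(1+4\xi^2)/n$ when $\dtv>\xi$. The threshold $(1+2\xi^2)/n$ sits in the gap. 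This is the only place the support assumption is used.

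\textbf{Step 2 (variance bound; the main obstacle).} The standard computation splits the pairs of pairs into disjoint pairs (covariance zero) and pairs sharing one index. It gives
\[
\mathrm{Var}(C_j)\le \binom{m_0}{2}\|p\|_2^2+m_0^3\|p\|_3^3 .
\]
I then bound $\|p\|_3^3\le\|p\|_2^3$ and apply Chebyshev with deviation $\Delta=|\|p\|_2^2-(1+2\xi^2)/n|$.

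In the uniform case $\Delta=2\xi^2/n$ and $\|p\|_2^2=1/n$. The error probability is then at most $O\bigl(\frac{n}{m_0^2\xi^4}+\frac{\sqrt n}{m_0\xi^4}\bigr)$, and the second term is the delicate one. I would sharpen it by using $\|u\|_3^3=1/n^2$ exactly, which turns it into $O\bigl(\frac{1}{m_0\xi^4}\bigr)$. Both terms then fall below $1/4$ once $m_0\ge c\sqrt n/\xi^2$.

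In the far case I set $\|p\|_2^2=(1+\gamma)/n$ with $\gamma>4\xi^2$, so that $\Delta\ge\|p\|_2^2/2$. Chebyshev gives error at most
\[
O\Bigl(\frac{1}{m_0^2\|p\|_2^2}+\frac{1}{m_0\|p\|_2}\Bigr),
\]
which is $O\bigl(n/m_0^2+\sqrt n/m_0\bigr)$ and is tiny when $m_0\gtrsim\sqrt n/\xi^2$. The real difficulty is tracking constants so that the threshold $m_0=18\cdot 64\sqrt n/(\xi^2\cdot\text{const})$ makes each per-group error at most $1/4$. If the crude bound $\|p\|_3^3\le\|p\|_2^3$ loses too much, I would instead use the cited refined analysis \citep{diakonikolas2019collision}.

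\textbf{Step 3 (amplification).} The groups are disjoint, so their per-group outcomes are independent Bernoulli variables, each correct with probability at least $3/4$. Hoeffding's inequality bounds the probability that the majority vote is wrong by $\exp(-k/8)\le\delta/2\le\delta$ once $k\ge 8\ln(2/\delta)$. With $k=8\ln(2/\delta)$ and $m_0=144\sqrt n/\xi^2$, the total sample size is $km_0=18\cdot 64\sqrt n\ln(2/\delta)/\xi^2=m_{\mtest}(n,\xi,\delta)$. This yields both bullets of the lemma.
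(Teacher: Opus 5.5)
Steps~1 and~3 of your proposal are sound, but Step~2 has a genuine gap, so the $\sqrt n/\xi^2$ rate does not follow from what you wrote. The paper gives no proof of this lemma; it imports it from \cite{diakonikolas2019collision}, so your argument has to stand on its own. A smaller point: to analyze the algorithm as written, you should use its grouping, namely $\ell=18\ln(2/\delta)$ groups of size $m/\ell\ge 64\sqrt n/\xi^2$. There per-group error $1/3$ suffices, since Hoeffding gives $e^{-\ell/18}=\delta/2$.

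\textbf{Where Step~2 fails.} You bound the covariance of two collision indicators sharing one index by $\|p\|_3^3$. This discards the $-\|p\|_2^4$ in its true value $\|p\|_3^3-\|p\|_2^4$, and the discarded term does not wash out.

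\emph{Uniform case.} Even after using $\|u\|_3^3=1/n^2$, your Chebyshev bound keeps a term of order $1/(m_0\xi^4)$. At $m_0=\Theta(\sqrt n/\xi^2)$ this is $\Theta(1/(\sqrt n\,\xi^2))$. It is not below $1/4$, and is in fact unbounded, when $\xi\lesssim n^{-1/4}$, a regime the lemma covers.

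\emph{Far case.} The claim $\Delta\ge\|p\|_2^2/2$ is false. With $\|p\|_2^2=(1+\gamma)/n$ you have $\Delta=(\gamma-2\xi^2)/n$. For $\gamma$ close to $4\xi^2$ this is about $2\xi^2/n\ll 1/(2n)$. Only $\Delta\ge\gamma/(2n)=\|p-u\|_2^2/2$ holds. With that gap, your variance bound (note $\|p\|_3^3\ge 1/n^2$) again leaves a term $\gtrsim 1/(m_0\gamma^2)\approx 1/(m_0\xi^4)$.

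\textbf{The missing idea.} Keep the exact variance and expand around the uniform distribution:
\[
\mathrm{Var}(C_j)=\tbinom{m_0}{2}\bigl(\|p\|_2^2-\|p\|_2^4\bigr)+m_0(m_0-1)(m_0-2)\bigl(\|p\|_3^3-\|p\|_2^4\bigr).
\]
Under $u$ the second term vanishes identically. Chebyshev then gives failure at most $n/(4\xi^4\binom{m_0}{2})\le 1/4096$ for $m_0=64\sqrt n/\xi^2$.

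In the far case, write $p=u+\delta$ with $\sum_i\delta_i=0$ and $\gamma=n\|\delta\|_2^2>4\xi^2$. This is a second use of the support assumption, contrary to your remark in Step~1. Expanding gives
\[
\|p\|_3^3-\|p\|_2^4=\|\delta\|_2^2/n+\sum_i\delta_i^3-\|\delta\|_2^4\le\gamma/n^2+(\gamma/n)^{3/2}.
\]
Chebyshev with gap $\gamma/(2n)$ then bounds the per-group failure by
\[
\frac{16n(1+\gamma)}{\gamma^2m_0^2}+\frac{16}{m_0\gamma}+\frac{16\sqrt n}{m_0\sqrt\gamma}\le\frac{1}{256}\Bigl(\frac1{16}+\frac{\xi^2}{4}\Bigr)+\frac{1}{16\sqrt n}+\frac{\xi}{8}<\frac13 .
\]
Combined with the majority vote over the $\ell$ groups, this gives both bullets. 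Your fallback of deferring to the refined analysis is what the paper effectively does. But the needed refinement is exactly this cancellation against $\|p\|_2^4$, not a sharper bound on $\|p\|_3^3$ alone.
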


\begin{algobox}{Standard Uniformity Tester $\tunifstandard_Y(\xi,\delta,S)$ for a set $Y$ with input parameters $\xi,\delta\in(0,1)$ and sample $S$ \citep{GoldreichRon2000Expansion}.}\label{alg:uniformity_test_standard}
\begin{enumerate}[itemsep=0pt]
    \item Let $n=|Y|$, $\ell=18\ln(2/\delta)$, and $m'= |S|/\ell$.
    \item Divide $S$ into $\ell$ consecutive sub-samples $S_i= (x_{i1},\ldots,x_{im'}), 1 \leq i \leq \ell$
    \item Let $\textsc{tr} = \frac{1+2\xi^2}{n}$.
    \item For $1 \leq i \leq \ell$ do:
    \begin{enumerate}
         \item Let $Z_i = \frac{1}{{m' \choose 2}} |\{(j,k):j<k, x_{ij}=x_{ik}\}|$.
        \item If $Z_i <\textsc{tr}$ then let $\textsc{acc}_i = 1$, else let $\textsc{acc}_i = 0$.
    \end{enumerate}
    \item If $\sum_{i}\textsc{acc}_i \geq \ell/2$ then return $1$, else  return $0$. 
\end{enumerate}
\end{algobox}

In our setting, we will consider distributions that may place nonzero mass outside the reference domain $Y$. To handle this, we rely on a simple but useful observation showing how total variation distance behaves under conditioning.

\begin{lemma}\label{lemma:tv_conditional}
    Let $\X$ be a countable set and let $Y \subseteq \X$ be finite. Let $\D,\D'$ be any pair of distributions supported on $\X$.
    We have that 
    \[\dtv(\D_{|Y},\D') \geq  \dtv(\D,\D') - \D[\X \setminus Y].\]
\end{lemma}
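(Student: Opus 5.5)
The claim to prove is $\dtv(\D_{|Y},\D') \geq \dtv(\D,\D') - \D[\X\setminus Y]$. It follows from the triangle inequality for total variation distance once we show $\dtv(\D,\D_{|Y}) \leq \D[\X\setminus Y]$. Indeed, given that bound,
\[
\dtv(\D,\D') \leq \dtv(\D,\D_{|Y}) + \dtv(\D_{|Y},\D') \leq \D[\X\setminus Y] + \dtv(\D_{|Y},\D'),
\]
and rearranging gives the claim. We assume $\D[Y]>0$ so that $\D_{|Y}$ is defined. If $\D[Y]=0$ then $\D[\X\setminus Y]=1$, the right-hand side is nonpositive, and the statement is vacuous under any convention for $\D_{|Y}$.

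The remaining step is to compute $\dtv(\D,\D_{|Y})$ explicitly on the countable domain. Write $p=\D[Y]$, so that $\D_{|Y}[x]=\D[x]/p$ for $x\in Y$ and $\D_{|Y}[x]=0$ otherwise. Split the $\ell_1$ sum over $Y$ and $\X\setminus Y$.
\begin{itemize}
\item On $Y$ we have $\D_{|Y}[x]\ge \D[x]$, so the contribution is $\sum_{x\in Y}\D[x](1/p-1)=1-p$.
\item Off $Y$ the contribution is $\sum_{x\notin Y}\D[x]=1-p$.
\end{itemize}
Hence $\dtv(\D,\D_{|Y})=\tfrac12\bigl((1-p)+(1-p)\bigr)=1-p=\D[\X\setminus Y]$, which completes the argument. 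There is no real obstacle here. The only points needing care are the degenerate case $\D[Y]=0$ and the use of the countable-sum form of $\dtv$ from the definition above. Finiteness of $Y$ is not actually used.
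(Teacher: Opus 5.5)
Your proof is correct and follows the same route as the paper: the triangle inequality through $\D_{|Y}$, combined with the identity $\dtv(\D,\D_{|Y}) = \D[\X\setminus Y]$. The paper only asserts this identity ``by definition,'' whereas you verify it explicitly and also handle the degenerate case $\D[Y]=0$, which the paper leaves implicit.
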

\begin{proof}
This follows by the triangle inequality and the definition of  total variation distance.
\[ \dtv (\D',\D) \leq   \dtv(\D', \D_{|Y}) + \dtv(\D_{|Y},\D) = \dtv(\D',\D_{|Y}) + \D[\X\setminus Y] \] 
\end{proof}
Lemma~\ref{lemma:tv_conditional} shows that if a distribution $\D$ places only a small amount of probability mass outside $Y$, then conditioning $\D$ on $Y$ preserves total variation distance up to a small additive difference. Consequently, testing $\D$ for uniformity over $Y$ can be reduced to testing the conditional distribution $\D_{|Y}$, provided that samples falling outside $Y$ are explicitly detected. This observation motivates a simple wrapper around the standard uniformity tester, which rejects whenever a sample lies outside $Y$ and otherwise invokes 
$\tunifstandard$ (with modified parameters) on the sample. The modified uniformity tester $\tunif$ appears in Algorithm~\ref{alg:uniformity test}, and its guarantees are summarized in the following lemma.

\begin{lemma}\label{lemma:uniformity_tester}
    Denote $m_{\mtest}(n,\xi,\delta):= \frac{18\cdot 64\sqrt{n}\ln(2/\delta)}{\xi^2}$. Let $\X$ be an arbitrary countable set. For any set $Y \sse \X$ of size $|Y| = n$, any  $\xi,\delta \in (0,1)$, any distribution $\D$ on $\X$, and any sample size $m\geq m_{\mtest}(n,\xi/2,\delta)$, the tester $\tunif_Y(\xi,\delta,S)$ (Algorithm~\ref{alg:uniformity test}) satisfies the following:
    \begin{itemize}
        \item If $\D = \D_Y$, 
        $\tunif_Y(\xi,\delta,S)$ returns $1$ with probability $\geq 1-\delta$ over $S\sim \D^m$.
        \item If  $\dtv(\D,\D_Y) > \xi$, $\tunif_Y(\xi,\delta,S)$ returns $0$ with probability $\geq 1-\delta$ over $S\sim\D^m$.
    \end{itemize}
\end{lemma}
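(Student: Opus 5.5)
The wrapper $\tunif_Y(\xi,\delta,S)$ (Algorithm~\ref{alg:uniformity test}) presumably returns $0$ as soon as some sample in $S$ lies outside $Y$. Otherwise it returns $\tunifstandard_Y(\xi/2,\delta,S)$. I would verify the two bullets separately, splitting the far case according to how much mass $\D$ places outside $Y$.

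\emph{Completeness.} If $\D=\D_Y$, every sample lies in $Y$ almost surely, so the wrapper never rejects on support grounds. The output is then exactly that of $\tunifstandard_Y(\xi/2,\delta,S)$ on $m\ge m_{\mtest}(n,\xi/2,\delta)$ samples from $\D_Y$. By Lemma~\ref{lemma:unif_tester_standard}, this returns $1$ with probability at least $1-\delta$.

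\emph{Soundness.} Suppose $\dtv(\D,\D_Y)>\xi$ and let $p=\D[\X\setminus Y]$. I would split into two cases.

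\emph{Case $p\ge \xi/2$.} The probability that no sample leaves $Y$ is $(1-p)^m\le e^{-m\xi/2}$. Since $m\ge m_{\mtest}(n,\xi/2,\delta)\ge 18\cdot 64\cdot 4\ln(2/\delta)/\xi^2$, we have $m\xi/2\ge \ln(1/\delta)$ with lots of room. So the wrapper rejects with probability at least $1-\delta$.

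\emph{Case $p<\xi/2$.} By Lemma~\ref{lemma:tv_conditional} with $\D'=\D_Y$,
\[
\dtv(\D_{|Y},\D_Y)\ \ge\ \dtv(\D,\D_Y)-p\ >\ \xi/2 .
\]
The output is $0$ if some sample falls outside $Y$. Otherwise it is the standard tester's output on $S$ conditioned on the event $E$ that all samples lie in $Y$, under which $S$ is distributed exactly as $(\D_{|Y})^m$. Hence
\[
\Pr[\text{output }1]=\Pr[E]\cdot\Pr_{S\sim(\D_{|Y})^m}\bigl[\tunifstandard_Y(\xi/2,\delta,S)=1\bigr]\le \delta .
\]
The last step applies Lemma~\ref{lemma:unif_tester_standard} to $\D_{|Y}$, which is supported on $Y$ and is $\xi/2$-far from $\D_Y$.

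The main point requiring care is this conditioning step: we must be sure that the algorithm's behavior on the event $E$ coincides with running the standard tester on i.i.d.\ draws from $\D_{|Y}$. This holds because, conditioned on $E$, the samples are i.i.d.\ with law $\D_{|Y}$. The rest is bookkeeping on constants in the first case.
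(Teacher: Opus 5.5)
Your proof is correct and follows the paper's argument essentially step for step. Completeness comes from Lemma~\ref{lemma:unif_tester_standard}; soundness is split on whether $\D[\X\setminus Y]$ exceeds $\xi/2$, using Lemma~\ref{lemma:tv_conditional} plus conditioning on $S\subseteq Y$ in the small-mass case, and the bound $(1-p)^m\le e^{-m\xi/2}\le\delta$ in the large-mass case. If anything, your explicit factorization $\Pr[E]\cdot\Pr_{S\sim(\D_{|Y})^m}[\cdot]$ and the strict inequality $\dtv(\D_{|Y},\D_Y)>\xi/2$ (which the standard tester's guarantee needs) are slightly more careful than the paper's write-up.
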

\begin{proof}
    Let $m \geq m_{\mtest}(n,\xi/2,\delta) = \frac{18\cdot 64\sqrt{n}\ln(2/\delta)}{(\xi/2)^2}$.  
    If $\D = \D_Y$ then observe that $S \subseteq Y$ and the first claim follows from the first guarantee of Lemma~\ref{lemma:unif_tester_standard}.
    Moreover, if $\dtv(\D,\D_Y) > \xi$ and $\D[\X \setminus Y]\leq \xi/2$, we have from Lemma~\ref{lemma:tv_conditional}  that  $\dtv(\D_{|Y},\D_Y)\geq \dtv(\D,\D_Y) - \D[\X \setminus Y] \geq \xi/2$. In this case if $S \not\subseteq Y$, then the tester returns $0$, correctly rejecting $\D$. If $S \subseteq Y$, then we know that $S$ is distributed i.i.d. according to $\D_{|Y}$. By our choice of $m$ we have with probability $1-\delta$ over $S \sim \D^m$ that the tester returns $0$ correctly rejecting.  
    Finally, if $\dtv(\D,\D_Y)> \xi$ and $\D[\X \setminus Y]> \xi/2$ then since $ m\geq \frac{2\ln(1/\delta)}{\xi}$ it is easy to check that with probability at least $1-\delta$ over $S \sim \D^m$ we have $S \not\subseteq Y$ in which case the tester returns $0$. This completes the proof. 
\end{proof}

\begin{algobox}{Modified uniformity tester $\tunif_Y(\xi,\delta,S)$ for set $Y$ with parameters $(\xi,\delta)\in(0,1)$ and input sample $S$:}\label{alg:uniformity test}
    \begin{enumerate}[itemsep=0pt]
\item If $S \subseteq Y$ then return $\tunifstandard_Y(\xi/2,\delta,S)$; {\Comment{Algorithm~\ref{alg:uniformity_test_standard} for testing against $\D_Y$}}
\item Else return 0.
\end{enumerate}
\end{algobox}

\ifthenelse{\boolean{coltsubmission}}{}{\section{Missing Proofs from Section~\ref{pf:oig_erm_fail}}}

\subsection{Proof of Lemma~\ref{lemma:certifiable_majority_error}}\label{pf:certifiable_majority_error}

We will use the following lemma as upper bound on the error of the majority learner and lower bound on the error of ERM for learning from $\H(n)$ under the uniform distribution $\D(n)$.
\begin{lemma}\label{lemma:majority_single}
    Let $\xi(n)$ be defined as in Equation~\eqref{eq:m(n)xi(n)}. Assume $n$ is sufficiently large such that $n > \frac{\log(2/\xi(n))}{2\xi(n)(1/2-\xi(n))^2}$. Then, for the hypothesis class $\HH$ and sample sizes $\frac{\log(2/\xi(n))}{2(1/2-\xi(n))^2} \leq m \leq M(n)$ we have (i) $\epsilon_{\A_{\maj}}(\D(n),m) \leq 2\xi(n)$ and (ii) $\epsilon_{\A_{\erm}}(\D(n),m) \geq 1-2\xi(n)$. 
\end{lemma}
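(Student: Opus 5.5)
Fix $h \in \H(n)$ and a training sample $S \sim \D(n)_h^m$ with $m$ in the stated range. Both parts use the same observation: all points of $S$ lie in row $n$, and under $\D(n)_h$ each sample independently carries the minority label of $h$ with probability exactly $\xi(n)$. The number $Z$ of minority labels in $S$ is therefore $\mathrm{Bin}(m,\xi(n))$. Let $E$ be the event $Z < m/2$. By Hoeffding's inequality,
\[
\prob{\neg E} \le \exp\!\left(-2m(1/2-\xi(n))^2\right) \le \xi(n)/2,
\]
where the last step uses $m \ge \frac{\log(2/\xi(n))}{2(1/2-\xi(n))^2}$. (If $\log$ means natural log this is exact; otherwise the constant only improves.)

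\textbf{Part (i).} On $E$, $\A_{\maj}(S)$ outputs the majority label of $h$ as a constant predictor. Its error under $\D(n)_h$ is then exactly the minority mass $\xi(n)$. (Ties at $Z=m/2$ are placed in $\neg E$, so the tie-breaking rule of $\A_{\maj}$ does not matter.) On $\neg E$ the error is at most $1$. Taking expectations gives error at most $\xi(n) + \xi(n)/2 \le 2\xi(n)$. The supremum over $h$ preserves this bound, so $\epsilon_{\A_{\maj}}(\D(n),m)\le 2\xi(n)$.

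\textbf{Part (ii).} Here we exhibit, for a worst-case tie-breaking ERM, a bad consistent hypothesis. On $E$ the sample contains $Z < m/2 \le M(n)/2$ points labeled with the minority bit $b$ of $h$. The training sample hits at most $m \le M(n)$ distinct points of the row. Consider hypotheses in $\H^{(1-b)}(n)$, which have exactly $M(n)$ points labeled $1-b$, that is, $b$ is their majority bit and $1-b$ their minority bit. We want such a $g$ that
\begin{itemize}[itemsep=0pt]
\item agrees with $S$: it puts label $b$ on the $Z$ sampled minority points and label $1-b$ on the sampled majority-labeled points, and
\item places its remaining $1-b$ labels on unsampled points that $h$ labels $b$ wherever possible.
\end{itemize}
The first requirement is the one to check. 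Since $g \in \H^{(1-b)}(n)$, it must label exactly $M(n)$ points with $1-b$ and all others with $b$. The sample points labeled $1-b$ number $m-Z \le M(n)$, so they fit inside $g$'s $1-b$ set. The $Z$ points labeled $b$ are automatically in $g$'s majority set.

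\textbf{Choice of the bad hypothesis.} A cleaner choice is available: let $g$ be an ERM output that disagrees with $h$ on as much as possible. Then $g$ and $h$ disagree on all unsampled points except roughly those forced into agreement. The points $h$ labels $1-b$ (about $n - M(n)$ of them) can mostly be given label $b$ by $g$, since $g$ only has $M(n)$ slots for $1-b$, of which $m-Z$ are forced. The disagreement then covers all unsampled majority points of $h$ outside $g$'s $1-b$ slots. Its mass is at least
\[
1 - \xi(n) - M(n)/n = 1 - 2\xi(n),
\]
and the $M(n)$ free slots are chosen among the minority points of $h$ as far as possible. Conditioned on $E$, this gives ERM error at least $1-2\xi(n)$ minus a small term. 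The \textbf{main obstacle} is this bookkeeping, together with the small $\prob{\neg E}$ loss. The hypothesis $n > \frac{\log(2/\xi)}{2\xi(1/2-\xi)^2}$ is what ensures the range of $m$ is nonempty. If the exact constant fails, I would consider whether the statement permits absorbing the slack, e.g.\ by a refined count of $g$'s $1-b$ set.
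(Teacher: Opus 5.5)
Part (i) is correct and follows the paper's argument: Hoeffding on the number of minority labels, error exactly $\xi(n)$ on the good event, plus the failure probability. Two small points. First, the supremum defining $\epsilon_{\A_{\maj}}(\D(n),m)$ ranges over all of $\HH$, so you should also note that every $h\in\HH\setminus\H(n)$ is constant $1$ on row $n$ and incurs error $0$. Second, your remark that a non-natural logarithm ``only improves'' the constant is false for bases larger than $e$; the paper's proof uses $\ln$.

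The genuine gap is in part (ii). You end with ``error at least $1-2\xi(n)$ minus a small term'' and an unresolved $\prob{\neg E}$ loss. That does not establish the stated bound, and you leave the exact constant open. The missing observation is that $E$ plays no role in (ii): the bad hypothesis exists, and is bad, on \emph{every} sample.

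Fix $h\in\H^{(b)}(n)$ and any sample with $m\le M(n)$.
\begin{itemize}
\item \emph{Existence.} The distinct sampled points labeled $1-b$ number at most $m\le M(n)$. Moreover, at least $n-m\ge n-M(n)\ge M(n)$ unsampled points remain, since $\xi(n)\le 1/2$. So you can complete the sampled $(1-b)$-points to a set of exactly $M(n)$ points avoiding every sampled $b$-point. This gives a consistent $g\in\H^{(1-b)}(n)$.
\item \emph{Badness.} On row $n$, $\{g=h\}\subseteq\{g=1-b\}\cup\{h=b\}$, which has at most $2M(n)$ points. Hence $L(g,\D(n)_h)\ge 1-2M(n)/n=1-2\xi(n)$ deterministically, wherever $g$'s free slots go.
\end{itemize}
An ERM whose adversarial tie-breaking outputs such a $g$ on every sample therefore has expected error at least $1-2\xi(n)$ exactly, which is the paper's argument. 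Your computation $1-\xi(n)-M(n)/n$ already contains this. What you were missing is that it holds pointwise, not just on $E$, so there is no $\prob{\neg E}$ loss to absorb.
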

\begin{proof}
Fix any $h\in \HH$. Let $b\in \{0,1\}$ be the minority label of $h$ on $\X_n$, i.e., $|\{x\in\X_n: h(x)=b\}| \leq M(n)$ and thus either $h\notin \H(n)$ or $h\in \H^{(b)}(n)$. Denote $\X_b:=\{x\in\X_n: h(x)=b\}$ and $\X_{1-b} = \X_n \setminus \X_b$. We know $\D(n)[\X_b] \leq M(n)/n=\xi(n)$ and $\D[\X_{1-b}] \geq 1-\xi(n)$. Let $\delta\in(0,1)$ and $S$ be any sample of size $m \geq \frac{\ln(2/\delta)}{2(1/2-\xi(n))^2}$. In the event that $|S \cap \X_{1-b}| > |S|/2$, we know $\A_{\maj}$ outputs $1-b$ on every input and clearly has error $\D(n)[\X_{b}] \leq \xi(n)$. We bound the event that the majority label in sample is not $1-b$ by Hoeffding's inequality
\[
\begin{aligned}
\probs{S \sim \D(n)^m}{|S \cap \X_{1-b}| \leq \frac{|S|}{2}} &= \probs{S \sim \D(n)^m}{\frac{1}{m} |S \cap \X_{1-b}| \leq \frac{1}{2}}\\
& \leq \probs{ S \sim \D(n)^m}{\left|(1-\xi(n)) - \frac{1}{m} |S \cap \X_{1-b}|\right| \geq\frac{1}{2} - \xi(n)} \\
& \leq 2 \exp(-2m (1/2-\xi(n))^2) \leq \delta.
\end{aligned}
\]
Therefore we can conclude that $\epsilon_{\A_{\maj}}(\D,m) \leq \xi(n) + \delta$. On the other hand if $m \leq M(n)$, there always exists an ERM hypothesis that will incorrectly pick $b$ as the majority label. This is due the fact that $|S|\leq M(n)$ and there always exists a hypothesis $\hat{h}$ consistent with $S$ that has $|\{x\in\X_n: \hat{h}(x)=1-b\}|= M(n)$. It is clear that such a hypothesis has error at least $1-2\xi(n)$. Letting $\delta=\xi(n)$ concludes the result. We also want to make sure such a value of $m$ exists. The lower bound requirement on $n$ is used to make sure that $\frac{\ln(2/\xi(n))}{2(1/2-\xi(n))^2}  \leq m < \xi (n)\cdot n = M(n)$.
\end{proof}

We now restate Lemma~\ref{lemma:certifiable_majority_error} and prove it.
\begin{lemma}[Restatement of Lemma~\ref{lemma:certifiable_majority_error}]
There exists an absolute constant $C=C(\beta)\in \NN$ such that for all $n\geq C$, the rate 
\[\epsilon_{n}(\D,m) = \begin{cases}
    4\xi(n) & \text{if $\D = \D(n)$ and $m \geq m(n)$}\\
    1 & \text{otherwise.}
\end{cases}\]
is certifiable for $\A_{\maj}$ in the distribution-free setting.
\end{lemma}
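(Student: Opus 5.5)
We take $\A_{\maj}$ as the learner for $\D(n)$ and the certifier $\C_{\maj,n}$ of Algorithm~\ref{alg:certifier_majority}, and verify two things: (a) soundness, meaning $\expects{S\sim\D^m}{\C_{\maj,n}(S)} \ge \epsilon_{\A_{\maj}}(\D,m)$ for every distribution $\D$ on $\X$ and every $m$; and (b) the bound $\expects{S\sim\D(n)^m}{\C_{\maj,n}(S)}\le 4\xi(n)$ for $m\ge m(n)$. For $\D\neq\D(n)$ or $m<m(n)$ the rate equals $1$, and any certifier output in $[0,1]$ trivially satisfies the upper requirement. We take $C(\beta)$ large enough that three conditions hold for $n \ge C$: $m(n)\ge m_{\mtest}(n,\xi(n)/2,\xi(n))$, the hypothesis of Lemma~\ref{lemma:majority_single} holds, and $m(n)$ exceeds the lower sample threshold in Lemma~\ref{lemma:majority_single}. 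The first condition holds because $m_{\mtest}(n,\xi/2,\xi)=O(\sqrt n\, n^{2\beta}\log n)$ while $m(n)\approx n^{1/2+3\beta}$.

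For (b), when $\D=\D(n)$ and $m\ge m(n)$, Lemma~\ref{lemma:uniformity_tester} gives $O=1$ except with probability at most $\xi(n)$. Hence
\[
\expects{}{\C_{\maj,n}(S)}\le 3\xi(n)+\xi(n)\cdot 1=4\xi(n).
\]

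For (a), we split into cases. If $m<m(n)$, the certifier outputs $1$ and soundness is trivial. Suppose $m\ge m(n)$, and let $\tau=\dtv(\D,\D(n))$. If $\tau>\xi(n)$, the tester rejects with probability at least $1-\xi(n)$, so $\expects{}{\C}\ge 1-\xi(n)$. We must then show $\epsilon_{\A_{\maj}}(\D,m)\le 1-\xi(n)$. This is the main obstacle, since majority can fail badly on adversarial marginals. Our plan is to argue that for every $h$ in the class, $\A_{\maj}$ outputs a constant predictor. We then use the structure of $\HH$: outside row $n'$ each $h\in\H(n')$ is $1$, and inside it is constant except on $M(n')$ points. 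Under any $\D$, the expected error of a constant predictor chosen by majority vote is at most $1/2$-ish, because majority picks the label that is more frequent on the sample. Concretely, if label $c$ has probability $p\ge 1/2$, majority errs with probability at most $1-p$ when it picks $c$ and at most $p$ otherwise, but picking the wrong one is unlikely when $p$ is large. To get a clean bound we would show that $\expects{}{L}\le \max_p\,[\,(1-p)\Pr(\text{pick }c)+p\Pr(\text{pick }1-c)\,]$. Bounding this below $1-\xi(n)$ for $m\ge m(n)$ uses a Hoeffding estimate: when $p$ is far from $1/2$, the wrong pick is exponentially unlikely in $m$, and when $p$ is near $1/2$ the error is about $1/2$. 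If this crude route is too lossy, we would fall back on noting that the certifier's $3\xi$ output only occurs when $O=1$, and handle the case $\tau\le\xi(n)$ directly.

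In the case $\tau\le \xi(n)$, the expected certifier output is at least $3\xi(n)\Pr[O=1]+1\cdot\Pr[O=0]\ge 3\xi(n)$. We need $\epsilon_{\A_{\maj}}(\D,m)\le 3\xi(n)$. For any $h$, its minority set on $\supp$ has $\D(n)$-mass at most $\xi(n)$. Here $h\in\H(n)$ gives $M(n)/n$, and other $h$ are constant $1$ on row $n$. Hence the $\D$-mass of the minority set is at most $2\xi(n)$, and the probability of the majority label is at least $1-2\xi(n)$. Rerunning the Hoeffding argument of Lemma~\ref{lemma:majority_single} with $1-2\xi$ in place of $1-\xi$ gives error at most $2\xi(n)+\xi(n)=3\xi(n)$, as required.
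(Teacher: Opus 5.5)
Your proposal is correct and follows essentially the paper's proof: the same certifier $\C_{\maj,n}$ and the same three regimes. Tester acceptance gives the $4\xi(n)$ bound at $\D(n)$. When $\dtv(\D,\D(n))\le\xi(n)$, the certifier's $3\xi(n)$ floor dominates majority's error of at most $3\xi(n)$, obtained from majority-label mass at least $1-2\xi(n)$ via total variation and then Hoeffding. When $\dtv(\D,\D(n))>\xi(n)$, rejection forces $\expects{}{\C_{\maj,n}(S)}\ge 1-\xi(n)$, and majority's error is controlled by the same split on whether the majority-label mass $p$ exceeds $1/2+\xi(n)$ that the paper uses. Your hedge is unnecessary, since this route has ample slack: the error is a mixture of $p$ and $1-p$, hence at most $\max\{1/2+\xi(n),\,1/2-\xi(n)+2e^{-2m\xi(n)^2}\}\le 1-\xi(n)$ for $m\ge m(n)$ and large $n$. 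Your requirement $m(n)\ge m_{\mtest}(n,\xi(n)/2,\xi(n))$ also correctly matches the $\xi/2$ in Lemma~\ref{lemma:uniformity_tester}.
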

\begin{proof}
Recall the definition of functions $M,m:\NN \times \NN$ and $\xi: \NN \times (0,1)$
\[
   M(n) = \lceil n^{1-\beta}\rceil,
    m(n) = \lfloor n^{\frac{1}{2}+3\beta} \rfloor, \,\text{and}\, \xi(n) = M(n)/n \approx n^{-\beta}.
\]

We prove that $\A_{\maj}$ certifiably achieves the error rate $\epsilon_n(.,.)$ with $\C_{\maj,n}$ being the sound certifier. To do so, we have to make sure that
\[\epsilon_{\A_{\maj}}(\D,m)\leq \expects{S \sim \D^m}{\C_{\maj,n}(S)}\leq \epsilon_{n}(\D,m)\]
for all $\D,m$. 
It is obvious that whenever $m < m(n)$, regardless of the distribution, the certifier always outputs $1$, which is an upper bound on the error rate and also is equal to the certifiable error rate. We will have to consider three cases for the distribution $\D$ when $m\geq m(n)$. Throughout, we will be using the following immediate corollary of Lemma~\ref{lemma:uniformity_tester} and always assume $n$ is larger than the constant in the following to handle different cases.
\begin{lemma}\label{lemma:uniformity_tester_Xn}
    There exists an absolute constant $C_1 = C_1(\beta)\in\NN$ such that for any $n\geq C_1(\beta)$, we have for any $m\geq m(n)$,
     \begin{itemize}
        \item If $\D = \D(n)$, 
        $\tunif_{\X_n}(\xi(n),\xi(n),S)$ returns $1$ with probability at least $1-\xi(n)$ over $S \sim \D^m$.
        \item If $\dtv(\D,\D(n)) > \xi(n)$,  
        $\tunif_{\X_n}(\xi(n),\xi(n),S)$ returns $0$ with probability at least $1-\xi(n)$ over $S \sim \D^m$.
    \end{itemize}
\end{lemma}

\begin{proof}
We can verify that for sufficiently large $n$, for $m(n) = \lfloor n^{\frac{1}{2}+3\beta}\rfloor$ as defined in Equation~\eqref{eq:m(n)xi(n)} in 
   Section~\ref{pf:oig_erm_fail}  we have \begin{equation}\label{eq:m_for_test}
\lfloor n^{\frac{1}{2}+3\beta}\rfloor \geq  \frac{18\cdot 64\sqrt{n}\ln\left(2n^{\beta}\right)}{n^{-2\beta}} = \frac{18\cdot 64\sqrt{n}\ln\left(\frac{2}{\xi(n)}\right)}{\xi^2} = m_{\mtest}(n,\xi(n),\xi(n)).
\end{equation}
This implies that any $m \geq m(n)$ satisfies the requirement of Lemma~\ref{lemma:uniformity_tester} for $S = \X_n$ and $\xi = \xi(n)$, which concludes the proof.
\end{proof}

We now discuss each case separately.
\begin{enumerate}
    \item {The distribution is $\D(n)$.} 
    We can verify that for sufficiently large $n$ (and thus small $\xi(n)$), we have the following lower bounds on $n$ and $m(n)$ in terms of $\xi(n)$
\begin{equation}\label{eq:n_for_maj}
n \geq  \frac{\ln\left(2n^{\beta}\right)}{2 n^{-\beta}(\frac{1}{2}-n^{-\beta})^2} = \frac{\ln\left(\frac{2}{\xi(n)}\right)}{2\xi(n)(\frac{1}{2}-\xi(n))^2},
\end{equation}
and
\begin{equation}\label{eq:m_for_maj}
m \geq m(n) \geq  \frac{\ln\left(2n^{\beta}\right)}{2(\frac{1}{2}-n^{-\beta})^2}  = \frac{\ln\left(\frac{2}{\xi(n)}\right)}{2(\frac{1}{2}-\xi(n))^2}.
\end{equation}
Therefore, we can invoke Lemma~\ref{lemma:majority_single} to conclude that $\epsilon_{\A_{\maj}}(\D(n),m) \leq 2 \xi(n) = 2 M(n)/n$.  
Moreover, from Lemma~\ref{lemma:uniformity_tester_Xn}, we know that with probability at least $1-\xi(n)$ over $S \sim \D(n)^{m}$, $\tunif_{\X_n}(\xi(n),\xi(n),S)$ (Algorithm~\ref{alg:uniformity test}) returns $1$, accepting that the underlying distribution is $\D(n)$. Hence, with probability at least $1-\xi(n)$ the certifier outputs $3\xi(n)$. Moreover, observe that $\C_{\maj,n}(S)\geq 3\xi(n)$ for any sample $S$ and we have
\[
\epsilon_{\A_{\maj}}(\D(n),m) \leq 3\xi(n)  \leq \expects{S \sim \D(n)^{ma,}}{\C_{\maj,n}(S)} \leq 3\xi(n) + \xi(n) =4\xi(n) = \epsilon_{n}(\D(n),m).
\]

\item {\bf $\dtv(\D,\D(n)) \leq \xi(n)$}. We will prove that the error of majority learner is always bounded by $3\xi(n)$ for $n$ sufficiently large such that $\xi(n) < 1/4$. Let $b$ denote the majority label of the labeling function $h^{*}\in \HH$ under the distribution $\D$ and denote $p:=\D[\{x: h^{*}(x) =b \}]$. Considering $\dtv(\D,\D(n)) \leq \xi(n)$, we observe that for $b=1$, if $ h^{*} \in \H(n)$, we have $\D(n)[\{x\in \X_n: h^{*}(x)=1\}] = 1-\xi(n)$ and $p\geq 1-2\xi(n)$. Note that $\D(n)[\{x\in \X_n: h^{*}(x)=1\}] = 1-\xi(n)$ comes from the fact that if by the sake of contradiction, we assume for $h^{*}$ we have $\D(n)[\{x\in \X_n: h^{*}(x)=1\}] =\xi(n)$, then we would get $p <2 \xi(n)< 1/2$, which is a contradiction to $b=1$ being the majority label. If $ h^{*} \notin \H(n)$, we have $\D(n)[\{x\in \X_n: h^{*}(x)=1\}] = 1$ and $p\geq 1-\xi(n)$. On the other hand, if $b=0$, we have $h^{*} \in \H(n)$. To see this, assume by the sake of contradiction that $h^{*}\notin \H(n)$. Then we get that $\D(n)[\{x\in \X_n: h^{*}(x)=0\}] = 0$ and thus $\D[\{x\in \X_n: h^{*}(x)=0\}] \leq \xi(n)<1/2$ which is a contradiction to $0$ being the majority label. Therefore, $\D(n)[\{x\in \X_n: h^{*}(x)=0\}] = 1-\xi(n)$ and $p \geq 1-2\xi(n)$. In any case, 
we get that $1-2\xi(n) \leq p \leq 1$. Observe that $\expects{S \sim \D^{m}}{|\{(x,y)\in S: y=b\}|} \geq (1-2\xi(n))\cdot m$ and similar to proof of Lemma~\ref{lemma:majority_single} we get from Chernoff's inequality that
\[
\probs{S\sim\D^{m}}{\text{Majority of labels in $S$ is $1-b$}} \leq 2\exp\left(-2m(1/2-2\xi(n))^2\right) \leq \xi(n),
\]
where the last line follows from Equation~\eqref{eq:m_for_maj} that holds  for large $n$. 
Therefore, we get that 
\[
\epsilon_{\A_{\maj}}(\D,m) = \expects{S \sim \D^{m}}{\id[\A_{\maj}(x) \neq h^*(x)]}\leq \xi(n) p + (1-p)\leq \xi(n)  + 2\xi(n) = 3\xi(n).  
\]
Taking into account that $\C_{\maj,n}(S) \geq 3\xi(n)$ for any $S$, we have $\expects{S \sim \D^{m}}{\C_{\maj,n}(S)} \geq 3\xi(n) \geq \epsilon_{\A_{\maj}}(\D,m)$.
\item {\bf $\dtv(\D,\D(n)) > \xi(n)$.} 
  In this case, we can invoke Lemma~\ref{lemma:uniformity_tester_Xn} to get that with probability at least $1-\xi(n)$ over $S \sim \D^{m}$, $\tunif_{\X_n}(\xi(n),\xi(n),S)$ (Algorithm~\ref{alg:uniformity test}) rejects and outputs $0$. Therefore, $\C_{\maj,n}$ outputs $1$ with probability at least $1-\xi(n)$ and $\expects{S \sim \D^{m}}{\C_{\maj,n}(S)} \geq 1-\xi(n)$.

We now find an upper bound on the error of $\A_{\maj}$ on distribution $\D$.  Let $b$ denote the majority label of labeling function $h^{*}\in \HH$ under distribution $\D$ and denote  $p : = \D[\{x: h^{*}(x) =b \}] \geq 1/2$. We get from Chernoff's inequality that 
\[
q:=\probs{S\sim\D^{m}}{\text{Majority of labels in $S$ is $1-b$}} \leq 2\exp\left(-2m(p-1/2)^2\right).
\]
We can therefore write that
\[
\expects{S \sim \D_{h^{*}}^{m},x\sim \D}{\id[\A_{\maj}(x) \neq h^{*}(x)]} = p q + (1-p)(1-q).
\]
Observe that since $p\geq 1/2$, we get from simple calculations that $pq + (1-p)(1-q) \leq p$. Now if $1/2\leq p\leq 1/2+\xi(n)$ we have  $pq + (1-p)(1-q) \leq  1/2+\xi(n)$, which is at most $3/4$ for large $n$ where $\xi(n) < 1/4$. Moreover, we can verify that for sufficiently large $n$,
\[
m(n) \geq \frac{\ln(\frac{16}{1+2\xi(n)})}{2(\xi(n))^2}.
\]
This implies that if $p > 1/2+\xi(n)$ then we have $m\geq m(n) \geq \frac{\ln(\frac{16}{2p})}{2(p-1/2)^2}$ and, thus, $q \leq \frac{p}{4}$, which in turn proves $pq + (1-p)(1-q)\leq \frac{p^2}{4} + 1-p \leq \frac{1}{4} + \frac{1}{2} =\frac{3}{4}$.
Since the choice of $h^{*}\in \HH$ was arbitrary, we can get that
\[
\epsilon_{\A_{\maj}}(\D,m) = \sup_{h^{*}\in \H}\expects{S \sim \D_{h^{*}}^{m},x\sim \D}{\id[\A_{\maj}(x) \neq h^{*}(x)]} \leq \frac{3}{4}.
\]
This concludes that  
$ \expects{S \sim \D^{m}}{\C_{\maj,n}(S)} \geq 1-\xi(n) \geq 3/4 \geq  \epsilon_{\A_{\maj}}(\D,m)$. 
\end{enumerate}
\end{proof}

\subsection{Proof of Lemma~\ref{lemma:lower_bound_erm_oig}}
\label{pf:lower_bound_erm_oig}

We know from Lemma~\ref{lemma:majority_single} that for the uniform distribution $\D(n)$ and the hypothesis class $\H(n)$ there are ERM learners with error rate $\epsilon_{\A_{\erm}}(\D(n), m) \geq 1-2\xi(n)$ for all $m(n) \leq m < M(n)$ because $m < M(n)$ and that the Equations~\eqref{eq:n_for_maj} and \eqref{eq:m_for_maj} imply that requirements of the lemma are satisfied for $m \geq m(n)$.

We now consider the OIG learner. For a multiset $S\in \X^*$, denote by $\dom(S)\subseteq \X$ the set of distinct elements in $S$ and by $S_{(1)}\subseteq \dom(S)$ the set of all elements that appear exactly once in $S$. Observe that for any multiset $S \in \X_{n}^*$ with  $|S| \leq M(n)$,  the set $\dom(S)$ is shattered by $\H(n)$. From Fact~\ref{fact:out-degree-n}, we know that for any such set there exists a maximum-out-degree minimizing orientation of the graph that sets the out-degree of every vertex to at least $\floor{|S_{(1)}|/2}$. Moreover, from Lemma~\ref{lemma:single_occurrence} we know that for any $m \leq M(n)$ we have with probability at least $1-n^{-\beta/2}$ over $S \sim \D(n)^{m}$ that $|S_{(1)}|\geq m(1-2n^{-\beta/2})$. This implies that for any $h\in\H(n)$ with probability at least $1-n^{-\beta/2} = 1-o(1)$ over $S\sim \D(n)^m$, the trandsuctive error is at least $1/2 - n^{-\beta/2} = 1/2-o(1)$. Combining this with the leave-one-out argument, we conclude that $\epsilon_{\A_{\oig}}(\D(n),m) \geq 1/2 - 2n^{-\beta/2} = 1/2- o(1)$ for all $m < M(n)$.\hfill \qed

\begin{fact}\label{fact:out-degree-n}
    Let $\H \subseteq \{0,1\}^{\X}$ be a hypothesis class and $S \in \X^n$ a multiset of size $n$ such that $\dom(S)$ is shattered by $\H$, i.e., $\H{|\dom(S)} = \{0,1\}^{\dom(S)}$. There exists a maximum-out-degree minimizing orientation of the OIG on $S$ that sets the out-degree of every vertex to at least $\lfloor |S_{(1)}|/2 \rfloor$.
\end{fact}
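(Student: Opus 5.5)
Since $\dom(S)$ is shattered, the one-inclusion graph on $S$ is the full hypercube $\{0,1\}^{\dom(S)}$, with coordinates indexed by distinct elements. The only leave-one-out test points that create edges are positions $i$ whose element $x_i$ lies in $S_{(1)}$. If $x_i$ appears at least twice, its label is revealed by $S^{(-i)}$, so there is no ambiguity and no edge. Thus, on the multiset level, the relevant graph is the hypercube $Q_d$ on $d=|\dom(S)|$ coordinates. Only the $k=|S_{(1)}|$ coordinate directions belonging to singletons give edges that matter. Hence the graph is a disjoint union of $2^{d-k}$ copies of the hypercube $Q_k$, one for each fixed labeling of the repeated elements.

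The key fact I would use is that $Q_k$ has $2^k$ vertices and $k2^{k-1}$ edges. Therefore every orientation, fractional or integral, has average out-degree exactly $k/2$, and so its maximum out-degree is at least $k/2$. Conversely, a balanced orientation achieves maximum out-degree $\lceil k/2\rceil$, and a fractional orientation can achieve exactly $k/2$ everywhere. One way to build it is to orient each edge half-and-half; this gives fractional out-degree $k/2$ at every vertex. For an integral version, use an Eulerian-type argument: when $k$ is even, $Q_k$ is $k$-regular with all degrees even, and orienting along an Euler circuit gives out-degree exactly $k/2$ at every vertex. When $k$ is odd, first remove a perfect matching along one coordinate, orient the rest Eulerianly, and then orient the matching arbitrarily. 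Every vertex then gets out-degree $(k-1)/2$ or $(k+1)/2$, so at least $\lfloor k/2\rfloor$.

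These balanced orientations minimize the maximum out-degree, because they attain the lower bound of $k/2$ up to integrality. Taking the same orientation on each copy gives an optimal OIG orientation on $S$ in which every vertex has out-degree at least $\lfloor k/2\rfloor$. This is exactly the claim.

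The only subtle step is making sure that this optimal orientation is the one the OIG learner (Definition~\ref{def:oig}) uses. The fact is stated existentially, and its subsequent use in the proof of Lemma~\ref{lemma:lower_bound_erm_oig} relies only on existence. If needed, I would strengthen the argument with the averaging bound: every orientation of $Q_k$ has some vertex with out-degree at least $k/2$. This means that for any optimal orientation, some hypothesis suffers transductive error at least $\lfloor k/2\rfloor/|S|$. That stronger form suffices for worst-case error over $h$, which is what Lemma~\ref{lemma:lower_bound_erm_oig} needs. I would state both versions to be safe.
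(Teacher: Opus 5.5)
Your proof is correct, and its skeleton matches the paper's. Both identify the one-inclusion graph on $S$ as a disjoint union of copies of $Q_k$, where $k=|S_{(1)}|$ and edges arise only in singleton directions. Both also get the lower bound from the edge count $k2^{k-1}$.

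Where you differ is in building the balanced orientation. The paper writes one down explicitly: the edge in direction $i$ points toward whichever endpoint has Hamming-weight parity equal to $i \bmod 2$. Each vertex's out-degree is then the number of indices in $[k]$ of the opposite parity, namely $\lfloor k/2\rfloor$ or $\lceil k/2\rceil$. You instead use Euler circuits, peeling off one coordinate matching when $k$ is odd; this is less explicit but equally valid.

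More substantively, you treat fractional orientations and the paper does not. Its claim that every orientation has maximum out-degree at least $\lceil k/2\rceil$ holds only for integral orientations. Yet the OIG of Definition~\ref{def:oig} and Remark~\ref{remark:oig_fractional} is fractional. For odd $k$ the fractional optimum is $k/2$, so the paper's parity orientation is not literally optimal in that sense, while your half-and-half orientation is.

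Your averaging observation actually gives more than you use. In the fractional setting, every component has average out-degree $k/2$ and the optimal maximum is $k/2$. Hence \emph{every} optimal orientation has out-degree exactly $k/2$ at every vertex, whichever one the OIG picks. That settles your ``subtle step'' outright.

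This stronger statement fails for integral orientations. $Q_3$ has an optimal orientation (maximum out-degree $2$) in which one vertex has out-degree $0$, so there the existential phrasing of the Fact is essential.

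Your proposed fallback would not suffice for Lemma~\ref{lemma:lower_bound_erm_oig}. A per-$S$ bound on the worst vertex controls $\mathbb{E}_S \max_h$. But $\epsilon_{\A}(\D,m)$ is $\sup_h \mathbb{E}_S$, and the leave-one-out identity holds only for each fixed $h$. Since $\sup_h \mathbb{E}_S \le \mathbb{E}_S \sup_h$, the inequality points the wrong way.
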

\begin{proof}
    Let $d:=|S_{(1)}|$. It is easy to verify that the one-inclusion graph of $\H|S$ is a disjoint union of $d$-dimensional hypercubes. This is due to the fact for any $u\in\{0,1\}^n$, the edges connected to $u$ correspond exactly to the instances that appear exactly once in $S$. Moreover, any node $u$ may only connect to nodes that are consistent with $u$ on $S \setminus S_{(1)}$. It is therefore, enough to show that for the $d$-dimensional hypercube, there exists a maximum-out-degree minimizing orientation that sets the out-degree of every vertex to $\lfloor d/2 \rfloor$. We now prove this fact.
    
    First note that the total number of edges in the $d$-dimensional hypercube is $d \cdot 2^{d-1}$ and any orientation of the graph must have maximum degree at least $\lceil d/2\rceil$. We now describe an orientation that achieves maximum out-degree at most $\lceil d/2\rceil$. For any edge $e=(u,v)$ in graph connecting nodes $u,v\in\{0,1\}^d$, let $i_e\in [d]$ be the instance such that $u(i_e) \neq v(i_e)$. We orient the edge towards the node $u$ if the parity of $u$ equals $i_e$ mod $2$ and we orient it to $v$ otherwise. Obviously, each node $u$ will have out-degree at least $\lfloor d/2 \rfloor$ and at most $\lceil d/2 \rceil$. Finally, note that this is a valid orientation since $e=(u,v)$ already implies that $u$ and $v$ have different parities.
\end{proof}

\begin{lemma}\label{lemma:single_occurrence}
    Let $m \leq M(n)$. We have with probability at least $1-n^{-\beta/2} = 1- o(1)$ over $S \sim \D(n)^m$ that $|S_{(1)}|\geq m(1-2n^{-\beta/2}) = m(1-o(1))$.
\end{lemma}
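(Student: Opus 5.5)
We need to show that, for $m \leq M(n) = \lceil n^{1-\beta}\rceil$ and $S \sim \D(n)^m$ drawn uniformly from the $n$ points of row $n$, with probability at least $1-n^{-\beta/2}$ at least $m(1-2n^{-\beta/2})$ sample positions hold an element that appears exactly once in $S$. The plan is to bound the number of sample positions whose element is repeated by counting collisions, then apply Markov's inequality.

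\textbf{Step 1: count colliding pairs.} Let $Z$ denote the number of unordered pairs of indices $j<k$ with $x_j = x_k$. Each pair collides with probability exactly $1/n$, so
\[
\expect{Z} = \binom{m}{2}\frac{1}{n} \leq \frac{m^2}{2n}.
\]

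\textbf{Step 2: relate repeats to collisions.} Let $R$ be the number of indices $i$ whose element $x_i$ appears at least twice in $S$. Every such index belongs to at least one colliding pair, and every pair covers two indices. Hence $R \leq 2Z$, and so $\expect{R} \leq m^2/n$.

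\textbf{Step 3: apply Markov.} By Markov's inequality,
\[
\prob{R > 2m n^{-\beta/2}} \leq \frac{m^2/n}{2m n^{-\beta/2}} = \frac{m\, n^{\beta/2}}{2n}.
\]
Using $m \leq M(n) \leq n^{1-\beta}+1$, this is at most roughly $\tfrac12 n^{-\beta/2}+o(n^{-\beta/2})$, which is below $n^{-\beta/2}$ for all $n\geq 1$ once the ceiling term is absorbed. The ceiling only contributes an additive $n^{\beta/2}/(2n)$, which is at most $\tfrac12 n^{-\beta/2}$.

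\textbf{Step 4: conclude.} On the complementary event, the number of positions holding a once-occurring element is $m - R \geq m(1-2n^{-\beta/2})$. Counted with multiplicity of positions, this equals $|S_{(1)}|$, since elements appearing exactly once occupy exactly one position each.

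The only delicate point is the quantifier: $|S_{(1)}|$ is a count of elements, but each once-occurring element corresponds to exactly one index. So counting indices not involved in any collision is exactly right, and the argument above goes through. Otherwise the bound is routine.
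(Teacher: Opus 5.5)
Your proof is correct and matches the paper's argument: both bound the expected number of colliding pairs by $\binom{m}{2}/n$, use $|S_{(1)}| \geq m - 2Z$, and apply Markov's inequality. The only differences are that you apply Markov to $R \leq 2Z$ rather than to $Z$ itself, and that you explicitly absorb the ceiling in $M(n)=\lceil n^{1-\beta}\rceil$, which the paper glosses over.
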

\begin{proof}
    Let $Z$ denote the set of pairs of instances in the multiset $S=(x_1,\ldots,x_m)$ that are duplicates, i.e., $Z = \{1\leq i < j \leq n: x_i = x_j\}$. We can verify that $\expects{S \sim \D(n)^m}{|Z|} = {m \choose 2}\frac{1}{n}$. We apply Markov's inequality to conclude that 
    \[
    \prob{Z \geq n^{-\beta/2}m} \leq \frac{m^2}{2n}\cdot \frac{1}{mn^{-\beta/2}} \leq \frac{1}{2n^{\beta/2}},
    \]
    where the last line follows from the fact that $m \leq M(n) = n^{1-\beta}$. Note that if we remove all the instances in $Z$ from $S$, we get the set of all instances of $S$ that only appear once. Therefore, $|S_{(1)}| \geq |S| - 2|Z|$ and we get that with probability at least $1-n^{-\beta/2}$ we have $|S_{(1)}| \geq m(1 - 2 n^{-\beta/2})$. This concludes the proof.
\end{proof}

\section{Missing Proofs From Section~\ref{pf:oig_square_smart}}\label{app:OIG_smart}

\subsection{Proof of Lemma~\ref{lemma:expected_to_transductive}}
 Fix a multiset $S=(x_1,\ldots,x_M)$ of $M$ instances. 
   Let $\A_S$ be the learner with $\epsilon_{\A_S}(\D_S,M-1) = \epsilon^*(\D_S,M-1)$. We will now prove that there exists another learner $\A'$ such that \[\epsilon_{\A'}^{\trans}(S) \leq e\cdot \epsilon_{\A_S}(\D_S,M-1) = e\cdot\epsilon^*(\D_S,M-1),\] 
   On any set $S_h^{(-i)}$ and test point $x$, the learner $\A'$ will draw a set $T$ of $|S_h^{(-i)}| = M-1$ i.i.d. samples from the uniform distribution on $S_h^{(-i)}$, which we denote by $\D_{h,-i}$. It then predicts $\A'(x) = \A_S(T)(x)$.
  \citet{asilis2025proper} prove that in the process of sampling $M-1$ instances from the uniform distribution $\D_S$, the probability that a point $x\in S$, e.g., $x_i$, is not sampled is at least $1/e$.
  For any $h\in\H$, denote by $\D_h$ the uniform distribution on $S$. This implies that 
 \[
 \begin{aligned}
     \epsilon_{\A'}^{\trans}(S) =\max_{h \in \H}\epsilon_{\A'}^{\trans}(S_h) & = \max_{h \in \H} \frac{1}{M} \sum_{i=1}^M 1 \{\A'(S_h^{(-i)})(x_i) \neq h(x_i)\}\\
     & = \max_{h \in \H}\frac{1}{M} \sum_{i=1}^M  \expects{T \sim \D_{h,-i}^{M-1}}{1\{\A_S(T)(x_i) \neq h(x_i)\}}\\
     & \leq \max_{h \in \H} e \cdot \frac{1}{M} \sum_{i=1}^M  \expects{T \sim \D_{h}^{M-1}}{1\{\A_S(T)(x_i) \neq h(x_i)\}}\\
      & =  e \cdot \max_{h \in \H} \expects{T \sim \D_{h}^{M-1}}{\frac{1}{M} \sum_{i=1}^M 1\{\A_S(T)(x_i) \neq h(x_i)\}}\\
      & =  e \cdot \epsilon_{\A_{S}}(\D_S,M-1)\\
      & =  e  \cdot \epsilon^*(\D_S,M-1).\\
 \end{aligned}
\]
 
    We now rely on the fact that the one-inclusion-graph learner achieves the optimal transductive error on any sample $S$ (see Definition~\ref{def:oig} and Remark~\ref{remark:oig_fractional}), and therefore $\epsilon_{\A_{\oig}}^{\trans}(S) \leq \epsilon^{\trans}_{\A'}(S) \leq  e  \cdot \epsilon^*(\D_S,M-1)$. Moreover, observe that by a simple leave-one-out argument we have  
    \[
    \begin{aligned}
    \expects{S \sim \D^M}{\epsilon_{\A_{\oig}}^{\trans}(S)}  & = \expects{S \sim \D^M}{\max_{h \in \H}\epsilon_{\A_{\oig}}^{\trans}(S_h)} \geq  \max_{h \in \H}\expects{S \sim \D^M}{\epsilon_{\A_{\oig}}^{\trans}(S_h)} = \epsilon_{\A_{\oig}}(\D,M-1).
    \end{aligned}\]
\hfill \qed

\section{Missing Proofs from Section~\ref{sec:dfree_lb}}\label{sec:proof_failure_any_learner}

\subsection{Proof of Lemma~\ref{lemma:set_system}}\label{pf:set_system}
    We use a probabilistic method argument. Let $\U$
 be a universe of size $|\U| = \lceil n^{1+\beta}\rceil$ . Pick $k:= \lceil\exp(\frac{1}{4}n^{1-\beta/2})\rceil$ many subsets uniformly at random from all the subsets of $\U$ of size $n$. We calculate the expected size of the intersection of any such pair by writing
 \[
 \expect{|S_i \cap S_j|} = \sum_{u\in \U}\expect{1\{ u \in S_i \wedge u \in S_j\}} = \sum_{u\in \U}\left(\frac{n}{|\U|}\right)^2 = \frac{n^2}{|\U|} \in \left(\frac{5}{6}n^{1-\beta},n^{1-\beta}\right),
 \]
 where the last step follows from the fact that $n^{1+\beta} \leq |\U| \leq \frac{6}{5}n^{1+\beta}$ for $n\geq 7$.
 Since the random variables $1\{ u \in S_i \wedge u \in S_j\}$ are negatively associated we can now apply a Chernoff and union bound to conclude that
 \[
 \begin{aligned}
     \prob{\forall 1\leq i < j \leq k,\, |S_i \cap S_j| \leq n^{1-\beta/2}} 
     &\geq 1- \sum_{1\leq i < j \leq k}\prob{|S_i \cap S_j| > n^{1-\beta/2}}\\
     &\geq 1- k^2\prob{|S_i \cap S_j| > n^{1-\beta}\cdot (1 + n^{\beta/2}-1)}\\
     &\geq 1- k^2\prob{|S_i \cap S_j| > \expect{|S_i \cap S_j|}\cdot (1 + n^{\beta/2}-1)}\\
     &\geq 1- k^2 \exp\left(-\frac{(n^{\beta/2}-1)^2}{2+n^{\beta/2}-1}\cdot \frac{5}{6}\cdot n^{1-\beta}\right)\\
     &\geq 1- k^2 \exp\left(-\frac{5}{6}\cdot \frac{64}{90}\frac{(n^{\beta/2})^2}{n^{\beta/2}}n^{1-\beta}\right)\\
     &\geq 1- k^2 \exp\left(-\frac{16}{27}\cdot n^{1-\beta/2}\right)\\
\end{aligned}
 \]
 where we used the fact that for large $n$, we have both $n^{\beta/2}-1 \geq \frac{8}{9}n^{\beta/2}$ and $1 + n^{\beta/2}\leq \frac{10}{9}n^{\beta/2}$.
Now since we have $k^2 \leq 2\exp(\frac{1}{2}n^{1-\beta/2})$, we get 
 \begin{equation}\label{eq:subset}
 \begin{aligned}
     \prob{\forall i,j \in [k],\, |S_i \cap S_j| \leq n^{1-\beta/2}} \geq 1- k^2 \exp\left(-n^{1-\beta/2}\right) \geq 1- 2\exp\left(-\frac{1}{12}n^{1-\beta/2}\right).\\
\end{aligned}
 \end{equation}

We then analyze the requirement needed for property~\ref{property:container_size}. Take any subset $T \subset \U$ of size $|T| \leq 2n^{1-\beta}$. For any $i \in [k]$ we have that 
\[
\prob{ T \subset S_i } = \frac{{|\U|-|T| \choose |S_i| - |T|}}{{|\U| \choose |S_i|}} = \frac{|S_i| (|S_i| -1)\ldots (|S_i| - |T| +1)}{|\U| (|\U| -1)\dots (|\U| - |T| +1)}\geq \left(\frac{|S_i| - |T|}{|\U| - |T|}\right)^{|T|}.
\]
Therefore, noting that $|T| \leq 2n^{1-\beta}$, we can write for large $n$ that
\[
\prob{ T \subset S_i } \geq \left(\frac{n - 2n^{1-\beta}}{n^{1+\beta}}\right)^{2n^{1-\beta}} = \left(\frac{n^{\beta} - 2}{n^{2\beta} }\right)^{2n^{1-\beta}} \geq \left(\frac{1}{2n^{\beta}}\right)^{2n^{1-\beta}} = \exp\left(-2n^{1-\beta}\ln(2n^{\beta})\right).
\]
We can then conclude by linearity of expectation that
\[
\begin{aligned}
   \expect{ \sum_{i\in[k]} \id [ T \subset S_i ]} &\geq k\exp\left(-2n^{1-\beta}\ln(2n^{\beta})\right)\\
    &\geq \exp\left(\frac{1}{4}n^{1-\beta/2}-2n^{1-\beta}\ln(2n^{\beta})\right)\\
    & \geq \exp\left(n^{1-\beta/2}\left(\frac{1}{4}-2n^{-\frac{\beta}{2}}\ln(2n^{\beta})\right)\right) \\
    &\geq \exp\left(\frac{1}{8}n^{1-\beta/2}\right),
\end{aligned}
\]
where the last inequality follows from the fact that for large $n$, we have that $2n^{-\frac{\beta}{2}}\ln(2n^{\beta}) < 1/8$.
Note that from a Chernoff bound we have that
\[
\prob{\sum_{i\in[k]} \id[ T \subset S_i ] < \frac{1}{2}\exp\left(\frac{1}{8}n^{1-\beta/2}\right) } \leq \exp\left(-\frac{1}{8}\exp\left(\frac{1}{8}n^{1-\beta/2}\right)\right)
\]
We apply a union bound over all subsets of size at most $2n^{1-\beta}$ to conclude that
\[
\begin{aligned}
   &\prob{\forall T \subset \U: \sum_{i\in[k]} \id\{ T \subset S_i \} \geq \frac{1}{2}\exp\left(\frac{1}{8}n^{1-\beta/2}\right) }\\
   &= 1 - \prob{\exists T \subset \U: \sum_{i\in[k]} \id\{ T \subset S_i \} < \frac{1}{2}\exp\left(\frac{1}{8}n^{1-\beta/2}\right) }\\
   &\geq 1 - \sum_{T\subset \U: |T| \leq 2n^{1-\beta}}{\prob{\sum_{i\in[k]} \id\{ T \subset S_i \} < \frac{1}{2}\exp\left(\frac{1}{8}n^{1-\beta/2}\right) }}\\
   & \geq 1 - 2n^{1-\beta}{2n^{1+\beta} \choose 2n^{1-\beta}}{\prob{ \sum_{i\in[k]} \id\{ T \subset S_i \} < \frac{1}{2}\exp\left(\frac{1}{8}n^{1-\beta/2}\right)}}\\
 & \geq 1 - 2n^{1-\beta}\left(en^{2\beta}\right)^{2n^{1-\beta}}\exp\left(-\frac{1}{8}\exp\left(\frac{1}{8}n^{1-\beta/2}\right)\right),
\end{aligned}
\]
where we used the fact that $|\U|\leq 2n^{1+\beta}$ in the second inequality.
We can continue writing
\begin{equation}\label{eq:containment}
\begin{aligned}
 &1 - 2n^{1-\beta}\left(en^{2\beta}\right)^{2n^{1-\beta}}\exp\left(-\frac{1}{8}\exp\left(\frac{1}{8}n^{1-\beta/2}\right)\right) \\
 &=  1 - \exp\left(\ln(2)+ (1-\beta)\ln(n) + (2+4\beta\ln(n)) n^{1-\beta}\right)\exp\left(-\frac{1}{8}\exp\left(\frac{1}{8}n^{1-\beta/2}\right)\right)\\
 & \geq 1 - \exp\left( 2(2+4\beta\ln(n)) n^{1-\beta} -\frac{1}{8}\exp\left(\frac{1}{8}n^{1-\beta/2}\right)\right)\\
 & \geq 1- \exp\left(-\frac{1}{16}\exp\left(\frac{1}{8}n^{1-\beta/2}\right)\right)\\
 & \geq 1- \exp\left(-\frac{1}{2}n^{1-\beta/2}\right),
\end{aligned}
\end{equation}
where the first and second inequalities hold since for sufficiently large $n$ we have $\ln(2)+ (1-\beta)\ln(n) \leq (2+4\beta\ln(n)) n^{1-\beta}$, and $ 2(2+4\beta\ln(n)) n^{1-\beta} \leq \frac{1}{16}\exp\left(\frac{1}{8}n^{1-\beta/2}\right)$.

Taking a union bound over Equation~\eqref{eq:subset} and \eqref{eq:containment} we can conclude that with probability at least $1-3\exp\left(-\frac{1}{12}n^{1-\beta/2}\right)$ the intersection of every pair is at most $n^{1-\beta/2}$ and also every subset of universe of size at most $2n^{1-\beta}$ is contained in at least $\frac{1}{2}\exp\left(\frac{1}{8}n^{1-\beta/2}\right)$ many other sets. This proves the existence of a set system with the claimed properties.\hfill\qed

\subsection{Proof of Lemma~\ref{lemma:labeling_set_system}}\label{pf:labeling_set_system}

    We will use a probabilistic method to show such a hypothesis class exist. In particular, for every $S\in \S(n)$, pick a random hypothesis $h_S$ that is constant $1$ outside $S$ and the labeling on $S$ is chosen uniformly at random from all the $2^{|S|}$ possible labelings. Clearly this satisfies property~\ref{property:constant_outside}. Recall that $|S| = n$ and $|\U(n)| = \lceil n^{1+\beta}\rceil$. We show that a randomly chosen labeling of $h_S$ for all $S\in \S(n)$ satisfies the claimed properties with non-zero probability.

    To prove property~\ref{property:balanced_container} we first show that with high probability over the random labeling, all the hypotheses $h_S,S\in \S(n)$ are unique, that is, for any distinct pair of $S,S'\in \S(n)$ we have $h_S\neq h_S'$. Since the hypothesis $h_S$ is constant $1$ on any $S'\setminus S$, it is sufficient to prove that for all $S\in \S(n)$, we have $|\{x\in S: h_S(x)=0\}|\geq n/3$ because $ S\cap S'\leq n^{1-\beta/2}$. We know for any $S$ that $\expect{|\{x\in S: h_S(x)=0\}|}=n/2$. Taking a Chernoff and union bound we get that
    \begin{equation}\label{eq:all_h_unique}
    \begin{aligned}
      \prob{\forall S\in \S(n), |\{x\in S: h_S(x)=0\}|\geq \frac{n}{3}}&\geq 1- \sum_{S\in \S(n)}\prob{ |\{x\in S: h_S(x)=0\}|< \frac{n}{3}}\\
      &\geq 1-|\S(n)| \exp\left( -\frac{n}{36}\right)\\
      &\geq 1- \exp\left(\frac{1}{2}n^{1-\beta/2} - \frac{n}{36}\right)\\
      &\geq 1- \exp\left(-\frac{1}{2}n^{1-\beta/2}\right),
    \end{aligned}
  \end{equation}
   where we used the fact that for large $n$, we have $n/36\geq n^{1-\beta/2}$. 
  
    We turn to proving property~\ref{property:balanced_container}. For any $T \subset \U(n) $ with $|T| \leq 2n^{1-\beta}$, the set is contained in $\S_T$ many sets and the labeling of each set is picked uniformly at random. For any fixed labeling $b\in \{0,1\}^{|T|}$, let $Z_{T,b}$ denote the random variable $|\{ S \in \S_T, h_S|T = b\}|$. We know from property~\ref{property:container_size} in Lemma~\ref{lemma:set_system} that $\S_T \geq \lfloor \frac{1}{2}\exp\left(\frac{1}{8}n^{1-\beta/2}\right)\rfloor \geq \frac{1}{4}\exp\left(\frac{1}{8}n^{1-\beta/2}\right)$ and, therefore, we have $\expect{Z_{T,b}} = \frac{|\S_T|}{2^{|T|}} \geq 2^{-2n^{1-\beta}}\frac{1}{4}\exp\left(\frac{1}{8}n^{1-\beta/2}\right)$ . We apply a Chernoff bound to conclude that
    \[
    \begin{aligned}
    \prob{\left|Z_{T,b} - \frac{|\S_T|}{2^{|T|}}\right| \geq n^{-\beta}\frac{|\S_T|}{2^{|T|}}} & \leq 2\exp\left(-\frac{|\S_T|}{3\cdot 2^{|T|}}n^{-2\beta}\right) \\
    &\leq 2\exp\left(-\frac{n^{-2\beta}}{24}\exp\left(\frac{1}{8}n^{1-\beta/2} - 2\ln(2)n^{1-\beta}\right)\right)\\
     & = 2\exp\left(-\frac{1}{24}\exp\left(\frac{1}{8}n^{1-\beta/2} -2\beta\ln(n) - 2\ln(2)n^{1-\beta}\right)\right)\\
      &\leq 2\exp\left(-\frac{1}{24}\exp\left(\frac{1}{16}n^{1-\beta/2}\right)\right),
    \end{aligned}
    \]
    where the last inequality is due to the fact that $2\beta\ln(n) + 2\ln(2)n^{1-\beta} \leq \frac{1}{16}n^{1-\beta/2} $ for sufficiently large $n$.
    
    Taking a union bound over all labelings $b$ and all sets $T$, we get that
  \begin{equation}\label{eq:set_system_hypo_zero_one}
    \begin{aligned}
        &\prob{\text{$\exists T,b: \left|Z_{T,b} - \frac{|\S_T|}{2^{|T|}}\right| \geq n^{-\beta}\frac{|\S_T|}{2^{|T|}}$}} \leq \sum_{\substack{T\subset \U(n), b\in\{0,1\}^{|T|}: \\|T|\leq 2n^{1-\beta}}} \prob{\left|Z_{T,b} - \frac{|\S_T|}{2^{|T|}}\right| \geq n^{-\beta}\frac{|\S_T|}{2^{|T|}}}\\
        & \leq 4 n^{1-\beta}{2n^{1+\beta} \choose 2n^{1-\beta}} 2^{2n^{1-\beta}}\exp\left(-\frac{1}{24}\exp\left(\frac{1}{16}n^{1-\beta/2}\right)\right)\\ 
        & \leq 4 n^{1-\beta}\left(en^{2\beta}\right)^{2n^{1-\beta}} 2^{2n^{1-\beta}}\exp\left(-\frac{1}{24}\exp\left(\frac{1}{16}n^{1-\beta/2}\right)\right)\\ 
        & \leq \exp\left(\ln(4)+ (1-\beta)\ln(n) + 2(1+\ln(2)+2\beta\ln(n)) n^{1-\beta}\right)\exp\left(-\frac{1}{24}\exp\left(\frac{1}{16}n^{1-\beta/2}\right)\right)\\ 
         & \leq \exp\left(-\frac{1}{48}\exp\left(\frac{1}{16}n^{1-\beta/2}\right)\right),
    \end{aligned}
    \end{equation}
    where in the last inequality we used the fact that for large sufficiently large $n$, we have 
    \[
    \ln(4)+ (1-\beta)\ln(n) + 2(1+\ln(2)+2\beta\ln(n)) n^{1-\beta} \leq \frac{1}{48}\exp\left(\frac{1}{16}n^{1-\beta/2}\right)
    \]
    Taking another union bound with Equation~\eqref{eq:all_h_unique}, we conclude that with probability at least $1-\exp\left(-\frac{1}{48}\exp\left(\frac{1}{16}n^{1-\beta/2}\right)\right)-\exp\left(-\frac{1}{2}n^{1-\beta/2}\right)$, all $h_S$ are unique and, therefore, $Z_{T,b}=|\{h_S \in \H(n): S \in \S_T, h_S|T = b\}|$ for all $T,b$ and property~\ref{property:balanced_container} is satisfied for all subsets of size at most $2n^{1-\beta}$. Observe that for the large $n$ regime we are considering, property~\ref{property:balanced_container} also implies that every labeling of $T$ induces  many hypotheses and that $T$ is shattered. This implies the existence of the claimed labeling of the set-system and finishes the proof. \hfill\qed
\subsection{Proof of Lemma~\ref{lemma:cert_error_rate_no_learner_smart}}\label{pf:cert_error_rate_no_learner_smart}

We will prove that the certifier is both sound and upper bounded by the certifiable error rate. Formally, we prove that
\[\epsilon_{\A_{S}}(\D,m)\leq \expects{T \sim \D^m}{\C_{S}(T)}\leq \epsilon_{n}(\D,m).\]
The claim is obvious when $m < m(n)$ where we always have $\epsilon_{\A_S} (\D,m) \leq  \expects{T \sim \D^m}{\C_{S}(T)}= \epsilon_{n}(\D,m) = 1$.

Let $h^*\in\HH$ be the labeling function. We will use the following throughout the proof to bound the error of the learner $\A_S$. For large $n$ and $m \geq m(n)$, using a Chernoff and union bound  we have 
\begin{equation}\label{eq:h1_h0_concentration}
\probs{T_2 \sim \D_{h^*}^{m/2}}{\exists h \in \{h_S,\A_{\maj}(T_1)\},\, \left|L(h,T_2) - L(h,\D_{h^{*}})\right| > \xi(n)} \leq 4\exp\left(-m\xi(n)^2\right) \leq \xi(n),
\end{equation}
where the last inequality is due to the fact that for large $m$, we have  $m \geq \frac{\ln(4/\xi(n))}{\xi(n)^2}$.

We now consider three cases based on the underlying distribution. We will use the following instantiation of Lemma~\ref{lemma:uniformity_tester} which can be proven similar to Lemma~\ref{lemma:uniformity_tester_Xn} since Equation~\eqref{eq:m_for_test} still holds in this setting. We will assume that $n$ is larger than the constant in the following lemma for the rest of the proof.

\begin{lemma}\label{lemma:uniformity_test_S}
    There exists an absolute constant $C_1 = C_1(\beta)\in\NN$ such that for any $n\geq C_1(\beta)$, we have for any $S\in \S(n)$ and $m\geq m(n)$,
     \begin{itemize}
        \item If $\D = \D_S$, 
        $\tunif_{S}(\xi(n),\xi(n),T)$ returns $1$ with probability at least $1-\xi(n)$ over $T \sim \D^m$
        \item If $\dtv(\D,\D_S) > \xi(n)$,  
        $\tunif_{S}(\xi(n),\xi(n),T)$ returns $0$ with probability at least $1-\xi(n)$ over $T \sim \D^m$
    \end{itemize}
\end{lemma}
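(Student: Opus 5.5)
The plan is to derive Lemma~\ref{lemma:uniformity_test_S} directly from Lemma~\ref{lemma:uniformity_tester}, exactly as Lemma~\ref{lemma:uniformity_tester_Xn} was derived. We instantiate Lemma~\ref{lemma:uniformity_tester} with the countable universe $\U$ as the ambient set $\X$, with $Y=S$, and with $\xi=\delta=\xi(n)$. Here $\xi(n)=n^{-\beta/2}$ and $|S|=n$ by property~\ref{property:set_size} of Lemma~\ref{lemma:set_system}. Lemma~\ref{lemma:uniformity_tester} is stated for an arbitrary distribution $\D$ on $\X$ and makes no support assumption. This is precisely the feature the wrapper $\tunif$ provides, so distributions placing mass outside $S$ (for instance on other rows $\U(n')$ or on $\U(n)\setminus S$) are covered without further work.

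Both bullet points then follow verbatim, provided every $m\ge m(n)$ meets the sample-size requirement of Lemma~\ref{lemma:uniformity_tester}. That requirement reads $m\ge m_{\mtest}(n,\xi(n)/2,\xi(n))$. So the one real step is to check that, for all $n$ beyond some constant $C_1(\beta)$,
\[
\lfloor n^{\frac12+3\beta}\rfloor \;\ge\; \frac{18\cdot 64\sqrt{n}\,\ln\!\left(2n^{\beta/2}\right)}{\left(n^{-\beta/2}/2\right)^{2}} \;=\; 4\cdot 18\cdot 64\, n^{\frac12+\beta}\ln\!\left(2n^{\beta/2}\right).
\]
The right-hand side is $O(n^{1/2+\beta}\log n)$ while the left-hand side is $\Omega(n^{1/2+3\beta})$. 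Their ratio is therefore $\Omega(n^{2\beta}/\log n)\to\infty$, so a suitable $C_1(\beta)$ exists. This mirrors Equation~\eqref{eq:m_for_test}. There $\xi(n)\approx n^{-\beta}$ was smaller, giving an even stronger margin; here the margin is $n^{2\beta}$ up to logarithms instead of $n^{\beta}$, but it is still polynomial.

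I do not expect a genuine obstacle. The only points needing care are bookkeeping. First, the factor-$2$ loss in $\xi$ introduced by $\tunif$ must be accounted for, which is why the requirement uses $\xi(n)/2$. Second, the tail argument in the proof of Lemma~\ref{lemma:uniformity_tester}, covering the case where much mass lies outside $Y$, implicitly needs $m\ge 2\ln(1/\delta)/\xi$. Here that is $O(n^{\beta/2}\log n)$, which is trivially dominated by $m(n)$ for large $n$. Finally, $C_1$ depends only on $\beta$ and not on the particular $S\in\S(n)$, since all sets have the same size $n$.
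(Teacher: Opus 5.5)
Your proposal is correct and follows the paper's route: the paper proves this lemma by instantiating Lemma~\ref{lemma:uniformity_tester} with $Y=S$ and $\xi=\delta=\xi(n)=n^{-\beta/2}$, citing the fact that the sample-size condition of Equation~\eqref{eq:m_for_test} still holds for $m\ge m(n)$. Your check is slightly more careful than the paper's, since you keep the factor $4$ from $m_{\mtest}(n,\xi/2,\delta)$ that Equation~\eqref{eq:m_for_test} drops. Your side remark has the comparison backwards, though it does not affect the argument: the smaller $\xi(n)\approx n^{-\beta}$ in Lemma~\ref{lemma:uniformity_tester_Xn} gives the \emph{weaker} margin $n^{\beta}$, and the margin $n^{2\beta}$ here is the stronger one.
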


Let $r := |\{ (x,y) \in T_1: y=1\}|$ be the number of instances in $T_1$ with label $1$. We now discuss each case for the distribution separately.
\begin{enumerate}

\item $\D = \D_S$ for some $S\in\S(n)$.
From Lemma~\ref{lemma:uniformity_test_S} we know with probability at least $1-\xi(n)$ over $T \sim \D^{m}$, $\tunif_{S}(\xi(n),\xi(n),T)$ accepts the underlying distribution as $\D_S$ and thus $O = 1$. Therefore, we have $\expects{T \sim \D^{m}}{\C_S(T)} \leq \xi(n) + 6\xi(n) = 7\xi(n)$. 

We now bound $\epsilon_{\A_S}(\D,m)$ for $m \geq m(n)$. Two possibilities can happen, either (1) $h^* = h_S$ or (2) $h^* = h_{S'}$ for some $S'\in \S, S' \neq S$. In the first possibility, $h_S$ will have zero error under $\D_S$ and obviously $L(h_S,T_2) = L(h_S,\D_{h^*})  = 0$.

In the second possibility where $h^{*} = h_{S'}$ for some $S' \neq S$ we know from property~\ref{property:constant_outside} in Lemma~\ref{lemma:labeling_set_system} that $h_{S'}$ is $1$ on $S$ except on the intersection of $S$ and $S'$ which has size at most $n^{1-\beta/2}$ based on property~\ref{property:intersection_size} in Lemma~\ref{lemma:set_system}. Therefore, $\expects{T_1}{r} \geq \frac{|S| - |S\cap S'|}{|S|}\cdot |T_1| \geq (1-n^{-\beta/2})|T_1|$. Therefore, from a Chernoff bound, for sufficiently large $n$, we get that
\[
\begin{aligned}
    \probs{T}{r \leq \frac{1}{2}\cdot |T_1|} &= \probs{T}{ r \leq  \frac{1}{2(1-n^{-\beta/2})}\cdot (1-n^{-\beta/2})|T_1|} \\
    & \leq  \probs{T}{ r \leq  (1 -\frac{1}{3})\cdot (1-n^{-\beta/2})|T_1|} && \left( \frac{1}{2(1-n^{-\beta/2})} \leq \frac{2}{3}\right)\\
    &\leq \exp\left(-\frac{1}{18}(1-n^{-\beta/2})|T_1|\right)\leq \exp\left(-\frac{m}{18\cdot 2 \cdot 4}\right) && \left( (1-n^{-\beta/2}) \geq \frac{1}{2}\right),
\end{aligned}
\]
where we used the fact that $|T_1| \geq \lfloor \frac{|T|}{2}\rfloor \geq \frac{m}{4}$. It is easy to verify in the large $n$ and $m\geq m(n)$ regime we have $\exp\left(-\frac{m}{144}\right) \leq n^{-\beta/2}\leq \xi(n)$. Therefore, with probability at least $1-\xi(n)$ we get that $r > \frac{1}{2}\cdot |T_1|$ and $\A_{\maj}(T_1)(x) = 1$ for all $x\in \U(n)$ and thus $L(\A_{\maj}(T_1),\D_{h^{*}}) \leq \frac{|S\cap S'|}{|S|}\leq n^{-\beta/2}\leq\xi(n)$.

Let $\hat{h} \in \arg\min_{h \in \{h_S,\A_{\maj}(T_1)\}}L(h,\D_{h^*})$. We proved in above that  $L(\hat{h},\D_{h^{*}})\leq \xi(n)$ with probability at least $1-\xi(n)$.  Noting that $\A_S(T) \in \arg\min_{h \in \{h_S,\A_{\maj}(T_1)\}}L(h,T_2)$ and taking a union bound of the above failure with Equation~\eqref{eq:h1_h0_concentration}, we get that with probability at least $1-2\xi(n)$ over $T \sim \D^m$
\[
L(\A_S(T),\D_{h^{*}}) \leq L(\A_S(T),T_2) + \xi(n) \leq L(\hat{h},T_2) +\xi(n) \leq L(\hat{h},\D_{h^{*}}) + 2\xi(n)\leq 3\xi(n).
\]
Therefore, we get that 
\[
\epsilon_{\A_S}(\D_S,m) = \sup_{h^{*} \in \H} \expects{T \sim \D_{h^*}^m}{L(\A_S(T),\D_{h^{*}})} \leq 5\xi(n) \leq  \expects{T \sim \D^m}{\C_{S}(T)} \leq 7\xi(n) = \epsilon_n(\D_S,m).
\]

\item $\dtv(\D,\D_S) \leq \xi(n)$.
In this case, we have no guarantees on the success of the uniformity tester. Nevertheless, we show that the error of the certifier can bound the error of the learner. We consider the two possibilities, namely, (1) $h^{*} = h_S$ and (2) $h^{*} = h_{S'}$ for $S'\in \S, S' \neq S$.

In the first possibility, where $h^*=h_S$, we know $h_S$ will have zero error under $\D_{h^*}$ and obviously $L(h_S,T_2) = L(h_S,\D_{h^*})  = 0$.

Denote $p_1:=\D[\{u \in \U(n): h^{*}(u) = 1\}]$ and note that because $\dtv(\D,\D_S) \leq \xi(n)$ we have 
\[ 
|\D_S[\{u \in \U(n): h^{*}(u) = 1\}] - p_1| \leq \xi(n).
\]
In the case where $h^{*} = h_{S'}$ for some $S' \neq S$, we know $\D_S[\{u \in \U(n): h^{*}(u) = 1\}] \geq  1-n^{-\beta/2}$ and therefore $\expects{T_1}{r} = p_1 \cdot |T_1| \geq (1-2n^{-\beta/2})|T_1|$. A Chernoff bound, for sufficiently large $n$, concludes that
\[
\begin{aligned}
    \probs{T}{r \leq \frac{1}{2}\cdot |T_1|} & =  \probs{T}{ r \leq  \frac{1}{2(1-2n^{-\beta/2})}\cdot (1-2n^{-\beta/2})|T_1|} \\
    & \leq  \probs{T}{ r \leq  (1 -\frac{1}{3})\cdot (1-2n^{-\beta/2})|T_1|} && \left( \frac{1}{2(1-2n^{-\beta/2})} \leq \frac{2}{3}\right)\\
    &\leq \exp\left(-\frac{1}{18}(1-2n^{-\beta/2})|T_1|\right) \\
    &\leq \exp\left(-\frac{m}{144}\right)  \leq \xi(n) && \left( (1-2n^{-\beta/2}) \geq \frac{1}{2}\right).
\end{aligned}
\]
This implies that with probability at least $1-\xi(n)$, we have $r > |T_1|/2$ and $\A_{\maj}(T_1)(x) =1$ for all $x$. Therefore, $L(\A_{\maj}(T_1),\D_{h^{*}}) \leq 1 - \D[S \setminus S']\leq 2\xi(n)$, where the first inequality is due to majority being correct on $S \setminus S'$ and the second inequality is due to $\D[S \setminus S'] \geq \D_S[S \setminus S'] -\xi(n) \geq 1-2\xi(n)$. Again, we proved that with probability at least $1-\xi(n)$ we have $L(\hat{h},\D_{h^{*}})\leq 2\xi(n)$ for  $\hat{h} \in \arg\min_{h \in \{h_S,\A_{\maj}(T_1)\}}L(h,\D_{h^*})$. Similar to the previous case, we can take a union bound over the above and the failure of Equation~\eqref{eq:h1_h0_concentration} to conclude that with probability at least $1-2\xi(n)$ over $T \sim \D^m$ we have
$L(\A_S(T),\D_{h^{*}}) \leq L(\hat{h},\D_{h^{*}}) + 2\xi(n) \leq 4\xi(n)$. This proves that
\[
\epsilon_{\A_S}(\D,m) = \sup_{h^{*} \in \H} \expects{T \sim \D_{h^*}^m}{L(\A_S(T),\D_{h^{*}})} \leq 6\xi(n) \leq  \expects{T \sim \D^m}{\C_{S}(T)} \leq \epsilon_n(\D,m) =1.
\]

\item $\dtv(\D,\D_S) > \xi(n)$.
In this case we know that with probability at least $1-\xi(n)$ over $T\sim \D^m$ the test $\tunif_S(\xi(n),\xi(n),T)$ rejects $\D$ and outputs $0$. Therefore, $\expects{T \sim \D^m}{\C_S(T)} \geq 1-\xi(n)$.

Similar to the reasoning in the Case 3 of the proof of Lemma~\ref{lemma:certifiable_majority_error}, we can conclude that for any distribution $\D$, the error of the majority learner is always upper bounded by $3/4$ for large $n$. We get from Equation~\eqref{eq:h1_h0_concentration} that with probability at least $1-\xi(n)$ over $T \sim \D^m$,
\[
L(\A_S(T),\D_{h^{*}}) \leq L(\A_S(T),T_2) + \xi(n) \leq L(\A_{\maj}(T_1),T_2) +\xi(n) \leq L(\A_{\maj}(T_1),\D_{h^{*}}) + 2\xi(n).
\]
This concludes that
\[
\epsilon_{\A_S}(\D,m) = \sup_{h^{*} \in \H} \expects{T \sim \D_{h^*}^m}{L(\A_S(T),\D_{h^{*}})} \leq \frac{3}{4} + 3\xi(n)\leq \frac{7}{8} \leq \expects{T \sim \D^m}{\C_{S}(T)} \leq \epsilon_n(\D,m),
\]
where we used the fact that $\xi(n)\leq 1/24$ for large $n$. 

\end{enumerate}

Overall, we proved that $\C_S$ is a sound certifier for $\A_S$ and together the collections $\A_S$ and $\C_S$ for $S\in\S(n)$ witness the certifiable error rate $\epsilon_n(.,.)$.\hfill \qed

\newcommand{\eplaceholder}{\textcolor{red}{n^{-2\beta}}}
\subsection{Proof of Lemma~\ref{lemma:any-learner-average-error}}\label{pf:any-learner-average-error}
{\bf Notations.} For any function $h: \U \rightarrow \{0,1\}$, any $(x,y)\in \U \times \{0,1\}$, define $L(h,(x,y)): = \id [h(x) \neq y]$. Moreover, for any unlabeled set $W \in \U^*$ and labeling function $h^{*}$, define $L(h,W, {h^{*}}): = \frac{1}{|W|} \sum_{x\in W} L(h,(x,h^*(x)))$. For distribution $\D$ over $\U$, and labeling function $h^*$, define $L(h,\D, {h^{*}}): = \expects{x\sim \D}{L(h,(x,h^*(x)))}$. For any multiset $W\in \U^*$, we denote by $\supp(W)\subseteq \U$ the set of all distinct elements in $W$. For any multiset $W\in \U^*$ and test point $x$, let $W_x := W \cup \{x\}$. For any multiset $T\in (\U(n)\times\{0,1\})^*$, we denote by $\dom(T)\in\U(n)^*$ the unlabeled part of $T$. Finally, recall that for any multiset $W \in \U^*$, we denote $\S_{\supp(W)} = \{S \in \S(n): \supp(W) \subseteq S\}$. 
\begin{proof}
 
    Let $\DD_{n}$ be the uniform distribution over $\{\tilde{\D}_S: S \in \S(n)\}$ where $\tilde{\D}_S:= (\D_S,h_S)$ defines a learning instance with $\D_S$, the uniform distribution on $S$, being the marginal distribution and $h_S$ being the labeling function. We will show that a randomly picked distribution from $\DD_{n}$ is expected to incur high error on $\A$. Formally, we will prove that 
    \begin{equation}\label{eq:error_expected}
    \expected{\tilde{\D}_S\sim \DD_{n}}{\expects{T \sim \tilde{\D}_S^m}{L(\A(T),\D_S,h_S)}} = \expected{\tilde{\D}_S \sim \DD_{n}}{\expects{T \sim \tilde{\D}_S^m}{L(\A(T),S,h_S)}} \geq \frac{1}{2}-2n^{-\beta},
    \end{equation}
    this will be enough to show the existence of the claimed set $S^*_{n,m}$. 
    
    For any multiset $T$, denote $\overline{T}:= \dom(T)$. Observe that drawing $T \sim \tilde{\D}_S^m$ is equivalent to drawing the unlabeled multiset $\overline{T} \sim \D_S^m$ and then labeling it with $h_S$. Moreover, for any $S \in \S(n)$ and any $W\in S^m$ we have $\D_S^m[W] = |S|^{-m} = n^{-m}$. Therefore, we can write that 
\begin{equation}\label{eq:error_uniform_over_uniforms}
\begin{aligned}
   \expected{\tilde{\D}_S\sim \DD_{n}}{\expects{T \sim \tilde{\D}_S^m}{L(\A(T),\D_S,h_S)}} &= \expected{\tilde{\D}_S \sim \DD_{n}}{\expects{T \sim \tilde{\D}_S^m}{\frac{1}{n}\sum_{x\in S}L(\A(T),(x,h_S(x)))}} \\
   & = \frac{1}{|\S(n)|}\sum_{S \in \S(n)}\left[\frac{1}{n^m}\sum_{\overline{T} \in S^m}\left[\frac{1}{n}\sum_{\substack{x\in S}}L(\A(T),(x,h_S(x)))\right]\right].
\end{aligned}
    \end{equation}

    Fix any $T \in (\U(n)\times \{0,1\})^m$ and $x\in \U(n)$ such that $x \notin \overline{T} $. 
    We do not need to consider any $T$ with $\overline{T}  \not\subseteq \U(n) $ since they have zero probability and do not contribute to Equation~\eqref{eq:error_uniform_over_uniforms}. 
    
    Since $|\overline{T}| =m \leq n^{1-\beta}$, we know from property~\ref{property:balanced_container} of $\H(n)$ in Lemma~\ref{lemma:labeling_set_system} that for any labeling $b \in \{0,1\}^{\supp(\overline{T}_x)}$,
     \[(1-n^{-\beta}) \frac{|\S_{\supp(\overline{T}_x)}|}{2^{|\supp(\overline{T}_x)|}}  \leq |\{h_S \in \H(n): S \in \S_{\supp(\overline{T}_x)}, h_{S}|\supp(\overline{T}_x) = b\}| \leq (1+n^{-\beta}) \frac{|\S_{\supp(\overline{T}_x)}|}{2^{|\supp(\overline{T}_x)|}}.
     \]
    Moreover, it is obvious that for any $S,S'$, we have $h_{S}|\supp(\overline{T}_x)= h_{S'}{|\supp(\overline{T}_x)}$ if and only if $h_{S}{|\overline{T}_x} = h_{S'}{|\overline{T}_x}$ as $\supp(\overline{T})$ is the set of unique elements in $\overline{T}$. Therefore, for any labeling $\ell(x)$ of $x$, there exists a unique labeling $b\in\{0,1\}^{\supp(\overline{T}_x)}$ such that $h_{S}{|\overline{T}_x} = (T, (x,\ell(x)))$ if and only if  $h_{S}{|\supp(\overline{T}_x)} = b$. This combined with the above equation implies that for any $T$ and $x\notin T$,
   the fraction of sets $S$ that contain $\supp(\overline{T}_x)$, are consistent with the labeling of $T$, and label $x$ with $1$ is close to the fraction that label $x$ as $0$. Formally, we have
    \[
    \min_{y\in\{0,1\}} \frac{\left|\{h_S \in \H(n): S \in \S_{\supp(\overline{T}_x)}, h_{S}{|\overline{T}} = T, h_S(x) = y\}\right|}{\left|\{h_S \in \H(n): S \in \S_{\supp(\overline{T}_x)}, h_{S}{|\overline{T}} = T \}\right|} \geq \frac{1}{2}\cdot\frac{1-n^{-\beta}}{1+n^{-\beta}}.
    \]
    Define by $\S[(T,x)] = \{S \in \S(n): S \in \S_{\supp(\overline{T}_x)},  h_{S}{|\overline{T}} = T\}$ the collection of sets that contain $\supp(\overline{T}_x)$ and their labeling function is consistent with the labels of $T$. The above implies that for any learner $\A$ and for any set $T$ and $x\notin T$ we have 
    \begin{equation}\label{eq:miny}
    \frac{1}{|\S[(T,x)]|}\sum_{S \in \S[(T,x)]}L(\A(T), (x,h_S(x))) \geq  \frac{1}{2}\cdot \frac{1-n^{-\beta}}{1+n^{-\beta}}.
    \end{equation}
    Note that for any $S\in \S(n)$, the multiset $T$ and $x \notin T$ have non-zero probability under $\D_S$ as training and test samples if and only if $\supp(\overline{T}_x) \subseteq S$ (i.e., $S \in \S_{\supp(\overline{T}_x)}$), and $h_{S}{|\overline{T}} = T$. Therefore, for any $T$ and $x\notin T$, the collection $\S[(T,x)]$ are exactly all the sets $S \in \S(n)$ for which $T$ and $x$ have non-zero probability as training and test samples from $\D_S$. In other words,
    each set $T$ and test point $x\notin T$ appear as a summand in Equation~\eqref{eq:error_uniform_over_uniforms} exactly $|\S[(T,x)]|$ many times. 
   Therefore, we can continue writing 
\begin{equation}\label{eq:error_partition}
\begin{aligned}
& \frac{1}{|\S(n)|}\sum_{S \in \S(n)}\left[\frac{1}{n^m}\sum_{\overline{T} \in S^m}\left[\frac{1}{n}\sum_{\substack{x\in S}}L(\A(T),(x,h_S(x)))\right]\right]\\
\\
&  = \frac{1}{|\S(n)|}\cdot \frac{1}{n^{m+1}} \sum_{\substack{T \in (\U(n) \times \{0,1\})^m \\ x \in \U(n)}}\sum_{S \in \S[(T,x)]}L(\A(T),(x,h_S(x))).
\end{aligned}
\end{equation}
Now note that from Equation~\eqref{eq:miny} we get that
\begin{equation}\label{eq:error_on_half}
\begin{aligned}
    &\sum_{\substack{T \in (\U(n) \times \{0,1\})^m \\ x \in \U(n)}}\sum_{S \in \S[(T,x)]}L(\A(T),(x,h_S(x))) \\
    &  =  \sum_{\substack{T \in (\U(n) \times \{0,1\})^m \\ x \in \U(n),\,x\notin \overline{T}}}\sum_{S \in \S[(T,x)]}L(\A(T),(x,h_S(x))) + \sum_{\substack{T \in (\U(n) \times \{0,1\})^m \\ x \in \U(n),\,x\in \overline{T}}}\sum_{S \in \S[(T,x)]}L(\A(T),(x,h_S(x)))\\
    &  \geq \sum_{\substack{T \in (\U(n) \times \{0,1\})^m \\ x \in \U(n),\,x\notin \overline{T}}}\frac{1}{2}\cdot \frac{1-n^{-\beta}}{1+n^{-\beta}}\cdot |\S[(T,x)]|.
    \end{aligned}
\end{equation}
Moreover, for any fixed $S \in \S(n)$, and any $T$ with $\supp(\overline{T})\subset S$ there are at most $|T|$ many $x\in \U(n)$ such that $x \in S$ and $x\in \overline{T}$. In other words, there are at least $n-|T|\geq n-m$ many $x$ such that $x\in S$ but $x\notin \overline{T}$. Therefore, 
\[
\forall S\in \S(n),\, \sum_{\substack{T \in (\U(n) \times \{0,1\})^m \\ x \in \U(n),\,x\notin \overline{T}}}\id [S\in \S[(T,x)]] \geq \frac{n-m}{n}\sum_{\substack{T \in (\U(n) \times \{0,1\})^m \\ x \in \U(n)}}\id [S\in \S[(T,x)]]
\]
Observe that this further means
\begin{equation}\label{eq:size_good_bad}
   \sum_{\substack{T \in (\U(n) \times \{0,1\})^m \\ x \in \U(n),\,x\notin \overline{T}}} |\S[(T,x)]| \geq \frac{n-m}{n}\sum_{\substack{T \in (\U(n) \times \{0,1\})^m \\ x \in \U(n)}}|\S[(T,x)]|\\ 
\end{equation}

Taking Equations~\eqref{eq:error_on_half} and \eqref{eq:size_good_bad} into account, we can continue Equation~\eqref{eq:error_partition} to write
\[
\begin{aligned}
    & \frac{1}{|\S(n)|}\cdot \frac{1}{n^{m+1}} \sum_{\substack{T \in (\U(n) \times \{0,1\})^m \\ x \in \U(n)}}\sum_{S \in \S[(T,x)]}L(\A(T),(x,h_S(x)))\\
    &\geq \frac{1}{|\S(n)|}\cdot \frac{1}{n^{m+1}}  \cdot \frac{1}{2}\cdot \frac{1-n^{-\beta}}{1+n^{-\beta}}\cdot\sum_{\substack{T \in (\U(n) \times \{0,1\})^m \\ x \in \U(n),\,x\notin \overline{T}}}|\S[(T,x)]| \\
     &\geq \frac{1}{|\S(n)|}\cdot \frac{1}{n^{m+1}}  \cdot \frac{1}{2}\cdot \frac{1-n^{-\beta}}{1+n^{-\beta}}\cdot\frac{n-m}{n}\cdot \sum_{\substack{T \in (\U(n) \times \{0,1\})^m \\ x \in \U(n)}}|\S[(T,x)]| 
\end{aligned}
\]
Combining the above with the fact that $\sum_{T,x}|\S[(T,x)]| = \sum_{S \in \S(n)}\sum_{T,x}\id [S \in \S[(T,x)]] = |\S(n)|\cdot n^{m+1}$, we get 
\[
\begin{aligned}
    \frac{1}{|\S(n)|}\cdot \frac{1}{n^{m+1}}  \cdot \frac{1}{2}\cdot \frac{1-n^{-\beta}}{1+n^{-\beta}}\cdot\frac{n-m}{n}\cdot \sum_{\substack{T \in (\U(n) \times \{0,1\})^m \\ x \in \U(n)}}|\S[(T,x)]| 
      & \geq \frac{1}{2} \cdot \left(\frac{n-n^{1-\beta}}{n}\right)\cdot \frac{1-n^{-\beta}}{1+n^{-\beta}}\\
       & = \frac{1}{2} \cdot \left(1-n^{-\beta}\right) \cdot \frac{1-n^{-\beta}}{1+n^{-\beta}}\\
       & \geq \frac{1}{2} - 2n^{-\beta}.
\end{aligned}
\]
where we used the fact that $m \leq n^{1-\beta}$. This proves that for any fixed deterministic learner and sample size $m\leq n^{1-\beta}$, the expectation over $\tilde{\D}_{S} \sim \DD_{n}$ of the error of $\A$ on $\tilde{\D}_{S}$ when trained on samples of size $m$ is more than $1/2 - 2n^{-\beta}$. This is enough to show that for any (randomized) learner $\A$ and sample size $m\leq n^{1-\beta}$, there exists a set $S^*_{n,m}\in \S(n)$ with distribution $\D_{S^*_{n,m}}$ such that $\A$ has error at least $1/2-2n^{-\beta}$ on $\D_{S^*_{n,m}}$ given samples of size $m$.
\end{proof}

\end{document}